\crefname{algorithm}{Algorithm}{Algorithms}
\crefname{assumption}{Assumption}{Assumptions}
\crefname{equation}{}{}
\crefname{figure}{Fig.}{Figs.}
\crefname{table}{Table}{Tables}
\crefname{section}{Section}{Sections}
\crefname{subsection}{Section}{Sections}
\crefname{theorem}{Theorem}{Theorems}
\crefname{lemma}{Lemma}{Lemmmas}
\crefname{proposition}{Proposition}{Propositions}
\crefname{definition}{Definition}{Definitions}
\crefname{corollary}{Corollary}{Corollaries}
\crefname{remark}{Remark}{Remarks}
\crefname{example}{Example}{Examples}
\crefname{appendix}{Appendix}{Appendices}
\definecolor{b2}{RGB}{51,153,255}
\definecolor{mygreen}{RGB}{80,180,0}
\theoremstyle{plain}
\newtheorem{theorem}{Theorem}[section]
\newtheorem{lemma}[theorem]{Lemma}
\newtheorem{assumption}[theorem]{Assumption}
\theoremstyle{definition}
\newtheorem{definition}[theorem]{Definition}
\theoremstyle{remark}
\newtheorem{remark}[theorem]{Remark}
\newcommand{\wh}{\widehat}
\newcommand{\eps}{\varepsilon}
\renewcommand{\epsilon}{\varepsilon}
\renewcommand{\phi}{\varphi}
\newcommand{\R}{\mathbb{R}}
\newcommand{\F}{\mathcal{F}}
\newcommand{\calZ}{\mathcal{Z}}
\newcommand{\calD}{\mathcal{D}}
\newcommand{\calN}{\mathcal{N}}
\newcommand{\calS}{\mathcal{S}}
\newcommand{\calO}{\mathcal{O}}
\renewcommand{\hat}{\wh}
\DeclareMathAlphabet{\mathpzc}{OT1}{pzc}{m}{it}
\newcommand{\nSG}{\mathrm{nSG}}
\newcommand{\ind}{\mathbf{1}}
\newcommand{\Lap}{\mathrm{Lap}}
\newcommand{\expec}{\mathbb{E}}
\newcommand{\E}{\mathbb{E}}
\newcommand{\hf}{\widehat{F}}
\crefname{algorithm}{Algorithm}{Algorithms}
\crefname{assumption}{Assumption}{Assumptions}
\crefname{equation}{}{}
\crefname{figure}{Fig.}{Figs.}
\crefname{table}{Table}{Tables}
\crefname{section}{Section}{Sections}
\crefname{subsection}{Section}{Sections}
\crefname{theorem}{Theorem}{Theorems}
\crefname{lemma}{Lemma}{Lemmmas}
\crefname{proposition}{Proposition}{Propositions}
\crefname{definition}{Definition}{Definitions}
\crefname{corollary}{Corollary}{Corollaries}
\crefname{remark}{Remark}{Remarks}
\crefname{example}{Example}{Examples}
\crefname{appendix}{Appendix}{Appendices}
\newcommand{\ZZ}{\mathcal{Z}}
\newcommand{\XX}{\mathcal{X}}
\newcommand{\DD}{\mathcal{D}}
\newcommand{\Alg}{\mathcal{A}}
\newcommand{\pr}{\mathbb{P}}
\newcommand{\AboTh}{\text{AboveThreshold}}
\title{
Faster Algorithms for User-Level Private Stochastic Convex Optimization\footnote{Authors are listed in reverse alphabetical order.}
}
\author{Andrew Lowy\thanks{University of Wisconsin-Madison. \texttt{alowy@wisc.edu}} \and Daogao Liu\thanks{University of Washington. \texttt{liudaogao@gmail.com}
} \and Hilal Asi\thanks{Apple Machine Learning Research. \texttt{hilal.asi94@gmail.com}
}
}
\begin{document}

\maketitle

\begin{abstract}
We study private stochastic convex optimization (SCO) under user-level differential privacy (DP) constraints. In this setting, there are $n$ users (e.g., cell phones), each possessing $m$ data items (e.g., text messages), and we need to protect the privacy of each user's entire collection of data items. Existing algorithms for user-level DP SCO are impractical in many large-scale machine learning scenarios because: (i) they make restrictive assumptions on the smoothness parameter of the loss function and require the number of users to grow polynomially with the dimension of the parameter space; or (ii) they are prohibitively slow, requiring at least $(mn)^{3/2}$ gradient computations for smooth losses and $(mn)^3$ computations for non-smooth losses. To address these limitations, we provide novel user-level DP algorithms with state-of-the-art excess risk and runtime guarantees, without stringent assumptions. First, we develop a \textit{linear-time} algorithm with state-of-the-art excess risk (for a non-trivial linear-time algorithm) under a mild smoothness assumption. Our second algorithm applies to arbitrary smooth losses and achieves \textit{optimal excess risk} in $\approx (mn)^{9/8}$ gradient computations. Third, for \textit{non-smooth} loss functions, we obtain \textit{optimal excess risk} in $n^{11/8} m^{5/4}$ gradient computations.
Moreover, our algorithms do not require the number of users to grow polynomially with the dimension. 
\end{abstract}

\section{Introduction}

The increasing ubiquity of machine learning (ML) systems in industry and society has sparked serious concerns about the privacy of the personal data used to train these systems. Much work has shown that ML models may violate individuals' privacy by leaking their sensitive training data~\cite{shokri2017membership,li2024analyzing,lowy2024does}. For instance, large language models (LLMs) are vulnerable to black-box attacks that extract individual training examples~\cite{carlini2021extracting}. \textit{Differential privacy}~(DP)~\cite{dwork2006calibrating} prevents ML models from leaking their training data.

\vspace{.2cm}
The classical definition of differential privacy---\textit{item-level differential privacy}~\cite{dwork2006calibrating}---is ill-suited 
for many modern applications. Item-level DP ensures that the inclusion or exclusion of any \textit{one training example} has a negligible impact on the model's outputs. \textit{If each person (a.k.a. \textit{user}) contributes only one piece of training data}, then item-level DP provides a strong guarantee that each user's data cannot be leaked. However, in many modern ML applications, such as training LLMs on users' data in federated learning, each user contributes a large number of training examples~\cite{Xu2024}. In such scenarios, the privacy protection that item-level DP provides for each user is insufficiently weak. 

\vspace{.2cm}
\textit{User-level differential privacy} is a stronger privacy notion that addresses the above shortcoming of item-level DP. Informally, user-level DP ensures that the inclusion or exclusion of any \textit{one user's entire training data} ($m$ samples) has a negligible impact on the model's outputs. Thus, user-level DP provides a strong guarantee that no user's data can be leaked, even when users contribute many training examples. 

\vspace{.2cm}
A fundamental problem in (private) machine learning is \textit{stochastic convex optimization} (SCO): given a data set $\DD = (Z_1, \ldots, Z_n)$ from $n$ i.i.d. users, each possessing $m$ i.i.d. samples from an unknown distribution $Z_i \sim P^m$ our goal is to approximately minimize the expected population loss \[
F(x) := \expec_{z \sim P}[f(x, z)].
\] 
Here, $f: \XX \times \ZZ \to \R$ is a loss function (e.g., cross-entropy loss), $\XX \subset \R^d$ is the parameter domain, and $\ZZ$ is the data universe. We require that the output of the optimization algorithm $\Alg: \ZZ^{mn} \to \XX$ satisfies user-level DP (Definition~\ref{def: DP}). We measure the accuracy of $\Alg$ by its \textit{excess (population) risk} 
\[
\expec F(\Alg(\DD)) - F^* := \expec_{\Alg, \DD \sim P^{nm}} F(\Alg(\DD)) - \min_{x \in \XX} F(x).
\]

Given the practical importance of user-level DP SCO, it is unsurprising that many prior works have studied this problem. The work of~\cite{levy2021learning} initiated this line of work, and provided an excess risk lower bound of $\Omega(1/\sqrt{nm} + \sqrt{d}/(\eps n \sqrt{m}))$, where $\eps$ is the privacy parameter. However, their upper bound was suboptimal and required strong assumptions. The work of~\cite{bassily2023user} gave an algorithm that achieves optimal risk for $\beta$-smooth losses with 
$\beta < (n/\sqrt{md} \wedge n^{3/2}/(d\sqrt{m}))$,
provided that $n \geq \sqrt{d}/\eps$ and $m \leq \max(\sqrt{d}, n \eps^2/\sqrt{d})$. \textit{These assumptions are restrictive} in large-scale applications with a large number of examples per user $m$ or when the number of model parameters $d$ is large. For example, in deep learning, we often have $d \gg n$ and an enormous smoothness parameter $\beta \gg 1$. 
Moreover, their algorithm requires 
$mn^{3/2}$
gradient evaluations, making it slow when the number of users $n$ is large.\footnote{In the introduction, whenever $\eps$ does not appear, we are assuming $\eps=1$ to ease readability. For runtime bounds, we also assume $n=d$ to further simplify.} The work of~\cite{ghazi2023user} gave another user-level DP algorithm that only requires $n \geq \log(d)/\eps$, but unfortunately their algorithm does not run in polynomial-time. 

\vspace{.2cm}
To address the deficiencies of previous works on user-level DP SCO, the recent work~\cite{asi2023user} provided an algorithm that achieves optimal excess risk in polynomial-time, while also only requiring $n \geq \log(md)/\eps$ users. Moreover, their algorithm also works for non-smooth losses. The drawback of~\cite{asi2023user} is that it is even \textit{slower} than the algorithm of~\cite{bassily2023user}: for $\beta$-smooth losses, their algorithm requires $\beta \cdot (nm)^{3/2}$ gradient evaluations; for non-smooth losses, their algorithm requires $(nm)^3$ evaluations. 

\vspace{.2cm}
Evidently, \textit{the runtime requirements and parameter restrictions of existing algorithms for user-level DP SCO are prohibitive in many important ML applications}. Thus, an important question is: 

\begin{center}
\noindent\fbox{
    \parbox{0.5\linewidth}{
    \vspace{-0.cm}
\textbf{Question 1.} Can we develop \textit{faster} user-level DP algorithms that achieve \textit{optimal} excess risk \textit{without restrictive assumptions}?  
}}
    \end{center}

\paragraph{Contribution 1.} We give a positive answer to Question 1, providing a novel algorithm that achieves optimal excess risk using 
$\max\{\beta^{1/4}(nm)^{9/8}, \beta^{1/2} n^{1/4}m^{5/4}\}$
gradient computations for $\beta$-smooth loss functions, with any $\beta < \infty$ (\cref{thm: utility of accelerated phased ERM}). For non-smooth loss functions, our algorithm achieves optimal excess risk using $n^{11/8} m^{5/4}$ gradient evaluations for non-smooth loss functions (\cref{thm: nonsmooth}). 
\textit{Our runtime bounds dominate those of all prior works} in every applicable parameter regime, by polynomial factors in $n, m$, and $d$. 
Moreover, our results only require $n^{1 - o(1)} \ge \log(d)/\eps$ users. %
See Table~\ref{table: summary} for a comparison of our results vs. prior works. For example, \textit{for non-smooth loss functions, our optimal algorithm is faster than the previous state-of-the-art~\cite{asi2023user} by a multiplicative factor of $n^{13/8} m^{7/4}$.} For smooth loss functions, our optimal algorithm is faster than~\cite{asi2023user} by a factor of $(nm)^{3/8} \beta^{3/4}$ (in the typical parameter regime when $n^7 \ge m$).

\begin{figure*}
\centering
        \includegraphics[width = 0.9\textwidth]{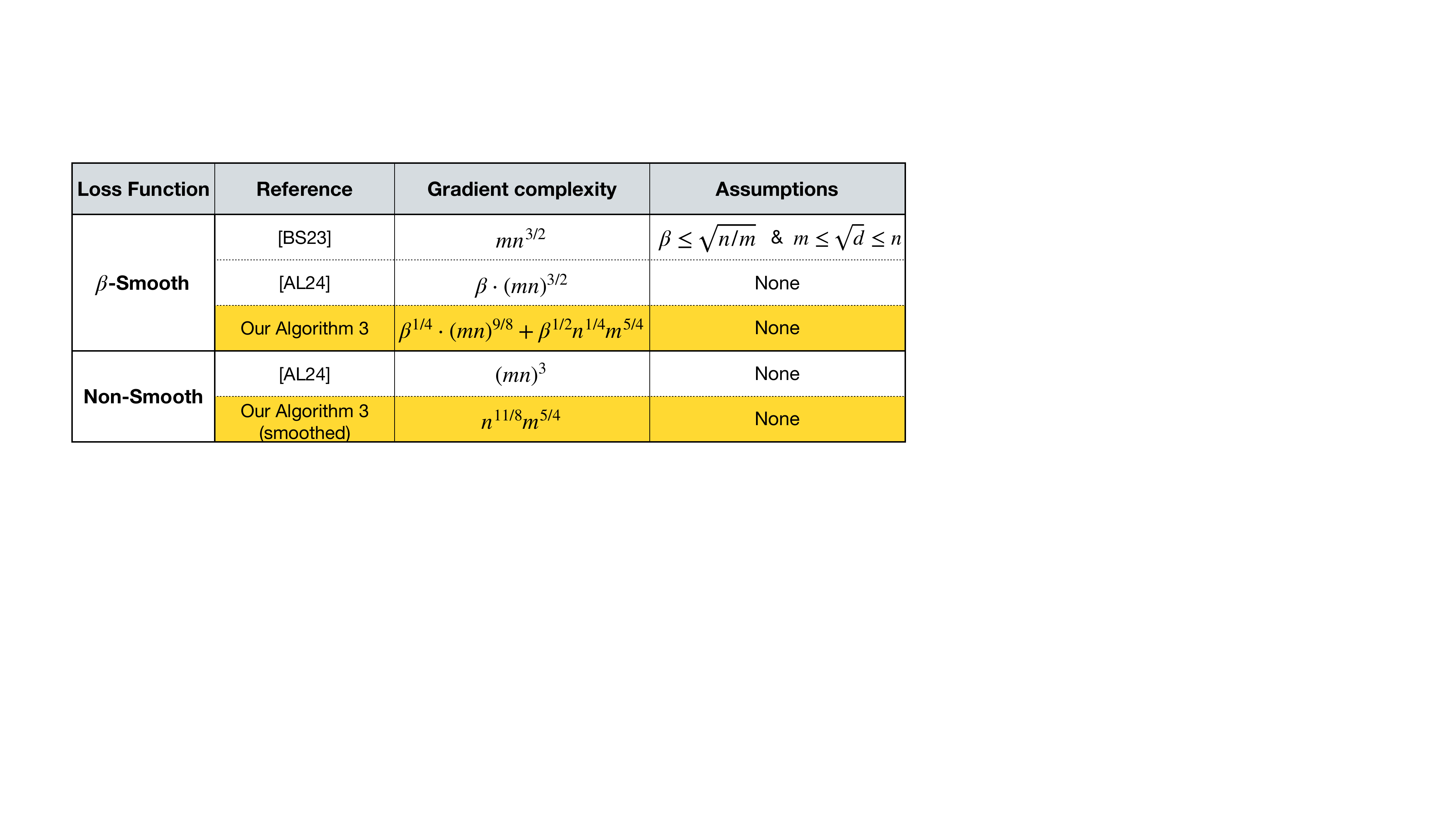}
      \vspace{-0.1in}
     \caption{\footnotesize 
Optimal algorithms for user-level DP SCO. We omit logarithms, fix $L = R = 1 = \eps$ and $n = d$. 
}\label{table: summary}
\vspace{-.1in}
\end{figure*}

\paragraph{Linear-Time Algorithms}
The ``holy grail'' of DP SCO is a \textit{linear-time} algorithm with optimal excess risk, which is unimprovable both in terms of runtime and accuracy. In the \textit{item-level} DP setting, such algorithms are known to exist for smooth loss functions~\cite{fkt20,zhang2022bring}. \cite{asi2023user} posed an interesting open question: is there a \textit{user-level} DP algorithm that achieves optimal excess risk in linear time for smooth functions? For our second contribution, we make progress towards answering this question. 

\vspace{.2cm}
Existing techniques for user-level DP SCO are not well-suited for linear-time algorithms. Indeed, the only prior non-trivial linear-time algorithm
is the user-level LDP algorithm of~\cite[Algorithm 5]{bassily2023user}.\footnote{It is trivial to achieve excess risk $\approx 1/\sqrt{nm} + \sqrt{d}/(\eps n)$ with $(\eps, \delta)$-user-level, e.g. by applying \textit{group privacy} to an optimal item-level DP algorithm such as~\cite{fkt20}. The error due to privacy in this bound does not decrease with $m$.} Their algorithm can achieve excess risk $\approx 1/\sqrt{nm\eps} + \sqrt{d}/(\sqrt{nm} \eps)$.
Unfortunately, however, their algorithm 
requires a very stringent assumption on the smoothness parameter $\beta < \sqrt{n^3/(md^3)}$, which is unlikely to hold for large-scale ML problems. Further, the result of~\cite{bassily2023user} requires the number of users queried in each round to grow polynomially with the dimension $d$, and it assumes $m < d < n$. \textit{These assumptions severely limit the applicability of~\cite[Algorithm 5]{bassily2023user} in practical ML scenarios}. This leads us to: 

\begin{center}
\noindent\fbox{
    \parbox{0.5\linewidth}{
    \vspace{-0.cm}
\textbf{Question 2.} Can we develop a \textit{linear-time} user-level DP algorithm 
with state-of-the-art excess risk,
\textit{without restrictive assumptions}?   %
}}
    \end{center}

\paragraph{Contribution 2.} We answer Question 2 affirmatively in~\cref{thm: phased erm without reg}: under a very mild requirement on the smoothness parameter $\beta < \sqrt{nmd}$, our novel linear-time algorithm achieves excess risk of $\approx 1/\sqrt{nm\eps} + \sqrt{d}/(\sqrt{nm} \eps)$. Moreover, our algorithm does not require the number of users to grow polynomially in the dimension $d$, and our result holds for any values of $m, d,$ and $n$. Thus, our algorithm has excess risk matching that of~\cite{bassily2023user}, but is much more widely applicable. 

\subsection{Techniques}
We develop novel techniques and algorithms to achieve new state-of-the-art results in user-level DP SCO. Before discussing our techniques, let us review the key ideas from prior works that we build on. 

\vspace{.2cm}
The goal of prior works~\cite{bassily2023user, asi2023user} was to develop user-level analogs of DP-SGD~\cite{bft19}, which is optimal in the item-level setting. To do so, they
observed that each user $i$'s gradient $\frac{1}{m}\sum_{j=1}^m \nabla f(x, Z_{i,j})$ lies in a ball of radius $\approx 1/\sqrt{m}$ around the population gradient $\nabla F(x)$ with high probability, if the data is i.i.d ($Z_i \sim P^m$). Consequently, if the data is i.i.d., then replacing one user $Z_i \in \DD$ by another user $Z'_i \in \DD'$ will not change the empirical gradient $\nabla F_{\DD}(x)$ by too much: $\| \nabla F_{\DD}(x) - \nabla F_{\DD'}(x)\| \lesssim 1/(n\sqrt{m})$ with high probability. Thus, one would hope for a method to privatize $\nabla F_{\DD}(x)$ by adding noise that scales with $1/(n\sqrt{m})$---rather than $1/n$---which would allow for optimal excess risk. 
\cite{asi2023user} devised such a method, which was inspired by FriendlyCore~\cite{tsfadia2022friendlycore}. Their method privately detects and removes ``outlier'' user gradients, and then adds noise to the average of the ``inlier'' user gradients. This outlier-removal procedure ensures privacy with noise scaling with $1/(n\sqrt{m})$, provided $n \gtrsim 1/\eps$. Moreover, when the data is i.i.d., no outliers will be removed with high probability, leading to a nearly unbiased estimator of the empirical gradient. 

\vspace{.2cm}
Our algorithms apply variations of the outlier-removal idea of~\cite{asi2023user} in novel ways. 

\vspace{.2cm}
Our linear-time~Algorithm~\ref{alg: phased ERM without regularization} takes a different approach to outlier removal, compared to prior works. Instead of removing outlier \textit{gradients}, we aim to detect and remove outlier SGD \textit{iterates}.\footnote{The reason that this innovation is necessary is discussed in the last paragraph of Section~\ref{sec: linear in m}.} The high-level idea of our algorithm is to partition the $n$ users into $C \approx 1/\eps$ groups, with each group containing $\approx n \eps$ users. For each group of users, we run $T \approx m n \eps$ steps of online SGD using the samples in this group and obtain the average iterate of each group: $\{\tilde{x}_j\}_{j=1}^C$. We then \textit{privately identify and remove the outlier iterates} from $\{\tilde{x}_j\}_{j=1}^C$. In order to successfully do so, we need to argue that if we run online SGD independently on user $Z$ and user $Z'$ to obtain $\tilde{x}$ and $\tilde{x}'$ respectively, then $\|\tilde{x} - \tilde{x}'\| \lesssim \eta \sqrt{T}$ with high probability, where $\eta$ is the SGD step size. We prove such a stability bound in Lemma~\ref{lemma 2}, which we hope will be of independent interest. By repeating the above process $\log(n)$ times and using iterative \textit{localization}~\cite{fkt20}, we obtain our state-of-the-art linear-time result.

\vspace{.2cm}
Our second algorithm, Algorithm~\ref{alg: accelerated phased ERM}, builds on~\cite{asi2023user} in a different way. In~Algorithm~\ref{alg: accelerated phased ERM}, we apply an outlier-removal procedure to users' gradients. However, unlike \cite{asi2023user}, we draw random \textit{minibatches} of users in each iteration and apply outlier-removal to these minibatches. To make this procedure private while also achieving optimal excess risk, we 
combine \textit{AboveThreshold}~\cite{dwork2014} with
\textit{privacy amplification by subsampling}~\cite{balle18}. We then develop an \textit{accelerated}~\cite{ghadimilan1} user-level DP algorithm that solves a carefully chosen sequence of regularized ERM problems, and applies localization in the spirit of~\cite{kll21, AsiFeKoTa21}. An obstacle that arises when we try to extend the ERM-based localization framework to the user-level DP setting is getting a tight bound on the variance of our minibatch stochastic gradient estimator that scales with $1/m$. We overcome this obstacle in Lemma~\ref{lem: variance bound}, by appealing to the \textit{stability of user-level DP}~\cite{bassily2023user}. To handle non-smooth loss functions, we apply randomized smoothing to our accelerated algorithm.

\subsection{Preliminaries}
We consider loss functions $f: \XX \times \ZZ \to \mathbb{R}$, where $\XX$ is a convex parameter domain and
$\ZZ$ is a data universe. Let $P$ be an unknown data distribution and $F(x) := \expec_{z \sim P}[f(x,z)]$ be the population loss function. Denote $F^* := \min_{x \in \XX} F(x)$. The SCO problem is $\min_{x \in \XX} F(x)$. 
Let $\| \cdot \|$ denote the $\ell_2$ norm. 
$\Pi_{\XX}(u) := \text{argmin}_{x \in \XX}\|u - x\|^2$ denotes projection onto $\XX$. 
\paragraph{Assumptions and Notation.}
Function $g: \XX \to \mathbb{R}$ is \textit{$L$-Lipschitz} if $|g(x) - g(x')| \leq L\|x - x'\|_2$ for all $x, x' \in \XX$. 
Function $g: \XX \to \mathbb{R}$ is \textit{$\beta$-smooth} if $g$ is differentiable and has $\beta$-Lipschitz gradient: $\|\nabla g(x) - \nabla g(x')\|_2 \leq \beta\|x - x'\|_2$.
Function $g: \XX \to \mathbb{R}$ is \textit{$\mu$-strongly convex} if $g(\alpha x + (1- \alpha) x') \leq \alpha g(x) + (1 - \alpha) g(x') - \frac{\alpha (1-\alpha) \mu}{2}\|x - x'\|^2$ for all $\alpha \in [0,1]$ and all $x, x' \in \XX$. If $\mu = 0,$ we say $g$ is \textit{convex}. 

\begin{assumption}
\label{ass: convex}
\begin{enumerate}
    \item The convex set $\XX$ is compact with $\|x - x'\|\leq R$ for all $x,x' \in \XX$. 
    \item The loss function $f(\cdot,z)$ is $L$-Lipschitz and convex for all $z \in \ZZ$. 
\end{enumerate}
\end{assumption}

In all of the paper \textit{except for \cref{sec: nonsmooth}}, we will also assume: 
\begin{assumption}
\label{ass: smooth}
The loss function $f(\cdot,z)$ is $\beta$-smooth for all $z \in \ZZ$. 
\end{assumption}

Denote $a \wedge b := \min(a,b)$. For functions $f$ and $g$ of input parameters $\theta$, we write $f \lesssim g$ if there is an absolute constant $C > 0$ such that $f(\theta) \leq C g(\theta)$ for all permissible values of $\theta$. We use $\widetilde{O}$ to hide logarithmic factors. Write $a \leq \text{poly}(b)$ if there exists some large $J > 1$ for which $a \leq b^J$. 

\paragraph{Differential Privacy.}
\begin{definition}[User-Level Differential Privacy]
\label{def: DP}
Let $\varepsilon \geq 0, ~\delta \in [0, 1).$ Randomized algorithm $\Alg: \ZZ^{nm} \to \mathcal{X}$ is \textit{$(\varepsilon, \delta)$-user-level differentially private} (DP) if for any two datasets $\DD = (Z_1, \ldots, Z_n)$ and $\DD' = (Z'_1, \ldots, Z'_n)$ that differ in one user's data (say $Z_i \neq Z_i'$ but $Z_j = Z'_j$ for $j \neq i$), we have \[
\mathbb{P}(\Alg(\DD) \in S) \leq e^\varepsilon \mathbb{P}(\Alg(\DD') \in S) + \delta,
\]
for all measurable subsets $S \subset \XX$. 
\end{definition}
Definition~\ref{def: DP} prevents any adversary from learning much more about an individual's data set than if that data had not been used for training. Appendix~\ref{app: prelim} contains the necessary background on DP. 

\subsection{Roadmap}
We begin with our state-of-the-art linear-time algorithm in~\cref{sec: linear in m}. In~\cref{sec: accelerated phased ERM}, we present our error-optimal algorithm with state-of-the-art runtime for smooth loss functions. \cref{sec: nonsmooth} extends our fast optimal algorithm to non-smooth loss functions. We conclude in~\cref{sec: conclusion} with a discussion and guidance on future research directions stemming from our work.

\section{A state-of-the-art linear-time algorithm for user-level DP SCO}
\label{sec: linear in m}

In this section, we develop a new algorithm (Algorithm~\ref{alg: phased ERM without regularization}) for %
user-level DP SCO that runs in linear time and has state-of-the-art excess risk, without requiring any impractical assumptions. The algorithm can be seen as a user-level DP variation of the localized phased SGD of~\cite{fkt20}: we execute a sequence of SGD trajectories with geometrically decaying step sizes, shrinking both the expected distance to the population minimizer and the privacy noise over a logarithmic number of phases. 

\vspace{.2cm}
In each phase $i$, we first re-set algorithmic parameters and draw a disjoint set of $n_i$ users $D_i \subset \DD$ (lines 4-5). We further partition $D_i$ into $C$ disjoint subsets $\{D_{i,j}\}_{j=1}^C$. For each $j \in [C]$, we pool together all of the $n_i m$ samples in $D_{i,j}$ and run one-pass online SGD on $D_{i,j}$ with initial point $x_{i-1}$ given to us from the previous phase. Next, in lines 10-20, we privately detect and remove ``outliers'' from $\{\tilde{x}_{i,j}\}_{j=1}^C$. That is, our goal is to privately select a subset $\calS_i \subset \{\tilde{x}_{i,j}\}_{j=1}^C$, such that for any two points $\tilde{x}_{i,j}, \tilde{x}_{i,j'} \in \calS_i$, $\|\tilde{x}_{i,j} - \tilde{x}_{i,j'}\| \leq \tau_i = \widetilde{O}(\eta_i L \sqrt{T_i})$. This will enable us to add noise scaling with $\tau_i$ in line 22, rather than with the much larger worst-case sensitivity (that scales linearly with $T_i$). In order to privately select such a subset $\calS_i$, we first compute (and privatize) the \textit{concentration score} for $\{\tilde{x}_{i,j}\}_{j=1}^C$ in line 10. A small concentration score indicates that outlier removal is doomed to fail and we must halt the algorithm to avoid breaching the privacy constraint. A large concentration score indicates that $\{\tilde{x}_{i,j}\}_{j=1}^C$ is nearly $\tau_i$-concentrated and we may proceed with outlier removal in lines 12-15.

\begin{algorithm2e}
\caption{User-Level DP Phased SGD with Outlier Iterate Removal and Output Perturbation}
\label{alg: phased ERM without regularization}
{\bf Input:} Dataset $\calD = (Z_1,\dots,Z_n) $, privacy parameters $(\epsilon,\delta)$, 
parameters $p, q > 0$, stepsize $\eta$\;
Set $l = \lfloor \log_2(n) \rfloor $, $C := 100 \log(20nme^{\eps}/\delta)/\eps$\;
\For{$i=1,\cdots,l$}
{ 
Set $n_i= (1 - (1/2)^q) n /2^{iq}$, $\eta_i=\eta/2^{pi}$, $N_i = n_i/C$, $T_i = N_i m$, $\tau_i = 1000 \eta_i L \sqrt{T_i} \log(ndm)$\; 
Draw disjoint users $D_{i}$ of size $n_i$ from $\calD$\;
Divide $D_i$ into $C$ disjoint subsets $\{D_{i,j}\}_{j=1}^{C}$, each containing $|D_{i,j}| = N_i$ users\;
\For{$j=1,\cdots,C$}
{
$\Tilde{x}_{i,j} \gets SGD(D_{i,j},\eta_i, T_i,x_{i-1}) =$ average iterate of $T_i$ steps of one-pass projected SGD with data $D_{i,j}$, stepsize $\eta_i$, and initial point $x_{i-1}$ \; 
}

Compute the concentration score for $D_i$:
\begin{align*}
    s_{i}(\tau_i):=\frac{1}{C}\sum_{j, j'\in [C]}\mathbf{1}(\|\Tilde{x}_{i,j}-\Tilde{x}_{i,j'}\|\le\tau_i)
\end{align*}
Let $\hat{s}_{i}(\tau_i)=s_{i}(\tau_i)+\Lap(20/\eps)$\;
\If{$\hat{s}_{i}(\tau_i)\ge \frac{4C}{5} $}
{
$\calS_i = \emptyset$ \;
\For{$j=1,\cdots,C$}
{
Compute the score function of $\tilde{x}_{i,j}$: $h_{i,j} =\sum_{j' = 1}^{C}\ind(\|\Tilde{x}_{i,j} -\Tilde{x}_{i,j'}\|\le2\tau_i)$\;
Add $\Tilde{x}_{i,j}$ to $\mathcal{S}_{i}$ with probability $p_{i,j}$ for 
$
p_{i,j} =
\begin{cases}
    0 & h_{i,j} < C /2 \\
    1 & h_{i,j} \ge 2C/3\\
    \frac{h_{i,j} -C/2}{C/6} & o.w.
\end{cases}  
$
}

}
\Else
{ 
{\bf Halt; Output 0}
}
Let $\tilde{x}_i = \frac{1}{|\mathcal{S}_i|} \sum_{\tilde{x}_{i,j} \in \mathcal{S}_i} \tilde{x}_{i,j}$ \;
$x_i\leftarrow \tilde{x}_i+\zeta_i$, where $\zeta_i\sim\calN(0,\sigma_i^2I_d)$ with $\sigma_i = \frac{100 \tau_i \log^2(n/\delta)}{\epsilon C}$\label{ln:add_gaussian}\;
}
{\bf Output:} $x_l$.
\end{algorithm2e}

\begin{theorem}[Privacy and utility of Algorithm~\ref{alg: phased ERM without regularization} - Informal]
\label{thm: phased erm without reg}
Let $\eps \leq 10$, $n^{1 - o(1)} \gtrsim \frac{\log(n/\delta)}{\eps}$, $\beta \leq (L/R) \sqrt{d m n \eps}$, and $m \lesssim \text{poly}(n)$. Then, Algorithm~\ref{alg: phased ERM without regularization} is $(\eps, \delta)$-user-level DP. %
Further, \[
\expec F(x_l) - F^* \leq LR \cdot \widetilde{O}\left(\frac{1}{\sqrt{n m \eps}} + \frac{\sqrt{d \log(1/\delta)}}{\sqrt{n} \eps \sqrt{m}} \right).
\]
The gradient complexity of Algorithm~\ref{alg: phased ERM without regularization} is $\leq nm$. 
\end{theorem}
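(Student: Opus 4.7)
The plan is to prove privacy, utility, and gradient complexity separately. The key structural feature used in all three parts is that the per-phase subsamples $D_i \subset \calD$ (line 5) are disjoint: privacy then reduces via parallel composition to a single phase, and utility reduces to a one-phase analysis that is chained across the $l = \lfloor \log_2 n \rfloor$ phases via the localization scheme of~\cite{fkt20}.

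\textbf{Privacy.} Within phase $i$, replacing one user's data alters at most one group $D_{i,j_0}$ and hence exactly one averaged SGD iterate $\tilde x_{i,j_0}$. The heart of the analysis is the high-probability stability claim stated as Lemma~2: with probability at least $1 - 1/\mathrm{poly}(ndm)$ over the fresh user data and SGD's sampling, the perturbed $\tilde x_{i,j_0}$ moves by at most $\tau_i/2 = O(\eta_i L \sqrt{T_i}\,\log(ndm))$. I would prove this via a Freedman-type martingale concentration on the stochastic gradient noise $\nabla f(x_t, Z_t) - \nabla F(x_t)$, combined with the non-expansiveness of the projected gradient step under convexity; the hypothesis $\beta \le (L/R)\sqrt{dmn\eps}$ enters here to keep the $\beta$-smoothness-induced drift dominated by the martingale term. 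Given the stability bound, the score $s_i(\tau_i)$ has sensitivity $\le 2$ in each $\tilde x_{i,j}$ (only pairs involving $j_0$ can flip), so the Laplace noise of scale $20/\eps$ privatizes the threshold check at cost $\eps/10$, and if the check fails the output is a deterministic function of the noise only. Conditioned on passing, the soft outlier-removal mirrors \textsc{FriendlyCore}~\cite{tsfadia2022friendlycore,asi2023user}: on the high-probability event, the sets $\calS_i$ on neighboring datasets differ in at most one element of norm-distance $\le \tau_i/2$, so the empirical average $\tilde x_i$ changes by $O(\tau_i/C)$, and the Gaussian release with $\sigma_i = \widetilde O(\tau_i/(\eps C))$ is $(\eps,\delta)$-DP. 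Composing inside the phase and using parallel composition across phases gives $(\eps,\delta)$-DP overall; the mild requirement $n^{1-o(1)} \gtrsim \log(n/\delta)/\eps$ guarantees $C \le n_l$ in every phase so each group is well-defined.

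\textbf{Utility.} Standard one-pass projected SGD with $T_i = N_i m$ i.i.d.\ samples, step size $\eta_i$, and start $x_{i-1}$ yields both $\E[F(\tilde x_{i,j})] - F^* \lesssim R_i^2/(\eta_i T_i) + \eta_i L^2$ and the high-probability bound $\|\tilde x_{i,j} - x^*\| \lesssim R_i + \eta_i L \sqrt{T_i}\,\mathrm{polylog}$, where $R_i := \|x_{i-1} - x^*\|$. Applying Lemma~2 with a union bound over the $\binom{C}{2}$ pairs, all iterates lie within pairwise distance $\tau_i$ with probability $1 - 1/\mathrm{poly}(ndm)$; hence $s_i(\tau_i) = C$, the threshold passes, every $h_{i,j} = C$, and $\calS_i = \{\tilde x_{i,j}\}_{j=1}^C$ with probability $1 - 1/\mathrm{poly}(ndm)$. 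Thus $\tilde x_i$ is the simple average of $C$ i.i.d.\ SGD runs, inheriting the single-run excess risk bound, and the Gaussian perturbation adds at most $L\sigma_i\sqrt{d}$ (or $\tfrac{\beta}{2}\sigma_i^2 d$ via smoothness) to $\E F(x_i)$. Choosing $\eta = \Theta(R/(L\sqrt{nm}))$ and $p,q$ to balance the SGD and noise contributions across phases implements the localization recursion $R_{i+1}^2 \lesssim \eta_i T_i L^2 + \sigma_i^2 d$, which decays geometrically; summing phase errors over $l = \Theta(\log n)$ phases collapses to the leading term and gives $\E F(x_l) - F^* \lesssim LR \cdot \widetilde O\bigl(\tfrac{1}{\sqrt{nm\eps}} + \tfrac{\sqrt{d\log(1/\delta)}}{\sqrt n\, \eps\sqrt m}\bigr)$. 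The gradient complexity is immediate: phase $i$ uses $C \cdot T_i = n_i m$ gradient evaluations, and $\sum_{i=1}^l n_i \le n$ by construction of the geometric sequence $n_i = (1-(1/2)^q)n/2^{iq}$, yielding total cost $\le nm$.

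\textbf{Main obstacle.} The crux is the high-probability SGD stability of Lemma~2. Classical DP-SGD stability bounds are in expectation, but the outlier-removal mechanism requires a \emph{pointwise} guarantee so that $\calS_i$ behaves identically on neighboring datasets with overwhelming probability. Obtaining the correct $\sqrt{T_i}$ scaling, rather than the $T_i$ scaling one would get by chaining $L$-Lipschitz one-step deviations, is exactly what permits $\sigma_i \propto \eta_i L \sqrt{T_i}/(\eps C)$ and thus makes the noise match the SGD optimization error phase by phase. A secondary bookkeeping obstacle is calibrating the \textsc{FriendlyCore} constants so that the $\delta$ budget absorbs the stability-failure probability, the threshold-check failure probability, and the Laplace/Gaussian tails simultaneously, without introducing more than polylog slack in the final risk bound.
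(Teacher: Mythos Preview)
Your privacy argument has a genuine conceptual gap: you invoke Lemma~\ref{lemma 2} to bound how far the perturbed iterate $\tilde x_{i,j_0}$ moves on a neighboring dataset, but Lemma~\ref{lemma 2} is a \emph{distributional} statement about two runs on i.i.d.\ data $D,D'\sim P^N$. Differential privacy must hold for \emph{arbitrary} neighboring datasets, so you cannot take probability ``over the fresh user data.'' On a worst-case neighbor the perturbed $\tilde x_{i,j_0}$ can move by $\Theta(\eta_i L T_i)$ (the trivial Lipschitz bound), not $O(\eta_i L\sqrt{T_i})$, and your sensitivity chain collapses. Relatedly, the smoothness hypothesis $\beta\le (L/R)\sqrt{dmn\eps}$ does not enter the privacy proof at all; it is only needed so that $\eta_i\le 1/\beta$, which Lemma~\ref{lemma 2} requires for the \emph{utility} analysis.

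The paper's privacy proof avoids Lemma~\ref{lemma 2} entirely. The only worst-case facts used are (i) changing one user alters exactly one $\tilde x_{i,j_0}$, hence $s_i(\tau_i)$ has sensitivity $\le 2$ regardless of how far $\tilde x_{i,j_0}$ moved, and (ii) conditional on the Laplace noise being small and the threshold test passing, one has $s_i(\tau_i)\ge 2C/3$, i.e.\ most of the \emph{other} points $\{\tilde x_{i,j}\}_{j\neq j_0}$ are already $\tau_i$-concentrated. The FriendlyCore-style coupling (Lemma~\ref{lemma 3.5}, a slight modification of \cite[Lemma~3.5]{asi2023user}) then shows that the soft selection plus averaging has sensitivity $O(\tau_i\log(1/\zeta)/C)$ \emph{irrespective of where $\tilde x_{i,j_0}$ landed}; the threshold check is what buys you concentration, not any stability of SGD. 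Note also that the sets $\calS_i$ on neighboring inputs can differ in more than one element (every $h_{i,j}$ for $j\neq j_0$ may shift by $1$), so ``differ in at most one element'' is not the right picture; the coupling argument is genuinely needed. Your utility sketch is essentially on target (Lemma~\ref{lemma 2} plus localization), though the step size should be $\eta=\Theta\bigl(R/(L\sqrt{dmn\eps})\bigr)$ rather than $R/(L\sqrt{nm})$ to balance the terms correctly.
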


\begin{remark}[State-of-the-art excess risk in linear time, without the restrictive assumptions]
    Under the assumptions that $\beta < (L\eps^3/R) \sqrt{n^3/md^3}$ and $m \leq d/\eps^2 \leq n$, \cite{bassily2023user} gave a linear-time algorithm with similar excess risk to~Algorithm~\ref{alg: phased ERM without regularization}. However, their assumptions are very restrictive in practice: For example, in the canonical regime $n\approx d$, their assumption on $\beta$ rules out essentially every (non-linear) loss function. By contrast, our result holds even if the smoothness parameter is huge ($\beta \approx \sqrt{nmd}$) and we only require a logarithmic number of users. Thus, our algorithm and result is applicable to many practical ML problems.  
\end{remark}

To prove that~Algorithm~\ref{alg: phased ERM without regularization} is private, we essentially argue that for any phase $i$, the $\ell_2$-sensitivity of $\tilde{x}_i$ is upper bounded by $\widetilde{O}(\tau_i/C)$ with probability at least $1 - \delta/2$. The argument goes roughly as follows: First, the Laplace noise added to $s_i(\tau_i)$ ensures that $s_i(\tau_i)$ is $\eps/4$-user-level DP. Now, it suffices to assume $\widehat{s}_i(\tau_i) \geq 4C/5$, since otherwise the algorithm halts and outputs $0$ independently of the data. Next, conditional on the high probability event that the Laplace noise is smaller than $\widetilde{O}(1/\eps)$, we know that $\widehat{s}_i(\tau_i) \geq 4C/5 \implies s_i(\tau_i) \geq 2C/3$ with high probability by our choice of $C$. In this case, an argument along the lines of \cite[Lemma 3.5]{asi2023user} shows that $\tilde{x}_i$ has sensitivity bounded by $\widetilde{O}(\tau_i/C)$ with probability at least $1 - \delta/2$. See Appendix~\ref{app: proof of linear time thm} for the detailed proof. 

\vspace{.2cm}
To prove the excess risk bound in~Algorithm~\ref{alg: phased ERM without regularization}, the key step is to show that if the data is i.i.d., then with high probability, no points are removed from $\{\tilde{x}_{i,j}\}_{j=1}^C$ during the outlier-removal phase (i.e. $|\calS_i| = C$). If $|\calS_i| = C$ holds, then we can use the convergence guarantees of SGD and the localization arguments in~\cite{fkt20} to establish the excess risk guarantee. In order to prove that $|\calS_i| = C$ with high probability, we need the following novel \textit{stability} lemma:
\begin{lemma}
\label{lemma 2}
Assume $f(\cdot, z)$ is convex, $L$-Lipschitz, and $\beta$-smooth on $\XX$ with $\eta \leq 1/\beta$. Let $\tilde{x} \gets SGD(D, \eta, T, x_0)$ and $\tilde{y} \gets SGD(D', \eta, T, x_0)$ be two independent runs of projected SGD, where 
$D, D' \sim P^N$ are i.i.d. Then, with probability at least $1 - \zeta$, we have \[
\|\tilde{x} - \tilde{y}\| \lesssim \eta L\sqrt{T \log(dT/\zeta)}.
\]

\end{lemma}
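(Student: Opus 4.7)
The plan is to compare the two SGD trajectories step by step, decomposing each stochastic gradient as a population gradient plus a mean-zero noise term and tracking how the squared distance $\|\Delta_t\|^2$, where $\Delta_t := x_t - y_t$, evolves. Let $\xi_t := \nabla f(x_t, z_t) - \nabla F(x_t)$ and $\xi_t' := \nabla f(y_t, z_t') - \nabla F(y_t)$; since the two datasets are independent i.i.d.\ draws, $(\xi_t)$ and $(\xi_t')$ are martingale differences with respect to the natural filtration, each bounded in norm by $2L$. Because $\tilde{x}$ and $\tilde{y}$ are averages of iterates of two trajectories starting from the same $x_0$, it suffices to bound $\max_{t \le T} \|\Delta_t\|$ and then conclude by the triangle inequality.

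For the per-step recursion, I would use that when $\eta \le 1/\beta$ the map $x \mapsto x - \eta \nabla F(x)$ is nonexpansive (co-coercivity for convex $\beta$-smooth $F$), as is the projection $\Pi_\XX$. Writing each iterate as
\[
x_{t+1} = \Pi_\XX\!\bigl(x_t - \eta \nabla F(x_t) - \eta \xi_t\bigr), \qquad y_{t+1} = \Pi_\XX\!\bigl(y_t - \eta \nabla F(y_t) - \eta \xi_t'\bigr),
\]
expanding the square, and combining these two nonexpansiveness facts yields
\[
\|\Delta_{t+1}\|^2 \le \|\Delta_t\|^2 - 2\eta M_t + \eta^2 \|\xi_t - \xi_t'\|^2,
\]
where $M_t := \langle (x_t - \eta \nabla F(x_t)) - (y_t - \eta \nabla F(y_t)),\, \xi_t - \xi_t' \rangle$ is a martingale difference with $|M_t| \le 4L \|\Delta_t\|$ and $\mathbb{E}[M_t^2 \mid \mathcal{F}_t] \le 8L^2 \|\Delta_t\|^2$. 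Telescoping gives, for every $s \le T$,
\[
\|\Delta_s\|^2 \;\le\; 8\eta^2 L^2 T \;+\; 2\eta \Bigl| \textstyle\sum_{t<s} M_t \Bigr|.
\]

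The main obstacle is that this is a \emph{self-bounded} martingale: the Bernstein parameters of $M_t$ depend on $\|\Delta_t\|$, which is exactly the quantity I am trying to control. I would handle this with a standard stopping-time argument. Define $\tau := \min\{t \le T : \|\Delta_t\| > D\}$ for a target $D := C_0 \eta L \sqrt{T \log(T/\zeta)}$ with a sufficiently large absolute constant $C_0$. For $t < \tau$ the stopped increments satisfy $|M_t| \le 4LD$ with predictable variance $\le 8L^2 D^2$, so Freedman's maximal inequality gives
\[
\Pr\!\Bigl( \max_{s \le T} \bigl| \textstyle\sum_{t < s \wedge \tau} M_t \bigr| \;\gtrsim\; L D \sqrt{T \log(1/\zeta)} \Bigr) \;\le\; \zeta.
\]
On this good event, plugging into the telescoped bound at $s = \tau$ yields $D^2 < \|\Delta_\tau\|^2 \le O\!\bigl(\eta L D \sqrt{T \log(1/\zeta)}\bigr) + 8\eta^2 L^2 T$, which is strictly less than $D^2$ once $C_0$ is large enough — a contradiction. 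Hence $\tau > T$ on the good event, so $\max_{t \le T} \|\Delta_t\| \le D$, and the triangle inequality gives $\|\tilde{x} - \tilde{y}\| \le \max_{t \le T} \|\Delta_t\| \lesssim \eta L \sqrt{T \log(T/\zeta)}$, which already implies the claimed bound. If one preferred to avoid the stopping-time step, one could alternatively invoke a vector-valued Azuma/Pinelis bound on the noise martingale $\sum_t (\xi_t - \xi_t')$ directly, at the price of an extra $\sqrt{\log d}$ factor, recovering exactly the stated $\log(dT/\zeta)$ dependence.
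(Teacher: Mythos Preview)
Your proposal is correct and shares the same skeleton as the paper's proof: both decompose the stochastic gradient into the population gradient plus zero-mean noise, use nonexpansiveness of $x\mapsto x-\eta\nabla F(x)$ (for $\eta\le 1/\beta$) together with nonexpansiveness of projection to get the one-step recursion, and then telescope to reduce the problem to controlling the martingale $\sum_t M_t$ whose increments scale with $\|\Delta_t\|$.

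The genuine difference is in how the \emph{self-bounded} martingale is handled. The paper proceeds by induction on $t$: assuming $\|\Delta_i\|\lesssim \eta L\sqrt{i\log(dT/\zeta)}$ for $i\le t$, it bounds $\sum_{i\le t}\|\Delta_i\|^2$ and plugs this into the Hoeffding-type inequality for norm-subGaussian martingale differences (Jin et al.), paying a union bound over $T$ steps and picking up the $\log d$ factor from that vector concentration result. You instead close the loop via a stopping-time/Freedman argument on the \emph{scalar} martingale $M_t$: stop when $\|\Delta_t\|$ first exceeds $D$, apply Freedman's maximal inequality to the stopped increments (now bounded by $O(LD)$ with predictable variance $O(L^2D^2)$), and derive a contradiction if $\tau\le T$. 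Your route is slightly sharper --- it naturally yields $\log(T/\zeta)$ rather than $\log(dT/\zeta)$, since the martingale is one-dimensional and no vector concentration is needed --- and avoids the explicit induction, at the cost of introducing the stopping time. Both arguments are standard ways to handle self-normalized martingales and lead to the stated lemma (yours in fact to a marginally stronger version).

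Two cosmetic points: your deterministic bound $\|\xi_t-\xi_t'\|^2\le 8L^2$ should be $16L^2$ (you may be thinking of the expectation), and you should note that the one-step jump $\|\Delta_\tau\|\le \|\Delta_{\tau-1}\|+4\eta L$ keeps $\|\Delta_\tau\|$ finite so the contradiction at $s=\tau$ is clean; neither affects the argument.
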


We prove Lemma~\ref{lemma 2} via induction on $t$, using non-expansiveness of gradient descent on smooth losses~\cite{hardt16}, subgaussian concentration bounds, and a union bound. 

\vspace{.2cm}
Lemma~\ref{lemma 2} implies that if the data is i.i.d., then the following events hold with high probability: $\|\tilde{x}_{i,j} - \tilde{x}_{i,j'}\| \le \tau_i$ for all $j, j' \in [C_i]$ and hence $s_i(\tau_i) = C$. Further, conditional on $s_i(\tau_i) = C$, we know that $\widehat{s}_i(\tau_i) \geq 4C/5$ with high probability, so that the algorithm does not halt. Also, $\|\tilde{x}_{i,j} - \tilde{x}_{i,j'}\| \le \tau_i$ for all $j, j'$ implies $p_{i,j} = 1$ for all $j$ and hence $|\calS_i| = C$ for all $j$. The detailed excess risk proof is provided in Appendix~\ref{app: proof of linear time thm}.
\paragraph{Challenges of getting optimal excess risk in linear time:} In the item-level DP setting, there are several (nearly) linear time algorithms that achieve optimal excess risk for smooth DP SCO under mild smoothness conditions, such as snowball SGD~\cite{fkt20}, phased SGD~\cite{fkt20}, and phased ERM with output perturbation~\cite{zhang2022bring}. Extending these approaches into optimal nearly linear-time user-level DP algorithms is challenging. First, the user-level DP implementation of output perturbation in~\cite{ghazi2023user} is computationally inefficient. 
Second, snowball SGD relies on \textit{privacy amplification by iteration}, which does not extend nicely to the user-level DP case due to instability of the outlier-detection procedure in~\cite{asi2023user}. Specifically, since amplification by iteration intermediate only provides DP for the last iterate $x_T$ but not the intermediate iterates $x_t~(t < T)$, the sensitivity of the concentration score function is not $O(1)$, which impairs DP outlier-detection. A similar instability issue arises if one tries to naively extend phased SGD to be user-level DP by applying \cite{asi2023user} to user gradients. This issue motivates our Algorithm~\ref{alg: phased ERM without regularization}, which extends phased SGD in an alternative way: by applying outlier-detection/removal to the SGD \textit{iterates} instead of the gradients, we can control the sensitivity of the concentration score and thus prove that our algorithm is DP. However, since the bound in Lemma~\ref{lemma 2} scales polynomially with $T$ (and we believe this dependence on $T$ is necessary), Algorithm~\ref{alg: phased ERM without regularization} adds excessive noise and has suboptimal excess risk. 
We believe that obtaining optimal risk in linear time will require a fundamentally different user-level DP mean estimation procedure that does not suffer from the instability issue. 

\section{An optimal algorithm with $\approx (mn)^{9/8}$ gradient complexity for smooth losses}
\label{sec: accelerated phased ERM}
In this section, we provide an algorithm that achieves optimal excess risk using  $\approx (mn)^{9/8}$ stochastic gradient evaluations. Our~Algorithm~\ref{alg: accelerated phased ERM} is inspired by the item-level accelerated phased ERM algorithm of~\cite{kll21}. It applies iterative localization~\cite{fkt20} to the user-level DP accelerated minibatch SGD~Algorithm~\ref{alg: accelerated minibatch}. Algorithm~\ref{alg: accelerated minibatch} is a user-level DP variation of the accelerated minibatch SGD of~\cite{ghadimilan1,ghadimilan2}.

\begin{algorithm2e}
\caption{$\texttt{User-Level DP Accelerated Minibatch SGD}(\hat{F}_{i}, T_i, K_i, x_{i-1}, \tau, \eps, \delta)$}
\label{alg: accelerated minibatch}
Initialize $x_{i-1}^1 \gets x_{i-1}$\;
\For{$t=1,\cdots,T_i$}
{ 
Draw $K_i$ random users $D_i^t = \{Z_{i,j}^t\}_{j=1}^{K_i}$ from $D_i$ uniformly with replacement\; 
Set noisy threshold $\hat{\Delta}_i := \frac{4K_i}{5} + \xi_i$, where $\xi_i \sim \Lap\left(\frac{8}{\eps}\right)$\;
Let $q_t(Z) := \frac{1}{m} \sum_{z \in Z} \nabla f(x_{i-1}^{t}, z)$ for user $Z$\;
Compute the concentration score of $D_i^t$: 
\begin{align*}
    s_{i}^t(\tau):=\frac{1}{K_i}\sum_{Z, Z' \in D_i^t}\mathbf{1}(\|q_t(Z) -q_t(Z')\|\le 2\tau)
\end{align*} 
Let $\hat{s}_{i}^t(\tau)=s_{i}^t(\tau_i)+ v_i^t$, where $v_i^t \sim \Lap\left(\frac{16}{\eps}\right)$\;
\If{$\hat{s}_{i}^t(\tau)\ge \hat{\Delta}_i$}
{
$\mathcal{S}_i^t = \emptyset$\;
\For{Each User $Z \in D_i^t$}
{
Set $h_{i}^t(Z) =\sum_{Z' \in D_i^t} \ind(\|q_t(Z) - q_t(Z')\|\le2\tau)$\;
Add $Z$ to $\mathcal{S}_{i}^t$ with probability $p_{i}^t(Z):=
\begin{cases}
    0 & h_{i}^t(Z) < K_i /2 \\
    1 & h_{i}^t(Z) \ge 2K_i/3\\
    \frac{h_{i}^t(Z) - K_i/2}{K_i/6} & o.w.
\end{cases}  
$
}
$g_i^t = \frac{1}{|\mathcal{S}_i^t|} \sum_{Z \in \mathcal{S}_i^t} \nabla \widehat{F}(x_{i-1}^t, Z)$\;
$\widehat{g}_i^t = g_i^t + \zeta_i^t$, where $ \zeta_i^t \sim \mathcal{N}(0, \sigma_i^2)$ with $\sigma_i = \frac{1000 \tau \sqrt{T_i} \log(nde^{\eps}/\delta)}{\eps n_i}$\;
Do $1$ iteration of Accelerated Minibatch SGD (AC-SA)~\cite{ghadimilan1} on $\widehat{F}_i$, using gradient estimator $\widehat{g}_i^t + \lambda_i(x_{i-1}^t - x_{i-1})$ to obtain $x_{i-1}^{t+1}$.  

}
\Else
{ 
{\bf Halt; Return 0}
}
}
\textbf{Output} $x_{i-1}^{T_i}$.
\end{algorithm2e}

\begin{algorithm2e}
\caption{User-Level DP Accelerated Phased ERM with Outlier Gradient Removal}
\label{alg: accelerated phased ERM}
{\bf Input:} Dataset $\calD = (Z_1,\dots,Z_n) $, privacy parameters $(\epsilon,\delta)$, parameters $p, q, \lambda > 0$\;
Set $l = \lfloor \log_2(n) \rfloor $ and $\tau = O(L \log(ndm)/\sqrt{m})$, choose any initial point $x_0 \in \XX$\;
\For{$i=1,\cdots,l$}
{ 
Set $n_i= (1 - (1/2)^q) n /2^{iq}$, $\lambda_i=\lambda \cdot 2^{pi}$, $T_i = \widetilde{\Theta}(1 + \sqrt{\beta/\lambda_i})$, $K_i = 500 \log(n_i^2 m^2 e^{\eps}/\delta) \left(\frac{1}{\eps} + \frac{n_i \eps}{\sqrt{T_i \log(1/\delta)}}\right)$\; 
Draw disjoint users $D_{i}$ of size $n_i$ from $\calD$\;
Let $\hat{F}_{i}(x) := \frac{1}{n_i} \sum_{Z_{i,j} \in D_{i}} \hat{F}(x, Z_{i,j})$ + $\frac{\lambda_i}{2}\|x - x_{i-1}\|^2$, where $\hat{F}(x, Z_{i,j})$ is user $Z_{i,j}$'s empirical loss\;
$x_i \gets \texttt{User-Level DP Accelerated Minibatch SGD}(\hat{F}_{i}, T_i, K_i, x_{i-1}, \tau, \eps, \delta)$.
\;
}
\textbf{Output} $x_l$.
\end{algorithm2e}

Our~Algorithm~\ref{alg: accelerated minibatch} applies a DP outlier-removal procedure to the users' gradients in each iteration. 
We use \textit{Above Threshold}~\cite{dwork2014} to privatize the concentration scores  $s_i^{(t)}$ and determine whether or not most of the gradients of users in minibatch $D_i^t$ are $2\tau$-close to each other.
If $\widehat{s}_i^t \geq \widehat{\Delta}_i$, indicating that the gradients of users in $D_i^t$ are nearly $2\tau$-concentrated, then we proceed with outlier removal in lines 8-12. We then invoke \textit{privacy amplification by subsampling}~\cite{balle18} and the \textit{advanced composition theorem}~\cite{kairouz15} to privatize the average of the ``inlier'' gradients with additive Gaussian noise. By properly choosing algorithmic parameters, we obtain the following results, proved in Appendix~\ref{app: accelerated phased ERM}: 

\begin{theorem}[Privacy of Algorithm~\ref{alg: accelerated phased ERM}]
\label{thm: privacy of accelerated phased ERM}
    Let $\eps \leq 10$, $q > 0$ such that $n^{1-q} > \frac{100 \log(20nmde^{\eps}/\delta)}{\eps (1 - (1/2)^q)}$. 
    Then, Algorithm~\ref{alg: accelerated phased ERM} is $(\eps, \delta)$-DP.
\end{theorem}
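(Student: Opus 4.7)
The plan is to decompose the analysis along the algorithm's three structural layers: phases, inner iterations within a phase, and per-iteration releases.

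First, I would reduce to a single-phase argument. Because the user batches $D_1,\dots,D_l$ are disjoint subsets of $\calD$, any given user lies in at most one $D_i$; by parallel composition it suffices to show that each call to Algorithm~\ref{alg: accelerated minibatch} is $(\eps,\delta)$-user-level DP with respect to its input dataset $D_i$, after which $x_l$ is obtained by post-processing the sequence $(x_i)_{i\le l}$.

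Second, within phase $i$ the information released about $D_i$ consists, for each inner iteration $t$, of (i) the bit $\ind[\hat s_i^t(\tau)\ge\hat\Delta_i]$, and (ii) on the event that this bit equals $1$, the noisy gradient $\hat g_i^t$. Stream (i) is a standard instance of AboveThreshold~\cite{dwork2014}: changing one user in $D_i^t$ can flip at most $O(K_i)$ of the $K_i^2$ indicator pairs defining $s_i^t(\tau)$, so the score has swap-sensitivity $O(1)$, and the prescribed Laplace noises $\Lap(8/\eps)$ on $\hat\Delta_i$ and $\Lap(16/\eps)$ on each $\hat s_i^t$ buy an $O(\eps)$-DP guarantee for the entire halting stream. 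For (ii), an argument essentially identical to~\cite[Lemma 3.5]{asi2023user} shows that, on the high-probability event that the noisy threshold correctly certifies $s_i^t(\tau)\gtrsim K_i$, the soft filter $p_i^t(\cdot)$ forces $\|g_i^t - g_i^{\prime t}\|\lesssim \tau/K_i$ whenever one user in $D_i^t$ is replaced; combined with Gaussian noise of scale $\sigma_i$, each release of $\hat g_i^t$ is a base $(\eps_0,\delta_0)$-DP mechanism with respect to $D_i^t$.

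Third, I would lift the guarantee from $D_i^t$ to $D_i$ using privacy amplification by subsampling~\cite{balle18} at rate $K_i/n_i$, and then compose across the $T_i$ inner iterations via the advanced composition theorem~\cite{kairouz15}. Plugging in the choices $\sigma_i = \Theta\bigl(\tau\sqrt{T_i}\log(nde^\eps/\delta)/(\eps n_i)\bigr)$ and $K_i = \Theta\bigl(\log(\cdot)(1/\eps + n_i\eps/\sqrt{T_i\log(1/\delta)})\bigr)$ makes each of the three pieces---AboveThreshold, amplified Gaussian, and composition overhead---consume only a constant fraction of the $\eps$ budget, yielding $(\eps,\delta)$-DP per phase; the outer parallel composition then completes the theorem. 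The hypothesis on $q$, which guarantees $n_l\gtrsim \log(nmde^\eps/\delta)/\eps$, is precisely what keeps $n_i$ large enough that the sampling rate $K_i/n_i$ is meaningful and that the failure probability of the outlier-removal certification can be absorbed into $\delta$.

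The hard part, I expect, is the accounting in the last step. Since $D_i^t$ is drawn \emph{with replacement} from $D_i$, a single user may appear several times and the raw swap-sensitivity of $g_i^t$ is inflated; this either requires a with-replacement variant of the amplification theorem or a restriction to the high-probability event that no user is resampled (valid when $K_i^2 \ll n_i$). Simultaneously fixing constants so that AboveThreshold, the Gaussian mechanism, and the $\sqrt{T_i\log(1/\delta)}$ overhead of advanced composition all fit inside the single budget $\eps$ is the most delicate piece of bookkeeping.
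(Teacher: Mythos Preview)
Your proposal is correct and follows essentially the same route as the paper: parallel composition across phases, AboveThreshold for the stream of concentration-score bits, the~\cite[Lemma 3.5]{asi2023user}-style coupling to bound the sensitivity of $g_i^t$ to $O(\tau/K_i)$ conditional on the Laplace noises being small, then Gaussian mechanism plus amplification by subsampling plus advanced composition over the $T_i$ iterations. The paper carries out the final accounting as an explicit conditional-probability calculation (conditioning on the good Laplace event $E_i\cap E_i'$ and on $a_i^t=b_i^t$), but the ingredients and the budget split are exactly what you describe; your observation about with-replacement sampling is a genuine subtlety that the paper's proof glosses over by simply citing the subsampling lemma.
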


\begin{theorem}[Utility \& runtime of Algorithm~\ref{alg: accelerated phased ERM} - Informal]
\label{thm: utility of accelerated phased ERM}
Let 
$\eps \le 10$ and $\delta < 1/(mn)$. Then, 
    Algorithm~\ref{alg: accelerated phased ERM} yields optimal excess risk:
    \[
    \expec F(x_l) - F^* \leq LR \cdot \widetilde{O}\left(\frac{1}{\sqrt{mn}} + \frac{\sqrt{d \log(1/\delta)}}{\eps n \sqrt{m}} \right).
    \]
    The gradient complexity of this algorithm is upper bounded by \[
   mn\left(1 + \eps\left(\frac{\beta R}{L}\right)^{1/4}\left((mn)^{1/8} \wedge \left(\frac{\eps^2 n^2 m}{d} \right)^{1/8} \right) \right) + \sqrt{\frac{\beta R}{L}}\left(\frac{n^{1/4} m^{5/4}}{\eps} + \left(\frac{n^{1/2}m^{5/4}}{d^{1/4} \eps^{1/2}} \right)\right).
    \]
    
\end{theorem}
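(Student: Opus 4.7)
The plan is to prove both the excess risk and runtime bounds by analyzing each phase $i$ as an application of AC-SA~\cite{ghadimilan1} to the $\lambda_i$-strongly convex regularized ERM $\hat{F}_i$, and then chaining the phase errors with the iterative localization framework of~\cite{fkt20, kll21}. First, I would condition on the high-probability ``clean'' event that holds across all iterations $t$ of all phases $i$: the Laplace noises $\xi_i, v_i^t$ are $\widetilde{O}(1/\eps)$, and every sampled user's gradient $q_t(Z)$ lies within $\tau = \widetilde{O}(L/\sqrt{m})$ of $\nabla \hat{F}_i(x_{i-1}^t)$. The latter uses the same per-user concentration as in~\cite{bassily2023user, asi2023user}, exploiting that each user averages $m$ i.i.d. samples. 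On this event, $s_i^t(\tau) = K_i$, the AboveThreshold test passes, $\mathcal{S}_i^t = D_i^t$, and $h_i^t(Z) = K_i$ so $p_i^t(Z) = 1$ for every $Z$; hence $g_i^t$ is an unbiased minibatch estimate of $\nabla \hat{F}_i(x_{i-1}^t)$.

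Next I would invoke \cref{lem: variance bound} to bound the variance of $\hat g_i^t$. There are two contributions: (a) the subsampling variance, which by user-level concentration is $\lesssim L^2/(mK_i)$ rather than the naive $L^2/K_i$, and (b) the Gaussian privacy noise $\sigma_i^2 d \lesssim L^2 T_i d\log(1/\delta)/(m\eps^2 n_i^2)$; both inherit the crucial $1/m$ factor from user-level averaging. Plugging these into the AC-SA guarantee on the $(\beta+\lambda_i)$-smooth, $\lambda_i$-strongly convex $\hat F_i$ with $T_i = \widetilde{\Theta}(1+\sqrt{\beta/\lambda_i})$ gives
\begin{equation*}
  \expec\bigl[\hat F_i(x_i) - \hat F_i(\hat x_i^*)\bigr]
  \;\lesssim\; \lambda_i \|x_{i-1}-\hat x_i^*\|^2 + \tfrac{\sigma_{\mathrm{tot}}^2}{\lambda_i T_i},
\end{equation*}
and by strong convexity an analogous bound on $\|x_i - \hat x_i^*\|^2$. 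For the phases where outlier-removal actually might act (non-i.i.d.\ deviations), I would couple via a $\delta/\mathrm{poly}(nm)$ failure event absorbed into $\delta$.

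Third, I would chain the phase errors via iterative localization. By optimality for the proximal objective, $\|\hat x_i^* - x_{i-1}\|\lesssim L/\lambda_i$, and population/empirical excess-risk gaps are controlled by user-level uniform stability ($\lesssim L^2/(\lambda_i n_i m)$). Following the telescoping argument of~\cite[Thm.~4.4]{kll21} with geometric schedules $n_i\propto 2^{-qi}$, $\lambda_i\propto 2^{pi}$, the contributions sum to give
\begin{equation*}
  \expec F(x_l) - F^* \;\lesssim\; \lambda R^2 \,+\, \tfrac{L^2}{\lambda\, nm} \,+\, \tfrac{L^2 d\log(1/\delta)}{\lambda\,\eps^2 n^2 m},
\end{equation*}
and setting $\lambda \asymp L/(R\sqrt{nm}) + L\sqrt{d\log(1/\delta)}/(R\eps n\sqrt{m})$ recovers the optimal excess-risk target. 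For the runtime bound, the cost of phase $i$ is $T_i K_i m$ gradient evaluations, with $K_i$ chosen just large enough that privacy amplification by subsampling~\cite{balle18} combined with advanced composition over $T_i$ steps (plus AboveThreshold accounting) yields $(\eps,\delta)$-DP per phase. Summing $\sum_i T_i K_i m$ with $T_i \asymp \sqrt{\beta/\lambda_i}$ and the two regimes of $K_i\asymp \max(1/\eps,\,n_i\eps/\sqrt{T_i\log(1/\delta)})$ reproduces the two-term runtime bound in the theorem.

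The main obstacle I expect is \cref{lem: variance bound}: establishing that the post-outlier-removal gradient estimator retains variance scaling like $1/m$ after one conditions on the (data-dependent) outlier test and its Laplace/AboveThreshold noise. Handling the conditioning without destroying unbiasedness requires a careful coupling to an unconditioned estimator, losing only a $\mathrm{poly}(1/nm)$ probability term that can be absorbed into $\delta$. Once the variance bound is in hand, the AC-SA convergence, the localization telescoping, and the subsampling-based runtime accounting are essentially routine, though the parameter tuning to simultaneously hit the optimal excess risk and the claimed $\approx (mn)^{9/8}$ complexity requires careful bookkeeping across the two regimes of $K_i$.
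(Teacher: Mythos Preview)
Your proposal is correct and tracks the paper's proof closely: condition on the high-probability event that no outliers are removed (so $\mathcal{S}_i^t = D_i^t$ and $g_i^t$ is unbiased), apply the AC-SA guarantee (the paper states this as \cref{lem: ghadimi}) with the variance bound of \cref{lem: variance bound}, use strong convexity to control $\|x_i - x_i^*\|^2$, telescope via $\expec[F(x_l)-F^*] = \sum_i \expec[F(x_i^*)-F(x_{i-1}^*)] + \expec[F(x_l)-F(x_l^*)]$ together with the $L^2/(\lambda_i n_i m)$ stability of regularized ERM, and tune $\lambda$; the runtime is $\sum_i T_i K_i m$ with the two regimes of $K_i$ exactly as you describe.

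One small difference worth noting: for the data-dependence issue you flag as the main obstacle, the paper does not use an explicit coupling to an unconditioned estimator. Instead, it directly invokes the \emph{stability of user-level DP} (Bassily et al., Theorem~3.4): since $x_{i-1}^t$ is itself $(\eps,\delta)$-user-level DP, the per-user concentration $\|\nabla \hat F(x_{i-1}^t,Z) - \nabla F(x_{i-1}^t)\| \le \tau$ that holds for any \emph{fixed} $x$ transfers to the DP iterate with only an $n(e^{2\eps}\gamma + \delta)$ loss in failure probability. This is the same mechanism your references to \cite{bassily2023user,asi2023user} point to, and it is cleaner than a bespoke coupling.
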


If $n = d$, $\eps = 1$, and $\beta R = L$ then the gradient complexity bound in~\cref{thm: utility of accelerated phased ERM} simplifies to $(mn)^{9/8} + n^{1/4} m^{5/4}$. Typically, $n^7 \ge m$, so that the dominant term in this bound is $(mn)^{9/8}$. 

\begin{remark}[State-of-the-art runtime]
\vspace{.1cm}
The gradient complexity bound in~\cref{thm: utility of accelerated phased ERM} is \textit{superior to the runtime bounds of all existing near-optimal algorithms by polynomial factors} in $n,m$, and $d$~\cite{bassily2023user,ghazi2023user,asi2023user}. Note that while the $mn^{3/2}$ gradient complexity bound of~\cite{bassily2023user} may \textit{appear} to be better than $\beta^{1/4}(nm)^{9/8}$ in certain parameter regimes (e.g. $m > n^3$ or $\beta \gg nm$), this is not the case: the result of~\cite{bassily2023user} requires $m < n$ and $\beta < \sqrt{n/m}$.
\end{remark} 

\begin{remark}[Mild assumptions]
Note that~\cref{thm: privacy of accelerated phased ERM,thm: utility of accelerated phased ERM} do not require any bound on the smoothness parameter $\beta$, and only require the number of users to grow logarithmically: $n^{1- o(1)} \ge \widetilde{\Omega}(1/\eps)$. Contrast this with the results of previous works (e.g. \cite{bassily2023user}).
\end{remark}

A challenge in proving~\cref{thm: utility of accelerated phased ERM} is getting a tight bound on the variance of the the noisy minibatch stochastic gradients $\widehat{g}_i^t$ that are used in~Algorithm~\ref{alg: accelerated minibatch} (lines 12-14). Conditional on $\calS_i^t = D_i^t$, it is easy to obtain a variance bound of the form $\expec\|\widehat{g}_i^t - \nabla \widehat{F}_i(x_i^t)\|^2 \lesssim d \sigma_i^2 + \frac{L^2}{K_i}$, since we are sampling $K_i$ users uniformly at random. However, this bound is too weak to obtain~\cref{thm: utility of accelerated phased ERM}, since it does not scale with $m$. To prove~\cref{thm: utility of accelerated phased ERM}, we need the following stronger result:

\begin{lemma}[Variance Bound for~Algorithm~\ref{alg: accelerated minibatch}]
\label{lem: variance bound}
Let $\delta \leq 1/(nm), \eps \lesssim 1$. Denote $\widetilde{F}_i(x) := 
\frac{1}{n_i} \sum_{Z_{i,j} \in D_{i}} \hat{F}(x, Z_{i,j})
$.
Then, conditional on $\calS_i^t = D_i^t$ for all $i \in [l], t \in [T_i]$, we have \[
\expec\| g_i^t - \nabla \widetilde{F}_i(x_{i-1}^t) \|^2 \lesssim \frac{L^2 \log(ndm)}{Km}
\]
for all $i \in [l], t \in [T_i]$, where the expectation is over both  
the random i.i.d. draw of $\DD = (Z_1, \ldots, Z_n) \sim P^{nm}$ and the randomness in Algorithm~\ref{alg: accelerated phased ERM}. 
\end{lemma}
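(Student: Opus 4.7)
The plan is to reduce the minibatch variance to a single-user variance and then control the latter by decoupling $x_{i-1}^t$ from any fixed user via user-level DP stability, so that the i.i.d.\ structure of each user's $m$ samples produces the $1/m$ gain.

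\textbf{Step 1 (minibatch decomposition).} Condition on $(D_i,x_{i-1}^t)$ and on the event $\calS_i^t=D_i^t$. Then $g_i^t=\frac{1}{K_i}\sum_{Z\in D_i^t}q_t(Z)$ is the mean of $K_i$ draws taken uniformly with replacement from $D_i$, each with conditional mean $\nabla\widetilde{F}_i(x_{i-1}^t)$. The standard variance identity yields
\[
\expec\bigl[\|g_i^t-\nabla\widetilde{F}_i(x_{i-1}^t)\|^2 \,\big|\, D_i,x_{i-1}^t\bigr] \;=\; \frac{1}{K_i n_i}\sum_{j=1}^{n_i}\|q_t(Z_j)-\nabla\widetilde{F}_i(x_{i-1}^t)\|^2,
\]
so by the i.i.d.\ symmetry of $D_i\sim P^{n_im}$ it suffices to show $\expec\|q_t(Z_1)-\nabla\widetilde{F}_i(x_{i-1}^t)\|^2\lesssim L^2\log(ndm)/m$.

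\textbf{Step 2 (decoupling via user-level DP stability).} The iterate $x_{i-1}^t$ is a post-processing of an $(\eps,\delta)$-user-level DP procedure applied to $D_i$ (via the privacy analysis behind~\cref{thm: privacy of accelerated phased ERM} restricted to the first $t$ AC-SA steps of phase $i$). Let $\widetilde{Z}_1\sim P^m$ be a fresh independent user and let $\tilde{x}_{i-1}^t$ be the iterate produced by running the identical algorithm after the swap $(Z_1\mapsto\widetilde{Z}_1)$. Group privacy / post-processing then gives, for any nonnegative $\psi$ with $\|\psi\|_\infty\le 4L^2$,
\[
\expec[\psi(x_{i-1}^t,Z_1)] \;\le\; e^{\eps}\expec[\psi(\tilde{x}_{i-1}^t,Z_1)] \;+\; \delta\cdot 4L^2.
\]
Because $\eps\lesssim 1$ and $\delta\le 1/(nm)$, the additive slack is $O(L^2/(nm))$, dwarfed by the target $L^2/m$. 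Crucially $\tilde{x}_{i-1}^t$ is independent of $Z_1$. A parallel application replaces the dependence of $\nabla\widetilde{F}_i$ on $Z_1$ by its leave-one-out counterpart.

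\textbf{Step 3 (pointwise concentration + triangle inequality).} For $x$ independent of $Z_1$, the quantity $q_t(Z_1)(x)=\frac{1}{m}\sum_{z\in Z_1}\nabla f(x,z)$ is the mean of $m$ i.i.d.\ vectors of norm $\le L$ with expectation $\nabla F(x)$, so vector subgaussian concentration gives $\pr(\|q_t(Z_1)(x)-\nabla F(x)\|>\alpha)\le\exp(-cm\alpha^2/L^2)$. Setting $\alpha\asymp L\sqrt{\log(ndm)/m}$ and union-bounding over $j\in[n_i]$, $t\in[T_i]$, $i\in[l]$ yields $\|q_t(Z_j)(x)-\nabla F(x)\|\lesssim L\sqrt{\log(ndm)/m}$ uniformly with failure probability $(ndm)^{-10}$. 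Averaging over $j$ also controls $\|\nabla\widetilde{F}_i(x)-\nabla F(x)\|$ by the same quantity, so the triangle inequality gives $\|q_t(Z_1)-\nabla\widetilde{F}_i(x)\|\lesssim L\sqrt{\log(ndm)/m}$. Squaring, handling the failure event trivially using $\|\cdot\|\le 2L$, and dividing by $K_i$ produces the claimed bound.

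\textbf{Main obstacle.} The delicate point is jointly handling the conditioning on $\{\calS_i^t=D_i^t\}$ and the user-swap coupling used for DP decoupling: the outlier-inclusion indicator is itself a post-processing of the DP state, so it can be folded into $\psi$ at the same $(e^{\eps},\delta)$ cost, but this must be done across all $(i,t)$ simultaneously using the algorithm's overall $O(\eps)$-DP budget. The naive alternative — a net argument to obtain uniform-in-$x$ concentration — would introduce a spurious factor of $d$ inside the logarithm (from $\log|\mathcal{N}_\eta|\asymp d\log(R\beta\sqrt{m}/L)$), which is exactly what the DP-stability coupling avoids, and is why the hint directs us to the stability machinery of~\cite{bassily2023user}.
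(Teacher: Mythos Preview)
Your proposal is correct and follows essentially the same route as the paper: both reduce the minibatch variance to a per-user deviation, then use the stability of user-level DP to transfer the pointwise subgaussian bound $\|q_t(Z)-\nabla F(x)\|\lesssim L\sqrt{\log(ndm)/m}$ from a fixed $x$ to the DP iterate $x_{i-1}^t$, and finish by squaring, capping the failure event at $2L$, and dividing by $K_i$. The only cosmetic difference is that the paper invokes the DP-stability transfer as a black box (\cite[Theorem 3.4]{bassily2023user}), whereas you unfold the underlying user-swap coupling explicitly in Step~2.
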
 

The difficulty in proving Lemma~\ref{lem: variance bound} comes from the fact that the iterates $x_i^t$ and the data $\DD$ are not independent. To overcome this difficulty, we use the \textit{stability of user-level DP}~\cite{bassily2023user} to argue that for all $Z \in D_i$, $\nabla \hf(x_{i-1}^t, Z)$ is $\approx L/\sqrt{m}$-close to $\nabla F(x_{i-1}^t)$ with high probability, since $x_{i-1}^t$ is user-level DP. A detailed proof is given in Appendix~\ref{app: accelerated phased ERM}.  

\begin{remark}[Strongly convex losses: Optimal excess risk with state-of-the-art runtime]
If $f(\cdot, z)$ is $\mu$-strongly convex, then Algorithm~\ref{alg: accelerated phased ERM} can be combined with the meta-algorithm of \cite[Section 5.1]{fkt20} to obtain optimal excess risk \[
\frac{L^2}{\mu} \cdot \widetilde{O}\left(\frac{1}{nm} + \frac{d \ln(1/\delta)}{\eps^2 n^2 m} \right)
\]
with the same gradient complexity stated in~\cref{thm: utility of accelerated phased ERM}. This improves over the previous state-of-the-art gradient complexity $\approx \beta (mn)^{3/2}$
of~\cite{asi2023user}. 
\end{remark}

\section{An optimal algorithm with subquadratic gradient complexity for non-smooth losses}
\label{sec: nonsmooth}
In this section, we extend our accelerated algorithm from the previous section to non-smooth loss functions. To accomplish this with minimal computational cost, we apply \textit{randomized (convolution) smoothing}~\cite{yns11,dbw12} to approximate non-smooth $f$ by a $\beta$-smooth $\tilde{f}$. We can then apply~Algorithm~\ref{alg: accelerated phased ERM} to $\tilde{f}$. Since convolution smoothing is by now a standard optimization technique, we defer the details and proof to Appendix~\ref{app: nonsmooth}.

\begin{theorem}[Privacy and utility of smoothed~Algorithm~\ref{alg: accelerated phased ERM} for non-smooth loss - informal]
\label{thm: nonsmooth}
Let $\eps \le 10$, $\delta < 1/(mn)$, and $q > 0$ such that $n^{1-q} > \frac{100 \log(20nmde^{\eps}/\delta)}{\eps (1 - (1/2)^q)}$. 
Then, applying Algorithm~\ref{alg: accelerated phased ERM} to the smooth approximation of $f$
yields optimal excess risk:
    \[
    \expec F(x_l) - F^* \leq LR \cdot \widetilde{O}\left(\frac{1}{\sqrt{mn}} + \frac{\sqrt{d \log(1/\delta)}}{\eps n \sqrt{m}} \right).
    \]
    The gradient complexity of this algorithm is upper bounded by \[
   mn\left(1 +n^{3/8}m^{1/4}\eps^{1/4}\right).
    \]
\end{theorem}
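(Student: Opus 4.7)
The plan is to reduce~\cref{thm: nonsmooth} to~\cref{thm: utility of accelerated phased ERM} via randomized convolution smoothing, and then balance the smoothing bias against the statistical rate while tracking the smoothness parameter through the gradient-complexity bound.

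First, I would define the smoothed loss by uniform ball convolution: $\tilde f(x, z) := \expec_{u \sim \mathrm{Unif}(B_r)}[f(x+u, z)]$, with $B_r \subset \R^d$ the origin-centered ball of radius $r$. Standard facts give that $\tilde f(\cdot, z)$ is convex, $L$-Lipschitz, $\beta$-smooth with $\beta = L\sqrt{d}/r$, and satisfies $|\tilde f(x,z) - f(x,z)| \le L r$ pointwise. Setting $\tilde F(x) := \expec_{z}[\tilde f(x,z)]$, the pointwise bound yields $F(x) - F^* \le \tilde F(x) - \tilde F^* + 2Lr$ for every $x$, so any excess risk bound on $\tilde F$ transfers to $F$ at an additive cost of $2Lr$.

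Next, I would apply~Algorithm~\ref{alg: accelerated phased ERM} to $\tilde f$, replacing each call to $\nabla f(x,z)$ by the unbiased estimator $\nabla f(x+u, z)$ with a freshly drawn $u \sim \mathrm{Unif}(B_r)$. Since $u$ is independent of $\calD$ and $\|\nabla f(x+u, z)\|\le L$, the per-iteration quantities remain $L$-bounded and the privacy proof of~\cref{thm: privacy of accelerated phased ERM} goes through verbatim. For utility, each user's aggregated gradient $q_t(Z) = (1/m)\sum_{z \in Z}\nabla f(x_{i-1}^t + u_z, z)$ uses $m$ independent perturbations, so the $u$-conditional variance of $q_t(Z)$ is $O(L^2/m)$; averaging over $K_i$ users contributes only $O(L^2/(K_i m))$ extra variance to $g_i^t$, matching the scaling in~\cref{lem: variance bound} so that the accelerated convergence analysis is unchanged. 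Thus~\cref{thm: utility of accelerated phased ERM} applied to $\tilde f$ yields the stated excess risk on $\tilde F$, and combining with the bias bound gives the claimed excess risk for $F$ once $Lr \lesssim LR/\sqrt{mn}$.

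Finally, I would set $r = R/\sqrt{mn}$, so that $\beta R/L = \sqrt{dmn}$, and substitute into the gradient-complexity bound of~\cref{thm: utility of accelerated phased ERM}. In the typical regime $d \gtrsim \eps^2 n$ the minimum in that bound is attained by $(\eps^2 n^2 m/d)^{1/8}$, producing the dominant term
\[
mn \cdot \eps \cdot (dmn)^{1/8}(\eps^2 n^2 m/d)^{1/8} \;=\; mn \cdot \eps^{5/4} n^{3/8} m^{1/4},
\]
which matches the advertised $mn(1 + n^{3/8} m^{1/4}\eps^{1/4})$ up to constants absorbed into $\eps \le 10$; the remaining $\sqrt{\beta R/L}$ terms in~\cref{thm: utility of accelerated phased ERM} are of lower order when $m \lesssim \mathrm{poly}(n)$. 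The main obstacle I anticipate is the variance check in step two: if the smoothing perturbation were drawn only once per gradient query (rather than once per sample within each user), the extra variance would be $O(L^2)$ rather than $O(L^2/m)$, which would destroy the $1/\sqrt{mn}$ statistical rate. Drawing $u_z$ independently for each $z \in Z$ is therefore essential, and this is exactly what allows the reduction to preserve both privacy and optimal utility while only incurring a mild smoothing-induced increase in gradient complexity.
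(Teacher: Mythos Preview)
Your overall reduction via randomized smoothing is the right idea, and the privacy and variance arguments are essentially correct. The gap is in the gradient-complexity analysis, specifically in your choice of smoothing radius.

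You set $r = R/\sqrt{mn}$, so the bias matches the statistical term; this gives $\beta R/L = \sqrt{dmn}$, which \emph{depends on $d$}. When you plug this into the complexity bound of \cref{thm: utility of accelerated phased ERM}, the first block indeed produces $mn\cdot\eps^{5/4}n^{3/8}m^{1/4}$ because the $d^{1/8}$ in $(\beta R/L)^{1/4}$ is cancelled by the $d^{-1/8}$ in the minimum. But the second block does not enjoy any such cancellation: with $\sqrt{\beta R/L} = (dmn)^{1/4}$ you get
\[
(dmn)^{1/4}\left(\frac{n^{1/4}m^{5/4}}{\eps} + \frac{n^{1/2}m^{5/4}}{d^{1/4}\eps^{1/2}}\right)
= \frac{d^{1/4}n^{1/2}m^{3/2}}{\eps} + \frac{n^{3/4}m^{3/2}}{\eps^{1/2}},
\]
and the first of these grows like $d^{1/4}$. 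Your claim that these terms are ``lower order when $m \lesssim \mathrm{poly}(n)$'' is false: the theorem places no upper bound on $d$, so this term can be arbitrarily large. Even under the non-triviality constraint $d \le \eps^2 n^2 m$, the bound only reduces to $nm^{7/4}/\eps^{1/2}$, which still exceeds the claimed complexity whenever $m$ is large relative to $n$.

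The paper instead sets $r = \frac{\sqrt{d}}{\eps n\sqrt{m}}R$, so the smoothing bias matches the \emph{privacy} term $LR\sqrt{d}/(\eps n\sqrt{m})$. The point is that this $r$ already carries a $\sqrt{d}$, so the induced smoothness $\beta = L\sqrt{d}/r = \frac{L}{R}\eps n\sqrt{m}$ is \emph{independent of $d$}. With $\beta R/L = \eps n\sqrt{m}$ and the meta-lemma's direct computation of $\sum_i T_i K_i m$ (using $T_i \approx n_i^{3/4}m^{1/2}\eps^{1/2}$ and $K_i \approx n_i\eps/\sqrt{T_i} + 1/\eps$), one obtains the $d$-free bound $mn(1 + n^{3/8}m^{1/4}\eps^{1/4})$. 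The only change you need is to match $r$ to the larger of the two error terms rather than the smaller one.
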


\begin{remark}[State-of-the-art gradient complexity]
The only previous polynomial-time algorithm that can achieve optimal excess risk for non-smooth loss functions is due to~\cite{asi2023user}. The algorithm of~\cite{asi2023user} required $(nm)^3 + (mn)^2\sqrt{d}$ gradient evaluations. Thus, the gradient complexity of the smoothed version of~Algorithm~\ref{alg: accelerated phased ERM} offers a \textit{significant improvement over the previous state-of-the-art}. For example, if $\eps = 1$, then our algorithm is faster than the previous state-of-the-art by a multiplicative factor of at least $n^{13/8} m^{7/4}$.
\end{remark}

\section{Conclusion}
\label{sec: conclusion}
In this paper, we developed new user-level DP algorithms with improved runtime and excess risk guarantees for stochastic convex optimization without the restrictive assumptions made in prior works. Our accelerated~Algorithm~\ref{alg: accelerated phased ERM} achieves optimal excess risk for both smooth and non-smooth loss functions, with significantly smaller computational cost than the previous state-of-the-art. Our linear-time~Algorithm~\ref{alg: phased ERM without regularization} achieves state-of-the-art excess risk under much milder, more practical assumptions than existing linear-time approaches. 

\vspace{.2cm}
Our work paves the way for several intriguing future research directions. First, the question of whether there exists a linear-time algorithm that can attain the user-level DP lower bound for smooth losses remains open. In light of our improved gradient complexity bound ($\approx (nm)^{9/8}$), we are now optimistic that the answer to this question is ``yes.'' We believe that our novel techniques will be key to the development of an optimal linear-time algorithm. Specifically, utilizing Lemma~\ref{lemma 2} to apply outlier removal to the iterates instead of the gradients appears to be pivotal. Second, the study of user-level DP SCO has been largely limited to approximate $(\eps, \delta)$-DP. What rates are achievable under the stronger notion of pure $\eps$-user-level DP? Third, it would be useful to develop fast and optimal algorithms that are tailored to federated learning environments~\cite{mcmahan17,gao2024private}, where only a small number of users may be available to communicate with the server in each iteration. We hope our work inspires and guides further research in this exciting and practically important area. 

\section*{Acknowledgements}
AL’s research is supported by NSF grant 2023239 and the AFOSR award FA9550-21-1-0084. We thank the anonymous NeurIPS reviewers for their helpful feedback.

\newpage
\bibliographystyle{alpha}
\bibliography{ref}

\newpage
\appendix

\section*{Appendix}

\section{More Preliminaries}
\label{app: prelim}
\subsection{Tools from Differential Privacy}
\paragraph{Additive Noise Mechanisms}

Additive noise mechanisms privatize a query by adding noise to its output, with the scale of the noise calibrated to the \textit{sensitivity} of the query. 
\begin{definition}[Sensitivity]
Given a function $q: \ZZ^N \to \R^k$ and a norm $\| \cdot \|_p$ on $\R^k$, the $\ell_p$-\textit{sensitivity} of $q$ is defined as \[
\sup_{\DD \sim \DD'} \|q(\DD) - q(\DD') \|_p,
\]
where the supremum is taken over all pairs of datasets that differ in one user's data. 
\end{definition}

\begin{definition}[Laplace Distribution]
We say $X\sim\Lap(b)$ if the density of $X$ is $f(X=x)=\frac{1}{2b}\exp(-\frac{|x|}{b})$.  
\end{definition}

\begin{definition}[Laplace Mechanism]
Let $\eps > 0$. Given a function $q: \ZZ^N \to \R^k$ on $\R^k$ with $\ell_1$-sensitivity $\Delta$, the \textit{Laplace Mechanism} $\mathcal{M}$ is defined by \[
\mathcal{M}(\DD) := q(\DD) + (Y_1, \ldots, Y_k), 
\]
where $\{Y_i\}_{i=1}^k$ are i.i.d., $Y_i \sim \Lap\left(\frac{\Delta}{\eps}\right)$. 
\end{definition}

\begin{lemma}[Privacy of Laplace Mechanism~\cite{dwork2014}]
\label{lem: laplace mech}
The Laplace Mechanism is $\eps$-DP. 
\end{lemma}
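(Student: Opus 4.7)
The plan is to verify the $\eps$-DP guarantee for the Laplace Mechanism $\mathcal{M}(\DD) = q(\DD) + (Y_1,\ldots,Y_k)$ with $Y_i \sim \Lap(\Delta/\eps)$ i.i.d. Since $\delta = 0$ here, it suffices to bound the pointwise ratio of output densities on neighboring datasets; by a standard measure-theoretic argument, a bound on the density ratio implies the required probability inequality for every measurable event $S \subset \R^k$. Thus I will reduce the question to showing that for every $y \in \R^k$ and every pair of neighboring datasets $\DD \sim \DD'$,
\[
\frac{p_{\mathcal{M}(\DD)}(y)}{p_{\mathcal{M}(\DD')}(y)} \leq e^{\eps}.
\]

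First I would write out the density explicitly using independence: since $\mathcal{M}(\DD) = q(\DD) + Y$ with $Y$ having product density $\prod_{i=1}^k \frac{\eps}{2\Delta}\exp(-\eps|y_i - q(\DD)_i|/\Delta)$, the ratio factorizes into a product of exponentials over the $k$ coordinates. After canceling the normalizing constants, the log-ratio becomes
\[
\sum_{i=1}^k \frac{\eps}{\Delta}\bigl(|y_i - q(\DD')_i| - |y_i - q(\DD)_i|\bigr).
\]
Next I would apply the reverse triangle inequality coordinatewise to upper bound each difference by $|q(\DD)_i - q(\DD')_i|$. Summing yields $\frac{\eps}{\Delta}\|q(\DD) - q(\DD')\|_1$, and by the definition of $\ell_1$-sensitivity this is at most $\eps$. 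Exponentiating recovers the desired pointwise density ratio bound.

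Finally, I would integrate this pointwise bound over an arbitrary measurable $S$ to conclude $\Pr[\mathcal{M}(\DD) \in S] \leq e^{\eps}\Pr[\mathcal{M}(\DD') \in S]$, which is $(\eps,0)$-DP in the sense of Definition~\ref{def: DP}. There is no real obstacle here; the only subtle point worth stating cleanly is the reverse triangle inequality step, since this is where the $\ell_1$ (rather than $\ell_2$ or $\ell_\infty$) notion of sensitivity is essential to matching the product-of-Laplaces noise structure. Because the statement is symmetric in $\DD$ and $\DD'$, the same argument applied with the roles swapped gives the complementary inequality, completing the $\eps$-DP proof.
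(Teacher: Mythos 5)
Your proof is correct and is exactly the standard argument the paper implicitly relies on by citing~\cite{dwork2014}: the pointwise density-ratio bound via the reverse triangle inequality, the cancellation against the $\ell_1$-sensitivity, and integration over measurable events. The paper offers no independent proof of this lemma, so there is nothing further to compare.
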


\begin{definition}[Gaussian Mechanism]
Let $\eps > 0$, $\delta \in (0, 1)$. Given a function $q: \ZZ^N \to \R^k$ with $\ell_2$-sensitivity $\Delta$, the \textit{Gaussian Mechanism} $\mathcal{M}$ is defined by \[
\mathcal{M}(\DD) := q(\DD) + G 
\]
where $G \sim \mathcal{N}_k\left(0, \sigma^2 \mathbf{I}_k\right)$ and $\sigma^2 = \frac{2 \Delta^2 \log(2/\delta)}{\eps^2}$. 
\end{definition}

\begin{lemma}[Privavcy of Gaussian Mechanism~\cite{dwork2014}]
\label{lem: gauss mech}
The Laplace Mechanism is $(\eps, \delta)$-DP. 
\end{lemma}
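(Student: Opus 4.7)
The plan is to prove the Gaussian mechanism's $(\eps,\delta)$-DP guarantee (the lemma body appears to contain a typo writing ``Laplace mechanism'' in place of ``Gaussian mechanism'') by a direct computation of the privacy loss random variable. The strategy reduces the multi-dimensional privacy problem to a one-dimensional Gaussian tail bound, exploiting the fact that the log-density ratio of two shifted spherical Gaussians is itself Gaussian.

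First, I would fix an arbitrary pair of neighboring datasets $\DD, \DD'$ and set $u := q(\DD') - q(\DD)$, so that $\|u\|_2 \le \Delta$ by the definition of $\ell_2$-sensitivity. Writing $p_\DD$ and $p_{\DD'}$ for the densities of $\mathcal{M}(\DD)$ and $\mathcal{M}(\DD')$, I would expand both Gaussian densities and cancel the quadratic cross-terms. For a sample $Y = q(\DD) + G$ with $G \sim \mathcal{N}(0, \sigma^2 I_k)$, this gives the closed form
\[
L(Y) := \log\frac{p_\DD(Y)}{p_{\DD'}(Y)} = \frac{\|u\|^2 + 2\langle G, u\rangle}{2\sigma^2}.
\]
Since $\langle G, u\rangle \sim \mathcal{N}(0, \sigma^2\|u\|^2)$, the random variable $L(Y)$ is itself Gaussian with mean $\|u\|^2/(2\sigma^2)$ and variance $\|u\|^2/\sigma^2$.

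Next, I would invoke the standard equivalence that $\mathcal{M}$ is $(\eps,\delta)$-DP whenever $\Pr[L(Y) > \eps] \le \delta$ for every pair of neighbors: the output space is partitioned into $\{L \le \eps\}$, where $e^\eps$-indistinguishability holds pointwise between $p_\DD$ and $p_{\DD'}$, and a ``bad'' event of total mass at most $\delta$. Applying the standard Gaussian tail bound $\Pr[Z > t] \le e^{-t^2/2}$ for $Z \sim \mathcal{N}(0,1)$, and observing that the threshold $\eps\sigma/\|u\| - \|u\|/(2\sigma)$ is monotone decreasing in $\|u\|$ so that the worst case is $\|u\| = \Delta$, the problem reduces to verifying
\[
\frac{\eps\sigma}{\Delta} - \frac{\Delta}{2\sigma} \ge \sqrt{2\log(1/\delta)}.
\]
A short calculation shows that the prescribed $\sigma^2 = 2\Delta^2\log(2/\delta)/\eps^2$ makes the first term on the left equal to $\sqrt{2\log(2/\delta)}$, and the $\log 2$ slack dominates the lower-order correction $\Delta/(2\sigma) = \eps/(2\sqrt{2\log(2/\delta)})$ under the (harmless) assumption $\eps \le 1$.

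The main obstacle is pinning down these constants precisely: the $\Delta/(2\sigma)$ correction must be absorbed, which can be done either by the mild assumption $\eps \le 1$ together with a small enlargement of the constant inside the log, or by using the Dwork--Roth convention $\sigma^2 = 2\Delta^2\log(1.25/\delta)/\eps^2$, or by a slightly sharper argument that averages the privacy loss under both $p_\DD$ and $p_{\DD'}$. Once this constant is handled, the rest of the proof is routine and the stated $(\eps,\delta)$-DP guarantee follows.
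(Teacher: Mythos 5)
Your proof is correct and is essentially the standard argument for this result; the paper itself offers no proof, simply citing \cite{dwork2014}, and the argument there (Appendix A of Dwork--Roth) is exactly the privacy-loss-random-variable computation you give: reduce to a one-dimensional Gaussian tail via the log-density ratio, take the worst case $\|u\| = \Delta$, and verify the constants. You are right that the lemma body contains a typo (it should read ``Gaussian Mechanism''), and your handling of the constant is sound: with $\sigma^2 = 2\Delta^2\log(2/\delta)/\eps^2$ the required inequality reduces to $\log 2 \ge \eps/2$, so the claim holds for $\eps \le 2\log 2$, in particular for $\eps \le 1$; this restriction is implicit in the standard statement and is the only caveat worth recording.
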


\paragraph{Advanced Composition}
If we adaptively query a data set $T$ times, then the privacy guarantees of the $T$-th query is still DP and the privacy parameters degrade gracefully:
\begin{lemma}[Advanced Composition Theorem~\cite{dwork2014}]
\label{thm: advanced composition}
Let $\eps \geq\ 0, \delta, \delta' \in [0, 1)$. Assume $\mathcal{A}_1, \cdots, \mathcal{A}_T$, with $\Alg_t: \ZZ^n \times \XX \to \XX$, are each $(\eps, \delta)$-DP ~$\forall t = 1, \cdots, T$. Then, the adaptive composition $\mathcal{A}(\DD) := \mathcal{A}_T(\DD, \Alg_{T-1}(\DD, \Alg_{T-2}(X, \cdots)))$ is $(\eps', T\delta + \delta')$-DP for \[
\eps' = \sqrt{2T \ln(1/\delta')} \eps + T\eps(e^{\eps} - 1).\]
\end{lemma}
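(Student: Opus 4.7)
The plan is to prove the bound by analyzing the \emph{privacy loss random variable} (PLRV) of the composed mechanism and applying a concentration argument. For neighboring datasets $\DD \sim \DD'$ and any mechanism $\mathcal B$ producing output $y$, define the PLRV at $y$ by $Z(y) := \ln\bigl(\pr(\mathcal B(\DD)=y)/\pr(\mathcal B(\DD')=y)\bigr)$ with $y$ drawn from $\mathcal B(\DD)$. A standard equivalence shows that $(\eps,\delta)$-DP is essentially the statement that, after excising a ``bad event'' of mass at most $\delta$ from the output distribution, the remaining mass satisfies $|Z(y)| \le \eps$ pointwise. I would first invoke this reduction on each $\Alg_t$ to replace the mechanism by a coupled variant whose PLRV is uniformly bounded by $\eps$, paying a total-variation cost of at most $\delta$ per round, and $T\delta$ in aggregate by a union bound over rounds.

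Second, I would bound the mean of the PLRV for a single $\eps$-bounded mechanism. The key inequality is that if two distributions $P,Q$ satisfy $|\ln(P(y)/Q(y))| \le \eps$ pointwise, then $\expec_{y\sim P}[Z(y)] = \mathrm{KL}(P\|Q) \le \eps(e^{\eps}-1)/(e^{\eps}+1) \le \eps(e^{\eps}-1)$. This follows from a two-point reduction: among all pairs $(P,Q)$ with bounded log-ratio, the KL is maximized by a Bernoulli pair with ratios $e^{\pm\eps}$, and the bound is obtained by direct computation. For the adaptive composition $\Alg(\DD) = \Alg_T(\DD,\Alg_{T-1}(\DD,\ldots))$, the PLRV decomposes as $\sum_{t=1}^T Z_t$, where $Z_t$ is the PLRV of $\Alg_t$ conditional on the previously produced outputs $y_1,\dots,y_{t-1}$. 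By the reduction above, each conditional $Z_t$ is bounded in $[-\eps,\eps]$ and has conditional mean at most $\eps(e^{\eps}-1)$, so $M_t := \sum_{s \le t} (Z_s - \expec[Z_s \mid y_{<s}])$ is a martingale with bounded differences.

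Third, I would apply Azuma--Hoeffding to $M_T$: with probability at least $1-\delta'$ over the draw of outputs, $M_T \le \eps\sqrt{2T \ln(1/\delta')}$. Combining with the mean bound yields
\[
\sum_{t=1}^T Z_t \;\le\; T\eps(e^{\eps}-1) \;+\; \eps\sqrt{2T\ln(1/\delta')} \;=:\; \eps'
\]
on an event of probability at least $1-\delta'$ under the truncated (bounded-PLRV) mechanisms. Translating back to the original mechanisms via the pointwise coupling costs another $T\delta$ in total variation. The standard conversion lemma from bounded PLRV to approximate DP then gives, for any measurable $S$, $\pr(\Alg(\DD)\in S) \le e^{\eps'}\pr(\Alg(\DD')\in S) + (T\delta + \delta')$, which is exactly the claimed $(\eps',T\delta+\delta')$-DP statement. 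The main obstacle I would expect is the careful bookkeeping of the $\delta$-slack under adaptive composition: the ``bad event'' in round $t$ may depend on the history $y_{<t}$, so one must be precise about unioning these events along sample paths rather than over a fixed product distribution, and about verifying that the martingale structure survives this conditioning.
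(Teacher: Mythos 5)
The paper does not prove this lemma at all: it is imported verbatim from the differential privacy literature (Theorem 3.20 of the cited Dwork--Roth monograph) and used as a black box, so there is no in-paper argument to compare against. Your outline is, in substance, the standard Dwork--Rothblum--Vadhan proof of advanced composition, and it is essentially correct: the reduction from $(\eps,\delta)$-DP to a coupled mechanism with pointwise-bounded privacy loss (costing $\delta$ in total variation per round, hence $T\delta$ overall), the bound $\mathrm{KL}(P\|Q)\le\eps(e^{\eps}-1)$ for pointwise $\eps$-bounded log-ratios, and the Azuma--Hoeffding step for the martingale $M_t$ with conditional increments supported in an interval of length $2\eps$ (which is what yields the constant $\sqrt{2T\ln(1/\delta')}$ rather than a larger one) are exactly the three ingredients of the textbook argument. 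The one step you should not gloss over is the first: the ``equivalence'' between $(\eps,\delta)$-DP and a $\delta$-mass excision with two-sided pointwise ratio bounds is not literally an iff at the stated parameters; the correct tool is the approximate max-divergence characterization (Dwork--Roth Lemma 3.17), which constructs distributions $P',Q'$ at total-variation distance at most $\delta$ from $P,Q$ with $D_\infty(P'\|Q')\le\eps$ in both directions, and this must be applied conditionally on the history $y_{<t}$ in each round, exactly the bookkeeping issue you flag at the end. With that lemma substituted for the informal equivalence, your proof goes through.
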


\paragraph{Privacy Amplification by Subsampling}
\begin{lemma}[\cite{ullman2017}]
\label{lem: amp by subsampling}
Let $\mathcal{M}: \ZZ^{M} \to \XX$ be $(\eps, \delta)$-DP. Let $\mathcal{M}': \ZZ^N \to \XX$ that first selects a random subsample $\DD'$ of size $M$ from the data set $\DD \in \ZZ^N$ and then outputs $\mathcal{M}(\DD')$. Then, $\mathcal{M}'$ is $(\eps', \delta')$-DP, where $\eps' = \frac{(e^{\eps} - 1)M}{N}$ and $\delta' = \frac{\delta M}{N}$. 
\end{lemma}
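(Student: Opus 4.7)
The plan is a coupling argument that isolates the rare event in which the random subsample includes the differing user. Fix any two neighboring datasets $\DD,\DD'\in\ZZ^N$ that differ only at index $i$. Couple the two runs of $\mathcal{M}'$ to use the same uniformly random size-$M$ subset $S\subset[N]$, and note that $\Pr[i\in S]=M/N=:p$. Conditioning on whether $i\in S$ gives two regimes: (a) when $i\notin S$ (probability $1-p$), the subsamples coincide, $\DD_S=\DD'_S$, so the inner mechanism produces identically distributed outputs; (b) when $i\in S$ (probability $p$), the subsamples $\DD_S,\DD'_S\in\ZZ^M$ differ in exactly one entry, so the $(\eps,\delta)$-DP hypothesis on $\mathcal{M}$ yields
\[
\Pr[\mathcal{M}(\DD_S)\in E\mid S] \;\le\; e^\eps\,\Pr[\mathcal{M}(\DD'_S)\in E\mid S] + \delta
\]
for every measurable $E\subset\XX$.

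Next, I would take expectation over $S$ and split the resulting mixture. Writing $\alpha:=\Pr[\mathcal{M}(\DD_S)\in E\mid i\in S]$, $\alpha':=\Pr[\mathcal{M}(\DD'_S)\in E\mid i\in S]$, and $\gamma:=\Pr[\mathcal{M}(\DD_S)\in E\mid i\notin S]=\Pr[\mathcal{M}(\DD'_S)\in E\mid i\notin S]$, case (b) gives $\alpha \le e^\eps\alpha' + \delta$, which upon substitution into $\Pr[\mathcal{M}'(\DD)\in E] = p\alpha + (1-p)\gamma$ and $\Pr[\mathcal{M}'(\DD')\in E] = p\alpha' + (1-p)\gamma$ yields
\[
\Pr[\mathcal{M}'(\DD)\in E] \;\le\; \Pr[\mathcal{M}'(\DD')\in E] + p(e^\eps-1)\alpha' + p\delta.
\]
Using $\alpha'\le 1$ and the elementary inequality $1+x\le e^x$ with $x = p(e^\eps-1)$ then packages the additive slack into the multiplicative DP form, producing $\eps' = p(e^\eps-1) = (e^\eps-1)M/N$ and $\delta'=p\delta = \delta M/N$. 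The reverse inequality (swapping the roles of $\DD$ and $\DD'$) follows by an identical calculation, closing both directions of the DP statement.

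The main technical obstacle is the clean conversion from the additive bound $p(e^\eps-1)\alpha'+p\delta$ to a multiplicative $(\eps',\delta')$-DP statement; a naive use of $\alpha'\le 1$ is loose precisely when $\alpha'$ is much larger than $\Pr[\mathcal{M}'(\DD')\in E]$. Resolving this step cleanly requires either (i) the observation that $(\eps,\delta)$-DP of $\mathcal{M}$ on \emph{all} neighbors of $\DD_S$---not only $\DD'_S$---forces $\alpha'$ to be comparable to $\gamma$, via the secondary coupling that swaps $i$ for a uniformly chosen $j\notin S$ (note that this swap produces a uniformly random $S'\not\ni i$ whose corresponding subsample is a neighbor of $\DD_S$ in $\ZZ^M$); or (ii) directly invoking the tight subsampling analysis from~\cite{balle18}, which yields the multiplicative amplification rate $\log(1+p(e^\eps-1))\le p(e^\eps-1)$ together with the $p\delta$ privacy-loss inflation. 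I would follow route (ii) to keep the bookkeeping short, noting that the final rate $\eps'=(e^\eps-1)M/N$ stated in the lemma is the standard relaxation of the tight rate via $\log(1+x)\le x$.
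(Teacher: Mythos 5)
The paper does not actually prove this lemma: it is imported verbatim from the literature (cited to Ullman's notes), so there is no in-paper argument to compare against. Judged on its own terms, your sketch has the right skeleton and, importantly, you correctly diagnose the one genuine difficulty: the naive conditioning on $i\in S$ gives only the additive bound $\pr[\mathcal{M}'(\DD)\in E]\le \pr[\mathcal{M}'(\DD')\in E]+p(e^{\eps}-1)\alpha'+p\delta$, and bounding $\alpha'\le 1$ does \emph{not} yield an $(\eps',\delta')$-DP statement when $\pr[\mathcal{M}'(\DD')\in E]$ is small. Your route (i) is the standard repair, but as written it stops one step short of closing. The missing piece is the following chain: the swap coupling gives $\alpha\le e^{\eps}\min\{\alpha',\gamma\}+\delta=\min\{\alpha',\gamma\}+(e^{\eps}-1)\min\{\alpha',\gamma\}+\delta$, and then the elementary fact that a minimum is dominated by any convex combination, $\min\{\alpha',\gamma\}\le p\alpha'+(1-p)\gamma=\pr[\mathcal{M}'(\DD')\in E]$, is applied to \emph{both} occurrences of the minimum. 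This yields $\pr[\mathcal{M}'(\DD)\in E]\le\bigl(1+p(e^{\eps}-1)\bigr)\pr[\mathcal{M}'(\DD')\in E]+p\delta$, and $1+x\le e^{x}$ finishes. Saying that DP on all neighbors ``forces $\alpha'$ to be comparable to $\gamma$'' gestures at this but does not by itself convert the additive slack $p(e^{\eps}-1)\alpha'$ into a multiple of the mixture $p\alpha'+(1-p)\gamma$; the $\min$-versus-mixture inequality is where that happens. Your route (ii) --- invoking the tight subsampling theorem of Balle et al. --- is not a proof of the lemma so much as a citation of an equivalent statement, which is circular as a standalone argument but is, in fairness, exactly how the paper itself treats the result.
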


\paragraph{AboveThreshold:}
AboveThreshold algorithm~\cite{dwork2014} which is a key tool in differential privacy to identify whether there is a query $q_i: \calZ \to \R$ in a stream of queries $q_1,\dots,q_T$ that is above a certain threshold $\Delta$. 
The $\AboTh$ Algorithm~\ref{alg:mean_est_with_AT} has the following guarantees:

\begin{algorithm2e}
\caption{$\AboTh$}
\label{alg:mean_est_with_AT}
{\bf Input:} Dataset $\calD = (Z_1,\dots,Z_n) $, threshold $\Delta \in \R$, privacy parameter $\epsilon$, sequence of $T$ queries $q_1,\cdots,q_T : \calZ^n \to \R$, each with $\ell_1$-sensitivity $1$\;
Let $\hat{\Delta}:= \Delta + \Lap(\frac{2}{\epsilon})$\;
\For{$t=1$ to $T$}
{
Receive a new query $q_t: \calZ^n \to \R$ \;
Sample $\nu_i \sim \Lap(\frac{4}{\epsilon})$\;
\If{$q_t(\calD)+\nu_i \geq \hat{\Delta}$}
{
{\bf Output:} $a_t=\top$\;
{\bf Halt}\;
}
\Else{
{\bf Output:} $a_t=\bot$\;
}
}
\end{algorithm2e}

\begin{lemma}[\cite{dwork2014}, Theorem 3.24]
\label{thm:Above_Threshold}
    Let $\gamma > 0$ and $\alpha=\frac{8\log(2T/\gamma)}{\epsilon}$, $k \in [T+1]$. $\AboTh$ is $(\epsilon,0)$-DP. Moreover, with probability at least $1 - \gamma$, for all $t \le k$, we have:
    \begin{itemize}
        \item if $a_t = \top$, then $q_t(\DD) \geq \Delta - \alpha$; and
        \item if $a_t = \bot$, then $q_t(\DD) \le \Delta + \alpha$.
    \end{itemize} %
\end{lemma}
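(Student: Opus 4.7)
The plan is to handle the two conclusions separately: privacy via a coupling/noise-shift argument on neighboring datasets, and accuracy via Laplace tail bounds together with a union bound. The key structural observation is that since the algorithm halts the first time it outputs $\top$, the entire transcript can be encoded by an index $t^\star \in [T+1]$ (with $t^\star=T+1$ meaning ``all $\bot$''). So it suffices to bound the probability of each such transcript and to verify the accuracy claim pointwise on a good event.

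\emph{Privacy ($\epsilon$-DP).} Fix neighboring datasets $\DD, \DD'$; since each $q_t$ has $\ell_1$-sensitivity $1$, we have $|q_t(\DD)-q_t(\DD')| \le 1$ for all $t$. Write $\rho := \hat{\Delta}-\Delta \sim \Lap(2/\epsilon)$ and $\nu_t \sim \Lap(4/\epsilon)$, and let $E_{t^\star}$ denote the event that the transcript is $(\bot,\dots,\bot,\top)$ with $\top$ at position $t^\star$. Then
\[
E_{t^\star} = \bigl\{\, \nu_t < \Delta + \rho - q_t(\DD) \text{ for all } t<t^\star,\ \nu_{t^\star} \ge \Delta + \rho - q_{t^\star}(\DD)\,\bigr\}.
\]
I will use the standard change-of-variables trick: in the integral representation of $\pr_\DD[E_{t^\star}]$, substitute $\rho \mapsto \rho+1$ and $\nu_{t^\star} \mapsto \nu_{t^\star}+2$ (other $\nu_t$ left unchanged). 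Using $q_t(\DD') \ge q_t(\DD)-1$ one checks that the shifted noise values satisfy the defining inequalities of $E_{t^\star}$ on $\DD'$: for $t<t^\star$, $\nu_t < \Delta+\rho-q_t(\DD) \le \Delta+(\rho+1)-q_t(\DD')$; and at $t^\star$, $\nu_{t^\star}+2 \ge \Delta+\rho+2-q_{t^\star}(\DD) \ge \Delta+(\rho+1)-q_{t^\star}(\DD')$. Comparing the Laplace densities, the shift in $\rho$ costs a factor $e^{\epsilon/2}$ and the shift in $\nu_{t^\star}$ costs $e^{2\cdot\epsilon/4}=e^{\epsilon/2}$, totaling $e^\epsilon$. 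Thus $\pr_\DD[E_{t^\star}] \le e^\epsilon \pr_{\DD'}[E_{t^\star}]$; summing over transcripts gives $(\epsilon,0)$-DP.

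\emph{Accuracy.} Consider the event $G := \{|\rho| \le \alpha/2\} \cap \bigcap_{t \le T}\{|\nu_t|\le \alpha/2\}$. The Laplace tail bound $\pr[|\Lap(b)|>x]=e^{-x/b}$ gives $\pr[|\rho|>\alpha/2] \le e^{-\alpha\epsilon/4} = (\gamma/2T)^2$ and $\pr[|\nu_t|>\alpha/2] \le e^{-\alpha\epsilon/8} = \gamma/(2T)$ for each $t$; a union bound over $t\le T$ yields $\pr[G] \ge 1-\gamma$. On $G$: if $a_t=\top$ then $q_t(\DD) \ge \hat{\Delta}-\nu_t = \Delta+\rho-\nu_t \ge \Delta-\alpha$, and if $a_t=\bot$ then $q_t(\DD) < \hat{\Delta}-\nu_t \le \Delta+\alpha$. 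This gives both bullets uniformly in $t \le k$.

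\emph{Main obstacle.} The subtle point is the privacy argument: one must be careful that the halting rule does not leak extra information and that the density comparison is valid across the \emph{entire} product measure over $(\rho,\nu_1,\dots,\nu_T)$. The clean way to handle this, which I would carry out explicitly, is to fix $t^\star$ and write $\pr_\DD[E_{t^\star}]$ as an iterated integral where $\rho$ is integrated first and the $\nu_t$ for $t < t^\star$ are integrated only over the halting-free region; the shift on $\nu_{t^\star}$ and $\rho$ leaves these inner integrals unchanged (up to the sign of the monotone inequalities above), so the outer density ratio is cleanly bounded by $e^\epsilon$. Everything else (the Laplace tail bounds and union bound) is routine.
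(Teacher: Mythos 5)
Your proof is correct and follows essentially the standard sparse-vector argument from the cited reference (Dwork--Roth, Theorem 3.23/3.24); the paper states this lemma by citation only and provides no proof of its own to compare against. Both your privacy argument (the measure-preserving shift $\rho\mapsto\rho+1$, $\nu_{t^\star}\mapsto\nu_{t^\star}+2$, costing $e^{\epsilon/2}\cdot e^{\epsilon/2}=e^{\epsilon}$ after verifying the event inclusion on the neighboring dataset) and your accuracy argument (Laplace tails with $|\rho|,|\nu_t|\le\alpha/2$ and a union bound over $t\le T$) check out with the stated constants.
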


\subsection{SubGaussian and Norm-SubGuassian Random Vectors}
\begin{definition}
Let $\zeta > 0$. We say a random vector $X$ is \textit{SubGaussian} ($\mathrm{SG}(\zeta)$) with parameter $\zeta$ if %
$\E[e^{\langle v,X-\E X\rangle}]\le e^{\|v\|^2\zeta^2/2}$ for any $v\in \R^d$.
Random vector $X\in \R^d$ is \textit{Norm-SubGaussian} with parameter $\zeta$ ($\nSG(\zeta)$) if 
$\mathbb{P}[\|X-\E X\|\ge t]\le 2e^{-\frac{t^2}{2\zeta^2}}$ for all $t > 0$.
\end{definition}

\begin{theorem}[Hoeffding-type inequality for norm-subGaussian, \cite{jin2019short}]
\label{thm:hoeffding_nSG}
    Let $X_1,\cdots,X_k\in\R^d$ be random vectors, and let $\F_i=\sigma(x_1,\cdot,x_i)$ for $i\in[k]$ be the corresponding filtration.
    Suppose for each $i\in[k]$, $X_i\mid \F_{i-1}$ is zero-mean $\nSG(\zeta_i)$. Then, there exists an absolute constant $c>0$, for any $\gamma>0$,
    \begin{align*}
        \mathbb{P}\left[\left\|\sum_{i\in[k]}X_i\right\|\ge c\sqrt{\log (d/\gamma)\sum_{i\in[k]}\zeta_i^2}\right]\le \gamma.
    \end{align*}
\end{theorem}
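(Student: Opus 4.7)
The plan is to reduce the vector tail bound to a family of scalar subGaussian martingale tail bounds---one for each direction in an $\epsilon$-net of the unit sphere---and then union bound. The reduction passes through a projection lemma converting the $\nSG(\zeta_i)$ hypothesis into a subGaussian MGF bound on $\langle v, X_i\rangle$ for every fixed unit $v$, after which the proof becomes a standard martingale Azuma--Hoeffding followed by a covering argument on the sphere.

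\textbf{Step 1 (projection is subGaussian).} Fix an arbitrary unit $v\in\R^d$. I will show
\[
\E\bigl[\exp(\lambda\langle v, X_i\rangle)\bigm|\F_{i-1}\bigr]\le \exp(C\lambda^2\zeta_i^2),\qquad \forall\lambda\in\R.
\]
Since $|\langle v, X_i\rangle|\le \|X_i\|$, the $\nSG(\zeta_i)$ tail yields the moment bound $\E[\|X_i\|^k\mid\F_{i-1}]\le (C\zeta_i)^k k^{k/2}$ via integration by parts of the tail. Expanding $\exp(\lambda\langle v,X_i\rangle)$ as a power series and bounding the $k$-th term by $|\lambda|^k\E[\|X_i\|^k\mid\F_{i-1}]/k!$ controls everything termwise; crucially, the zero-mean hypothesis $\E[X_i\mid\F_{i-1}]=0$ kills the $k=1$ term, so the leading contribution in $\lambda$ is quadratic, giving the claimed subGaussian MGF bound.

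\textbf{Steps 2--3 (martingale Chernoff and covering).} Writing $S_k=\sum_{i\le k}X_i$, iterate Step~1 along the filtration via the tower property to obtain $\E\exp(\lambda\langle v,S_k\rangle)\le \exp(C\lambda^2\sum_i\zeta_i^2)$ for every fixed unit $v$; a Chernoff bound then yields $\mathbb{P}(\langle v,S_k\rangle\ge t)\le \exp(-t^2/(4C\sum_i\zeta_i^2))$. To convert this directional tail into a norm tail, take a $1/2$-net $\calN\subset\bbS^{d-1}$ of size $|\calN|\le 5^d$; the standard approximation $\|S_k\|\le 2\max_{v\in\calN}\langle v,S_k\rangle$ combined with a union bound over $\calN$ gives
\[
\mathbb{P}(\|S_k\|\ge t)\le 2\cdot 5^d\exp\!\bigl(-t^2/(16C\textstyle\sum_i\zeta_i^2)\bigr).
\]
Setting the right-hand side equal to $\gamma$ and solving for $t$ yields the claimed bound, absorbing the $d\log 5$ contribution into the logarithmic factor $\log(d/\gamma)$ and the absolute constant $c$.

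\textbf{Main obstacle.} The subtle step is Step~1: the norm-subGaussian hypothesis $\nSG(\zeta)$ does not, in general, imply that every signed linear projection is subGaussian, because a tail bound on $\|X\|$ constrains only the \emph{magnitude} $|\langle v,X\rangle|$, not the sign. The argument must therefore exploit the zero-conditional-mean assumption crucially to cancel the linear term in the MGF expansion and recover a genuinely quadratic $\lambda^2$ dependence in the exponent---without this cancellation one only obtains a sub-exponential MGF, which would not compose to give the stated sub-Gaussian-type tail. Modulo this projection lemma, the remainder is a routine martingale MGF composition followed by a sphere-net argument.
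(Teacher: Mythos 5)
This theorem is imported from \cite{jin2019short} and the paper gives no proof of it, so your attempt can only be judged on its own merits. Steps 1 and 2 are fine: the projection lemma (zero mean plus a two-sided tail bound on $\|X_i\|$ implies a genuinely quadratic MGF bound on $\langle v, X_i\rangle$) is correct and is essentially the paper's Lemma~\ref{lm:inner_product_nSG}, and the tower-property composition along the filtration is standard. The problem is Step 3. A $1/2$-net of $\bbS^{d-1}$ has cardinality $e^{\Theta(d)}$, so the union bound costs an additive $\Theta(d)$ inside the logarithm: solving $2\cdot 5^d\exp(-t^2/(16C\sum_i\zeta_i^2))=\gamma$ gives $t\asymp\sqrt{(d+\log(1/\gamma))\sum_i\zeta_i^2}$, \emph{not} $\sqrt{\log(d/\gamma)\sum_i\zeta_i^2}$. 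Your final sentence, ``absorbing the $d\log 5$ contribution into the logarithmic factor $\log(d/\gamma)$,'' is where the proof breaks: $d$ cannot be absorbed into $\log d$ by adjusting an absolute constant. What you have proved is a strictly weaker statement with $\sqrt{d}$ in place of $\sqrt{\log d}$, and this weakening is not harmless in context --- it would inflate $\tau_i$ in Lemma~\ref{lemma 2} and hence the Gaussian noise in Algorithm~\ref{alg: phased ERM without regularization} by a $\sqrt{d}$ factor, destroying the excess risk bound of Theorem~\ref{thm: phased erm without reg}.

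The $\log d$ dependence is real and requires a tool that sees all directions simultaneously rather than a union bound over exponentially many of them. The proof in \cite{jin2019short} embeds each $X_i$ into its self-adjoint dilation $\begin{pmatrix}0 & X_i^\top\\ X_i & 0\end{pmatrix}\in\R^{(d+1)\times(d+1)}$, whose operator norm equals $\|X_i\|$, and then invokes matrix Azuma/Bernstein-type concentration (via Lieb's concavity theorem and the subadditivity of matrix cumulant generating functions along the filtration); there the dimension enters only through a trace over $d+1$ eigenvalues, which is what produces $\log d$ rather than $d$. An alternative fix is a moment-method argument bounding $\E\|S_k\|^{2p}$ for $p\asymp\log(d/\gamma)$, but the plain sphere-covering route you chose cannot reach the stated bound.
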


\begin{lemma}[\cite{jin2019short}]
\label{lm:inner_product_nSG}
There exists an absolute constant $c$, such that if $X$ is $\nSG(\zeta)$, then for any fixed unit vector $v\in \R^d$, 
$\langle v,X\rangle$ is $c\zeta$ norm-SubGaussian.
\end{lemma}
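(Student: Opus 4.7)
The plan is straightforward: reduce the tail bound for the scalar random variable $\langle v,X\rangle$ to the norm tail bound for $X$ via Cauchy--Schwarz. First I would unpack the definition of norm-subGaussian specialized to the scalar case: for a real-valued random variable $Y$, being $\nSG(c\zeta)$ just means $\pr[\,|Y - \E Y| \ge t\,] \le 2\exp(-t^2/(2c^2\zeta^2))$ for all $t > 0$, since in one dimension the only ``norm'' is absolute value. So the goal reduces to proving a subGaussian-style tail bound on the centered scalar $\langle v,X\rangle - \E\langle v,X\rangle$.

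Next, I would use linearity of expectation to write $\langle v,X\rangle - \E\langle v,X\rangle = \langle v, X - \E X\rangle$. Then Cauchy--Schwarz gives
\begin{equation*}
\bigl|\langle v,\,X - \E X\rangle\bigr| \;\le\; \|v\| \cdot \|X - \E X\| \;=\; \|X - \E X\|,
\end{equation*}
using that $v$ is a unit vector. Therefore for every $t > 0$,
\begin{equation*}
\pr\bigl[\,|\langle v,X\rangle - \E\langle v,X\rangle| \ge t\,\bigr] \;\le\; \pr\bigl[\,\|X - \E X\| \ge t\,\bigr] \;\le\; 2\exp\!\left(-\tfrac{t^2}{2\zeta^2}\right),
\end{equation*}
where the last inequality is the hypothesis that $X$ is $\nSG(\zeta)$. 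This shows $\langle v,X\rangle$ is $\nSG(\zeta)$ as a scalar random variable, so the claim holds with absolute constant $c = 1$ (and hence with any $c \ge 1$).

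There is no real obstacle here; the only things to be careful about are (i) confirming that the paper's definition of $\nSG(\zeta)$ for vectors specializes correctly to the scalar case when we plug in $d=1$, so that the conclusion of the lemma has the intended meaning, and (ii) noting that the centering commutes with the linear functional, which is immediate. The slack constant $c$ in the statement is there only to absorb any notational looseness; the proof above yields the sharp constant $c=1$.
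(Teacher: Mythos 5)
Your proof is correct. The paper does not prove this lemma itself --- it is imported from \cite{jin2019short} --- but your Cauchy--Schwarz reduction ($|\langle v, X-\E X\rangle| \le \|v\|\,\|X-\E X\| = \|X-\E X\|$, so the scalar tail event is contained in the norm tail event) is exactly the standard argument and establishes the statement as written, with $c=1$; the slack constant $c$ in the cited version exists because \cite{jin2019short} also phrases the conclusion in the moment-generating-function sense of $\mathrm{SG}$, which requires a tail-to-MGF conversion that loses an absolute constant, but the tail-bound (norm-subGaussian) form you prove is precisely what the paper feeds into Theorem~\ref{thm:hoeffding_nSG} in the proof of Lemma~\ref{lemma 2}.
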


\section{Proof of~\cref{thm: phased erm without reg}}
\label{app: proof of linear time thm}

\begin{theorem}[Formal statement of \cref{thm: phased erm without reg}]
\label{thm: phased erm without reg - appendix}
Suppose $n^{1-q} \geq (100/(1 -  1/2^q)) \log(n/\delta)/\eps$ for some small $q > 0$, and $m \leq n^J$ for some large $J > 0$. Choose $p = J + 3/2$ and 
$\eta = R/(L \sqrt{d m n \eps})$. 
in Algorithm~\ref{alg: phased ERM without regularization}. Then, Algorithm~\ref{alg: phased ERM without regularization} is $(\eps, \delta)$-user-level DP and achieves excess risk \[
\expec F(x_l) - F^* \leq LR \cdot \widetilde{O}\left(\frac{1}{\sqrt{n m \eps}} + \frac{\sqrt{d \log(1/\delta)}}{\sqrt{n} \eps \sqrt{m}} \right),
\]
using $nm$ gradient evaluations, provided $\beta \leq (L/R) \sqrt{d m n \eps}$. 
\end{theorem}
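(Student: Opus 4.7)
The plan is to establish privacy, utility, and runtime separately. For privacy, I would argue phase-by-phase. In phase $i$, the concentration score $s_i(\tau_i)$ has $\ell_1$-sensitivity $O(1)$ in one user's data (replacing a user changes only the unique $\tilde{x}_{i,j}$ in whose group they appear), so the Laplace noise in line 11 makes $\widehat{s}_i(\tau_i)$ an $(\eps/4)$-DP release. Conditional on the high-probability event that $|\mathrm{Lap}(20/\eps)| \le O(\log(n/\delta)/\eps)$ and on the output $\widehat{s}_i(\tau_i) \ge 4C/5$, the choice $C \gtrsim \log(n/\delta)/\eps$ forces $s_i(\tau_i) \ge 2C/3$, i.e., at least $2C/3$ of the $\tilde{x}_{i,j}$ are mutually $\tau_i$-close. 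Following the argument of~\cite[Lemma 3.5]{asi2023user}, the soft inclusion rule for $\calS_i$ then guarantees that changing one user shifts $\tilde{x}_i$ by at most $\widetilde{O}(\tau_i/C)$ with probability $\ge 1 - \delta/(2l)$, and the Gaussian perturbation with $\sigma_i = \widetilde{\Theta}(\tau_i/(\eps C))$ yields $(\eps',\delta')$-DP per phase. Because the subsets $D_i$ are drawn disjointly from $\calD$, parallel composition across phases plus advanced composition of the per-phase score and output releases delivers the overall $(\eps,\delta)$-user-level DP guarantee.

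For utility, the central step is to show that, with high probability over the data and the algorithm's randomness, no iterate is ever removed, i.e., $|\calS_i| = C$ for every $i \in [l]$. Here I would invoke~Lemma~\ref{lemma 2}: since the $C$ subgroups $D_{i,j}$ are disjoint and i.i.d., a union bound over the $\binom{C}{2}$ pairs yields $\|\tilde{x}_{i,j} - \tilde{x}_{i,j'}\| \lesssim \eta_i L \sqrt{T_i \log(d T_i C/\zeta)} \le \tau_i$ simultaneously for all $j,j'$ with probability $\ge 1-\zeta$ (using $\eta_i \le 1/\beta$, which follows from the hypothesis $\beta \le (L/R)\sqrt{dmn\eps}$). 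On this event $s_i(\tau_i) = C$, so $\widehat{s}_i(\tau_i) \ge 4C/5$ holds w.h.p.\ by the same Laplace concentration, and every score $h_{i,j}=C \ge 2C/3$ so $p_{i,j}=1$ and $|\calS_i|=C$. Conditioning on this clean event, $\tilde{x}_i$ is the average of $C$ independent one-pass SGD runs on $n_i m$ fresh i.i.d.\ samples initialized at $x_{i-1}$, so the standard one-pass SGD guarantee bounds $\E[F(\tilde{x}_i) - F(x^*)]$ in terms of the initial distance $\|x_{i-1}-x^*\|$, step size $\eta_i$, and sample count $n_i m$. Adding the Gaussian noise contribution $\E\|\zeta_i\|^2 \lesssim d\sigma_i^2$ and plugging into the localization recursion of~\cite{fkt20}---whose geometric decay of $n_i$, $\eta_i$, and $\sigma_i$ is engineered to make the per-phase errors telescope---gives the stated excess risk bound.

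The runtime is immediate: phase $i$ performs one pass of SGD over the $n_i m$ samples in $D_i$ (split across groups, but still a single gradient per sample), so the total gradient count is $\sum_{i=1}^{l} n_i m \le nm$ because $n_i = (1-(1/2)^q)n/2^{qi}$ decays geometrically. The parameter choices $p = J + 3/2$, $\eta = R/(L\sqrt{dmn\eps})$, and $q$ small enough that $n^{1-q} \gtrsim \log(n/\delta)/\eps$ are precisely what is needed to (i) ensure $\eta \le 1/\beta$ under the smoothness hypothesis, (ii) keep $C \gtrsim \log(n/\delta)/\eps$ users per group so outlier detection is reliable, and (iii) force the geometric series over phases---including the polynomial slack $2^{pi}$ that absorbs $m \le n^J$---to collapse into the target rate.

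The main obstacle I anticipate is controlling the interaction between the outlier-removal subroutine and the localization recursion. The sensitivity bound $\widetilde{O}(\tau_i/C)$ relies on the clean event $|\calS_i|=C$, but $\tau_i$ itself scales with $\sqrt{T_i}$ through Lemma~\ref{lemma 2}, and thus $\sigma_i \propto \tau_i/(\eps C)$ inherits this factor. Verifying that the Gaussian noise $\|\zeta_i\|$, after summing across phases with the localization weights, still matches $\widetilde{O}(LR\sqrt{d}/(\eps \sqrt{nm}))$ without producing any spurious polynomial factors in $m$, $n$, or $d$ is the delicate calculation; the role of the $m \le n^J$, $p = J + 3/2$ interplay is precisely to swallow the $\sqrt{T_i}=\sqrt{N_i m}$ dependence into the geometric decay $\eta_i = \eta/2^{pi}$.
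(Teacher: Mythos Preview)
Your proposal is correct and follows essentially the same approach as the paper: parallel composition over disjoint phases, with per-phase privacy established via the $\eps/O(1)$-DP Laplace release of the concentration score combined with the \cite[Lemma 3.5]{asi2023user} coupling bound to control the sensitivity of $\tilde{x}_i$ by $\widetilde{O}(\tau_i/C)$ before Gaussian perturbation; and utility via Lemma~\ref{lemma 2} to certify $|\calS_i|=C$ with high probability, then plugging the one-pass SGD bound into the \cite{fkt20} localization recursion. One small note: you do not need advanced composition within a phase (there are only two releases---the score and the output), and the paper instead uses a direct conditioning argument on the events $E_i\cap E_i'$ and $a_i=a_i'$, but this is a stylistic rather than substantive difference.
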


The gradient complexity is clear by inspection of the algorithm: The number of stochastic gradients computed during the algorithm is \[
\sum_{i=1}^l T_i C = \sum_{i=1}^l N_i m C = \sum_{i=1}^l n_i m \leq n m. 
\] 

Next, we will prove the privacy statement in~\cref{thm: phased erm without reg - appendix}. 
The following lemma ensures that if the Laplace noise added in~Algorithm~\ref{alg: phased ERM without regularization} is sufficiently small and outlier detection succeeds, then the sensitivity of $\tilde{x}_i$ is $\widetilde{O}(\tau_i/C)$ with high probability.  
\begin{lemma}{\cite[Slight modification of Lemma 3.5]{asi2023user}}
\label{lemma 3.5}
Let $i \in [l]$ and $\zeta > 0$. Suppose $\DD_i$ and $\DD_i'$ differ in the data of one user and we are in phase $i$ of Algorithm~\ref{alg: phased ERM without regularization}. Let $E_i$ be the event that the Laplace noise added to the concentration score $s_i(\tau_i)$ for $\DD_i$ has absolute value less than $2C/15$ and define $E'_i$ similarly for data $\DD_i'$. Denote $a_i := \ind(\widehat{s}_i(\tau_i) \geq 4C/5)$ and $a'_i := \ind(\widehat{s}'_i(\tau_i) \geq 4C/5)$, where $\widehat{s}_i(\tau_i)$ and $\widehat{s}'_i(\tau_i)$ are the noisy concentration scores that we get when running phase $i$ of Algorithm~\ref{alg: phased ERM without regularization} on neighboring $\DD_i$ and $\DD'_i$, respectively. Then, conditional on $a_i = a_i'$ and $E_i \bigcap E_i'$, there is a coupling $\Gamma_i$ over $\tilde{x}_i$ and $\tilde{x}_i'$ such that for $(y_i, y'_i)$ drawn from $\Gamma_i$, we have 
\[
\|y_i - y'_i\| \lesssim \frac{\tau_i \log(1/\zeta)}{C}
\]
with probability at least $1 - \zeta$. 
\end{lemma}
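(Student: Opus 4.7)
The plan is to construct an explicit coupling $\Gamma_i$ between the runs of phase $i$ on $\DD_i$ and $\DD_i'$ and then bound the resulting distance conditional on $E_i \cap E_i'$ and $a_i = a'_i$. Since $\DD_i, \DD_i'$ differ in a single user, that user lies in a unique subset $D_{i,j^*}$ for some $j^* \in [C]$, while $D_{i,j} = D'_{i,j}$ for every $j \neq j^*$. I would couple the internal SGD randomness identically across the two runs so that $\tilde{x}_{i,j} = \tilde{x}'_{i,j}$ for all $j \neq j^*$, while $\tilde{x}_{i,j^*}$ and $\tilde{x}'_{i,j^*}$ are permitted to differ arbitrarily. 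The interesting case is $a_i = a_i' = 1$ (otherwise both runs output $0$ deterministically).

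Next, I would convert the conditioning into deterministic cluster structure. The Laplace bound $|\cdot| < 2C/15$ together with $\widehat{s}_i(\tau_i), \widehat{s}'_i(\tau_i) \geq 4C/5$ yields $s_i(\tau_i), s'_i(\tau_i) \geq 2C/3$. A simple averaging argument produces a ``core'' index $j_0$ with $G := \{j : \|\tilde{x}_{i,j} - \tilde{x}_{i,j_0}\| \leq \tau_i\}$ of size $|G| \geq 2C/3$; the triangle inequality then forces $h_{i,j} \geq |G| \geq 2C/3$ and therefore $p_{i,j} = 1$ for every $j \in G$. Hence $|\calS_i| \geq 2C/3$ deterministically, and the same holds on the primed side. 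Moreover, any $j \in \calS_i$ has $h_{i,j} \geq C/2$, which forces some $j' \in G$ within $2\tau_i$ of $\tilde{x}_{i,j}$, so $\|\tilde{x}_{i,j} - \tilde{x}_{i,j_0}\| \leq 3\tau_i$ for every $j \in \calS_i$ (and for every $j \in \calS_i'$).

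The third step is to couple the inclusion decisions. For $j \neq j^*$, the scores $h_{i,j}$ and $h'_{i,j}$ differ only through the single indicator $\ind(\|\tilde{x}_{i,j} - \tilde{x}_{i,j^*}\| \leq 2\tau_i)$, so $|h_{i,j} - h'_{i,j}| \leq 1$; since $p$ is $6/C$-Lipschitz in $h$, we get $|p_{i,j} - p'_{i,j}| \leq 6/C$. Maximal coupling per index then gives $\pr[I_j \neq I'_j] \leq 6/C$ for each $j \neq j^*$, and a Chernoff bound yields $K := |\{j \neq j^* : I_j \neq I'_j\}| \lesssim \log(1/\zeta)$ with probability at least $1 - \zeta$.

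Finally, I would bound $\|\tilde{x}_i - \tilde{x}'_i\|$ by an algebraic rearrangement. Writing
\[
\tilde{x}_i - \tilde{x}'_i = \frac{1}{|\calS_i|}\bigl(|\calS_i|\tilde{x}_i - |\calS'_i|\tilde{x}'_i\bigr) + \frac{|\calS'_i|-|\calS_i|}{|\calS_i|}\,\tilde{x}'_i,
\]
splitting the first bracket into $j \neq j^*$ contributions (where $\tilde{x}_{i,j} = \tilde{x}'_{i,j}$) and the single $j = j^*$ term, and adding/subtracting the cluster center $\bar{x} := \tilde{x}_{i,j_0}$, all $\bar{x}$-terms telescope into $\frac{|\calS_i|-|\calS'_i|}{|\calS_i|}(\bar{x}-\tilde{x}'_i)$, bounded by $3(K+1)\tau_i/|\calS_i|$ since $\bigl||\calS_i|-|\calS'_i|\bigr| \leq K+1$ and $\|\bar{x}-\tilde{x}'_i\| \leq 3\tau_i$. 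The residual pieces are $\frac{1}{|\calS_i|}\sum_{j\neq j^*}(I_j - I'_j)(\tilde{x}_j-\bar{x})$, bounded by $3K\tau_i/|\calS_i|$, and two single-index contributions at $j^*$, each $O(\tau_i/|\calS_i|)$. Using $|\calS_i| \geq 2C/3$ yields $\|\tilde{x}_i - \tilde{x}'_i\| \lesssim \tau_i(K+1)/C = \widetilde{O}(\tau_i \log(1/\zeta)/C)$. The main obstacle is precisely this last step: naively each of $\tilde{x}_i, \tilde{x}'_i$ only satisfies $\|\tilde{x}_i - \bar{x}\| = O(\tau_i)$, and the triangle inequality gives only the useless bound $O(\tau_i)$; the factor $1/C$ must be recovered by the careful telescoping above, which exploits that the \emph{difference} depends only on the $O(K+1)$ disagreeing indices, each contributing $O(\tau_i/|\calS_i|)$ after averaging. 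This $1/C$ gain is exactly what allows the Gaussian noise in line~\ref{ln:add_gaussian} to scale with $\tau_i/C$ rather than $\tau_i$.
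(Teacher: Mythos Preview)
The paper does not give its own proof of this lemma: it is quoted as a slight modification of \cite[Lemma 3.5]{asi2023user} and simply invoked in the privacy analysis. Your proposal is a correct reconstruction of the FriendlyCore-style coupling argument underlying that reference, specialized to the iterates $\tilde{x}_{i,j}$ of Algorithm~\ref{alg: phased ERM without regularization}: identify the single group $j^*$ affected by the differing user, couple the SGD randomness so that $\tilde{x}_{i,j}=\tilde{x}'_{i,j}$ for $j\neq j^*$, use $E_i\cap E_i'$ with $a_i=a_i'=1$ to force $s_i(\tau_i)\ge 2C/3$ and hence a deterministic ``core'' $G$ of size $\ge 2C/3$, maximally couple the Bernoulli inclusions (whose success probabilities differ by at most $6/C$ for $j\neq j^*$), and control the number $K$ of disagreements by a Chernoff bound. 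The telescoping at the end, where you subtract the center $\bar{x}$ so that only the $O(K+1)$ disagreeing indices contribute, is exactly the mechanism that converts the naive $O(\tau_i)$ bound into $O(\tau_i/C)$; this matches the role the lemma plays in the paper. One small point worth making explicit is that \emph{all} points in $\calS_i\cup\calS_i'$ (including $j^*$ on either side) lie within $3\tau_i$ of the \emph{same} center $\bar{x}$; this follows because any $j$ with $h_{i,j}\ge C/2$ (respectively $h'_{i,j}\ge C/2$) must have a $2\tau_i$-neighbor in $G$ by a pigeonhole count, and for $j\neq j^*$ the primed and unprimed points coincide. With that clarification your argument goes through and gives the stated bound.
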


With Lemma~\ref{lemma 3.5} in hand, we proceed to prove that~Algorithm~\ref{alg: phased ERM without regularization} is $(\eps, \delta)$-user-level DP: 
\begin{proof}[Proof of~\cref{thm: phased erm without reg - appendix} - Privacy]
\textbf{Privacy:}  Since the $\{D_i\}_{i=1}^l$ are disjoint, parallel composition of DP~\cite{mcsherry2009privacy} implies that it suffices to prove that phase $i$ is $(\eps, \delta)$-user-level-DP for any fixed $i$ and fixed $x_{i-1}$. To that end, let $\DD$ and $\DD'$ be adjacent datasets differing in the data of one user, say $Z_{i,1} \neq Z_{i,1}'$ without loss of generality. We will show that the outputs of phase $i$ when run on $\DD$ and $\DD'$, $x_i := x_i(\DD)$ and $x'_i := x_i(\DD')$ respectively, are $(\eps, \delta)$-indistinguishable. 

Let $E_i$ be the event that the Laplace noise added in phase $i$ (for data set $\DD$) has absolute value less than $2C/15$ and define $E'_i$ analogously for data set $\DD'$. Note that $E_i$ and $E'_i$ are independent and $\mathbb{P}(E_i, E'_i) \geq 1 - \delta/10e^{\eps}$. Denote $\zeta := \delta/10e^{\eps}$. Let $a_i := \ind(\widehat{s}_i(\tau_i) \geq 4C/5)$ and $a'_i := \ind(\widehat{s}'_i(\tau_i) \geq 4C/5)$, where $\widehat{s}_i(\tau_i)$ and $\widehat{s}'_i(\tau_i)$ are the noisy concentration scores that we get when running phase $i$ of Algorithm~\ref{alg: phased ERM without regularization} on neighboring $\DD_i$ and $\DD'_i$, respectively. 
By Lemma~\ref{lemma 3.5} and our choice of $C$, we know that, conditional on $E_i \bigcap E'_i$ and on $a_i = a'_i$, 
there exists a coupling $\Gamma$ over $(\tilde{x}_i, \tilde{x}'_i)$ such that for $(y_i, y'_i)$ drawn from $\Gamma$, we have \begin{equation}
\label{eq: sens bound lemma 3.5}
\|y_i - y'_i \| \lesssim \frac{\tau_i \log(1/\zeta)}{C}
\end{equation}
with probability at least $1 - \zeta$. 

Note that the sensitivity of $s_i$ is less than or equal to $2$. Thus, by the privacy guarantees of the Laplace mechanism (Lemma~\ref{lem: laplace mech}), we have \begin{equation}
    \pr(a_i = b) \leq e^{\eps/4} \pr(a'_i = b) 
\end{equation}
for any $b \in \{0,1\}$. Further, this implies \begin{equation}
\label{eq: a_i lap bound}
    \pr(a_i = b, E_i) \leq e^{\eps/4} \left[\pr(a'_i = b, E_i') + \zeta \right].
\end{equation}

By the bound~\eqref{eq: sens bound lemma 3.5}, the privacy guarantee of the Gaussian mechanism (Lemma~\ref{lem: gauss mech}), our choice of $\sigma_i$,  and independence of the Laplace and Gaussian noises that we add in~Algorithm~\ref{alg: phased ERM without regularization}, we have \begin{equation}
\label{eq: gauss noise}
\pr(x_i \in \calO \mid E_i, a_i = 1) \leq e^{\eps/4} \pr(x_i' \in \calO \mid E'_i, a_i' = 1) + \frac{\delta}{n}  + \zeta,
\end{equation}
for any event $\calO \subset \XX$.

Moreover, since the algorithm halts and returns $x_i = 0$ if $a_i = 0$, we know that \begin{equation}
\label{eq: halt}
  \pr(x_i \in \calO \mid E_i, a_i = 0) = \pr(x_i' \in \calO \mid E'_i, a_i' = 0)
\end{equation}
for any event $\calO \subset \XX$. 

Therefore, \begin{align*}
    \pr(x_i \in \calO) &= \pr(x_i \in \calO \mid E_i) \pr(E_i) + \pr(x_i \in \calO \mid E_i^c) \pr(E_i^c) \\
    &\leq  \pr(x_i \in \calO \mid E_i, a_i = 1) \pr(E_i, a_i = 1) + \pr(x_i \in \calO \mid E_i, a_i = 0)\pr(E_i, a_i = 0) + \zeta \\
    &\stackrel{(i)}{\le} e^{\eps/4}\pr(x_i' \in \calO \mid E'_i, a_i' = 1) e^{\eps/4} \left[\pr(E'_i, a_i' = 1) + \zeta \right] \\
    &\;\;\;\; + \pr(x_i' \in \calO \mid E'_i, a_i' = 0) e^{\eps/4} \left[\pr(E'_i, a_i' = 0) + \zeta \right] + \zeta \\
    &\leq e^{\eps/2} \pr(x_i' \in \calO, E'_i) + \zeta\left(2 e^{\eps/2} + 1 \right) \\
    &\leq e^{\eps} \pr(x_i' \in \calO) + \delta,
\end{align*}
where $(i)$ follows from inequalities~\eqref{eq: a_i lap bound}, \eqref{eq: gauss noise}, and \eqref{eq: halt}. Thus, $x_i$ is $(\eps, \delta)$-user-level-DP. This completes the privacy proof. 
\end{proof}

Next, we turn to the excess risk proof. The following lemma is immediate from \cite[Lemma 4.5]{fkt20}: 
\begin{lemma}
\label{lem: FKT Lem4.3}
Let $\eta_i \leq 1/\beta$. Then, for any $y \in \XX$ and all $i,j$, we have
\[
\mathbb{E}[F(\tilde{x}_{i,j}) - F(y)] \le \frac{\mathbb{E}\|y - x_{i-1}\|^2}{\eta_i T_i} + 
\eta_i L^2. 
\]

\end{lemma}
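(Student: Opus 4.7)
The plan is to recognize that this is a direct application of the standard convergence analysis of one-pass projected stochastic gradient descent (SGD) on a convex, $L$-Lipschitz loss. The key observation enabling the clean bound is that within phase $i$ and subset $j$ of Algorithm~\ref{alg: phased ERM without regularization}, the data $D_{i,j}$ consists of fresh i.i.d. samples from $P$ (no sample is reused), so at every SGD step $t$ the stochastic gradient $g_t = \nabla f(x_t, z_t)$ satisfies $\mathbb{E}[g_t \mid x_t] = \nabla F(x_t)$, and by Assumption~\ref{ass: convex} we have $\|g_t\| \leq L$.

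Let $x_0, x_1, \ldots, x_{T_i}$ denote the iterates of the inner SGD call that produces $\tilde{x}_{i,j}$, with $x_0 = x_{i-1}$ and $\tilde{x}_{i,j} = \frac{1}{T_i}\sum_{t=0}^{T_i-1} x_t$. The main per-step inequality is obtained as usual: by the non-expansiveness of $\Pi_{\XX}$ applied to $x_{t+1} = \Pi_{\XX}(x_t - \eta_i g_t)$,
\[
\|x_{t+1} - y\|^2 \leq \|x_t - y\|^2 - 2\eta_i \langle g_t, x_t - y\rangle + \eta_i^2 \|g_t\|^2.
\]
Taking conditional expectation given $x_t$, using unbiasedness of $g_t$, convexity of $F$ (so that $\langle \nabla F(x_t), x_t - y \rangle \geq F(x_t) - F(y)$), and the bound $\|g_t\|\leq L$, I obtain
\[
\mathbb{E}[\|x_{t+1}-y\|^2 \mid x_t] \leq \|x_t - y\|^2 - 2\eta_i \bigl(F(x_t) - F(y)\bigr) + \eta_i^2 L^2.
\]

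Next I sum this over $t = 0, \ldots, T_i - 1$, take a total expectation to telescope the $\|x_t - y\|^2$ terms, drop the nonnegative $\mathbb{E}\|x_{T_i} - y\|^2$, and divide by $2\eta_i T_i$. Finally, I apply Jensen's inequality (convexity of $F$) to the average iterate $\tilde{x}_{i,j}$ to conclude
\[
\mathbb{E}[F(\tilde{x}_{i,j}) - F(y)] \leq \frac{\mathbb{E}\|x_{i-1} - y\|^2}{2\eta_i T_i} + \frac{\eta_i L^2}{2},
\]
which in fact is strictly stronger than the stated bound (by a factor of $2$), and therefore implies it.

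There is no real obstacle here: the argument is textbook, and the smoothness hypothesis $\eta_i \leq 1/\beta$ is not strictly used in this basic convex Lipschitz analysis, but it is assumed to match the form in \cite{fkt20}. The only point requiring a small amount of care is justifying unbiasedness of $g_t$ at every step, which rests on the one-pass nature of the SGD call on $D_{i,j}$ and the i.i.d. assumption on the data.
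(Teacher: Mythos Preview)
Your proof is correct and matches the paper's approach: the paper does not give an independent argument but simply states that the lemma is immediate from \cite[Lemma 4.5]{fkt20}, which is precisely the textbook projected one-pass SGD bound you have written out in full. Your observation that the hypothesis $\eta_i \le 1/\beta$ is not actually needed for this particular Lipschitz-convex bound is also correct.
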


The next novel lemma is crucial in our analysis: 
\begin{lemma}[Re-statement of Lemma~\ref{lemma 2}]
Assume $f(\cdot, z)$ is convex, $L$-Lipschitz, and $\beta$-smooth on $\XX$ with $\eta \leq 1/\beta$. Let $\tilde{x} \gets SGD(D, \eta, T, x_0)$ and $\tilde{y} \gets SGD(D', \eta, T, x_0)$ be two independent runs of projected SGD, where 
$D, D' \sim P^N$ are i.i.d. Then, with probability at least $1 - \zeta$, we have \[
\|\tilde{x} - \tilde{y}\| \lesssim \eta L\sqrt{T \log(dT/\zeta)}.
\]

\end{lemma}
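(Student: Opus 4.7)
Let $x_t, y_t$ denote the iterates of the two independent one-pass projected SGD runs on $D$ and $D'$, both initialized at $x_0$, and let $z_t \in D$, $z_t' \in D'$ be the fresh i.i.d.\ samples drawn at step $t$. Decompose the stochastic gradients as $\nabla f(x_t, z_t) = \nabla F(x_t) + \xi_t$ and $\nabla f(y_t, z_t') = \nabla F(y_t) + \xi_t'$. Since $f(\cdot,z)$ is $L$-Lipschitz, $\|\xi_t\|,\|\xi_t'\| \le 2L$ almost surely, and both are conditionally zero-mean with respect to the natural filtration $\F_t := \sigma(z_0,z_0',\ldots,z_{t-1},z_{t-1}')$. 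In particular, $e_t := \xi_t - \xi_t'$ is a bounded, conditionally zero-mean, $O(L)$-norm-subGaussian martingale difference, and the two trajectories are coupled through the common population gradient $\nabla F$.

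Using non-expansiveness of the Euclidean projection $\Pi_{\XX}$ and expanding the resulting square gives the recursion
\[
\|x_{t+1} - y_{t+1}\|^2 \;\le\; \|u_t\|^2 \;-\; 2\eta\,\langle u_t,\, e_t \rangle \;+\; \eta^2 \|e_t\|^2,
\]
where $u_t := (x_t - y_t) - \eta\bigl(\nabla F(x_t) - \nabla F(y_t)\bigr)$. Because $F$ is convex and $\beta$-smooth and $\eta \le 1/\beta$, the Hardt--Recht--Singer non-expansiveness of the gradient step yields $\|u_t\| \le \|x_t - y_t\|$. Telescoping from $x_0 = y_0$ and using $\|e_t\| \le 4L$ gives
\[
\|x_T - y_T\|^2 \;\le\; 2\eta\,|M_T| \;+\; 16\eta^2 L^2 T, \qquad M_T \;:=\; \sum_{t=0}^{T-1} \langle u_t, e_t\rangle,
\]
reducing the problem to a high-probability bound on the scalar martingale $M_T$.

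The final step is to control $|M_T|$ by a stopping-time argument (equivalently, induction on $t$ with a union bound). Set $R := C\eta L \sqrt{T \log(dT/\zeta)}$ for a sufficiently large absolute constant $C$, and define $\tau := \inf\{t : \|x_t - y_t\| > R\}$. On the stopped process $M_{t \wedge \tau}$, we have $\|u_t\| \le R$ for all $t < \tau$, so by Lemma~\ref{lm:inner_product_nSG} each scalar increment $\langle u_t, e_t\rangle$ is conditionally $O(LR)$-subGaussian. An Azuma--Hoeffding-type bound (equivalently, the scalar form of Theorem~\ref{thm:hoeffding_nSG}, where the $\log d$ enters through invoking the norm-subGaussian vector concentration on $\sum_t e_t$ in place of the scalar projection) gives $|M_{T \wedge \tau}| \lesssim L R \sqrt{T \log(dT/\zeta)}$ with probability at least $1-\zeta$. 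Substituting back yields $\|x_{T\wedge \tau} - y_{T\wedge \tau}\|^2 \lesssim C\,\eta^2 L^2 T \log(dT/\zeta) + \eta^2 L^2 T \le R^2$ once $C$ is large enough, which contradicts the definition of $\tau$ unless $\tau > T$. Thus $\|x_T - y_T\| \le R$ on the good event, and the same bound transfers to the averaged iterates $\tilde{x},\tilde{y}$ by the triangle inequality.

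The main obstacle is the circularity of the concentration argument: the subGaussian parameter of the martingale increment $\langle u_t, e_t\rangle$ is proportional to $\|u_t\| \le \|x_t - y_t\|$, which is precisely the quantity being bounded. The stopping-time (or equivalent inductive) argument breaks this circularity by pre-committing to the target radius $R$ and verifying self-consistency. A secondary subtlety is that the quadratic term $\eta^2 \sum_t \|e_t\|^2$ must be controlled via the deterministic Lipschitz bound rather than subGaussian tails, so that we do not accumulate extra logarithmic factors across the $T$ steps.
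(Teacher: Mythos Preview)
Your proposal is correct and follows essentially the same approach as the paper's proof: both expand $\|x_{t+1}-y_{t+1}\|^2$, use non-expansiveness of the projection together with the Hardt--Recht--Singer contraction $\|u_t\|\le\|x_t-y_t\|$ to telescope, and then control the martingale $\sum_t\langle u_t,e_t\rangle$ whose subGaussian parameter depends on $\|x_t-y_t\|$ itself. The paper breaks this circularity by explicit induction on $t$ with a $\zeta/T$ budget per step, while you use the equivalent stopping-time formulation (which you yourself note is the same); either way one obtains the uniform bound $\|x_t-y_t\|\le R$ for all $t\le T$, and the averaged-iterate statement follows. One minor remark: your parenthetical about where the $\log d$ factor comes from is a bit off---the martingale $M_T$ is scalar, so Azuma alone gives $\log(T/\zeta)$; the $\log d$ in the stated bound is simply slack (the paper applies Theorem~\ref{thm:hoeffding_nSG} to the scalar $a_i$, which for $d=1$ does not actually produce a $\log d$), so you need not contort the argument to recover it.
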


\begin{proof}
Let $g_t:=\nabla f(x_t, z_t)$ for $z_t$ drawn uniformly from $D$ without replacement and $g_t':= \nabla f(y_t, z'_t)$ for $z'_t$ drawn uniformly from $D'$ without replacement. Let $F(x) := \expec_{z \sim P}[f(x,z)]$. 

We will prove that $\|x_t - y_t\| \lesssim \eta L\sqrt{T \log(dT/\zeta)}$ with probability at least $1 - \zeta/t$ for all $t \in [T]$. Note that this implies the lemma. We proceed by induction. The base case, when $t=0$, is trivially true since $x_0 = y_0$. For the inductive hypothesis, suppose there is an absolute constant $c > 0$ such that with probability at least $1-t\zeta/T$, we have 
\begin{align*}
    \|x_{i}-y_i\|\le  c  \eta L\sqrt{i\cdot \log(dT/\zeta)} + 2\eta L,
\end{align*}
$\forall i \le t$. Then, for the inductive step, we have by non-expansiveness of projection onto convex sets, that
\begin{align}
\label{eq: thingy}
    \|x_{t+1}-y_{t+1}\|^2 \le &~ \|x_t-\eta g_t-(y_t-\eta g_t')\|^2 \nonumber \\
    =& ~ \|x_t-\eta \nabla F(x_t)-(y_t-\eta \nabla F(y_t))-\eta (g_t- \nabla F(x_t)-g_t'+\nabla F(y_t)   )\|^2 \nonumber\\
    =& ~ \|x_t-\eta\nabla F(x_t)-(y_t-\eta \nabla F(y_t))\|^2 \nonumber\\ 
    &~-2\eta \langle  x_t-\eta\nabla F(x_t)-(y_t-\eta \nabla F(y_t)),g_t- \nabla F(x_t)-g_t'+\nabla F(y_t)\rangle \nonumber\\
    &~+ \eta^2 \|g_t-\nabla F(x_t)-g_t'+\nabla F(y_t)\|^2 \nonumber \\
    \stackrel{(i)}{\le} & ~ \|x_t-y_t\|^2-2\eta \langle  x_t-\eta\nabla F(x_t)-(y_t-\eta \nabla F(y_t)),g_t- \nabla F(x_t)-g_t'+\nabla F(y_t)\rangle \nonumber \\
    &~+ 4\eta^2 L^2, 
\end{align}
where $(i)$ follows from the non-expansive property of gradient descent on smooth convex function for $\eta \le 1/\beta$~\cite{hardt16}. 

Define $a_t:=-2\eta \langle  x_t-\eta\nabla F(x_t)-(y_t-\eta \nabla F(y_t)),g_t- \nabla F(x_t)-g_t'+\nabla F(y_t)\rangle$.
By Inequality~\eqref{eq: thingy} and the inductive hypothesis, we obtain
\begin{align*}
    \|x_{t+1}-y_{t+1}\|^2\le 4\eta^2L^2t+\sum_{i=1}^{t}a_t.
\end{align*}
It remains to bound $\sum_{i=1}^t a_i$.
Note that $\E[a_i \mid a_1,\cdots,a_{i-1}]=0$, and by Lemma~\ref{lm:inner_product_nSG} we know there is a constant $c > 0$ such that $a_i$ is $\nSG(c \eta L \|x_i-y_i\|)$ for all $i$.
Hence by Theorem~\ref{thm:hoeffding_nSG}, we know
\begin{align*}
    \mathbb{P}\left[\left|\sum_{i=1}^{t}a_i\right|\ge c\eta L\sqrt{\log(dT/\gamma)\sum_{i\le t}\|x_i-y_i\|^2}\right]\le 1-\zeta/T.
\end{align*}
Conditional on the event that $\|x_{i}-y_{i}\|\le c\sqrt{\log(dT/\zeta)}\eta L\sqrt{i}$ for all $i \leq t$ (which happens with probability $1-t\zeta/T$ by the  inductive hypothesis), we know
\begin{align*}
    \mathbb{P}\left[\left|\sum_{i=1}^{t}a_i\right|\ge c^2(t+1)L^2\eta^2\log(dT/\zeta) \middle| \|x_i-y_i\|\le c\log(dT/\zeta)\eta L\sqrt{i},\forall i\le t\right]\le 1-\zeta/T.
\end{align*}
Hence we know
\begin{align*}
    \mathbb{P}\left[\|x_{t+1}-y_{t+1}\|^2\ge c^2\log(dT/\zeta)\eta^2L^2(t+1) \middle|  \|x_i-y_i\|\le c\log(dT/\zeta)\eta L\sqrt{i},\forall i\le t \right]\le 1-\zeta/T.
\end{align*}
Combining the above pieces completes the inductive step, showing that $\|x_{t+1}-y_{t+1}\|\le c\sqrt{(t+1)\log(dT/\zeta)}\eta L + 2\eta L$ with probability at least $1-(t+1)\zeta/T$.
This completes the proof.

\end{proof}

By combining Lemmas~\ref{lemma 2} and~\ref{lem: FKT Lem4.3} with the localization proof technique of~\cite{fkt20}, we can now prove the excess risk guarantee of~\cref{thm: phased erm without reg - appendix}:
\begin{proof}[Proof of~\cref{thm: phased erm without reg} - Excess risk]

\textbf{Excess Risk:} First, we will argue that $\tilde{x}_i = \frac{1}{C} \sum_{j=1}^C \tilde{x}_{i,j}$ for all $i$ with high probability $\geq 1 - 3/nm$. Lemma~\ref{lemma 2} implies that \[
\|\tilde{x}_{i,j} - \tilde{x}_{i,j'}\| \leq \tau_i
\]
for all $i \in [l]$, $j, j' \in [C]$ with probability at least $1 - 1/nm$. Thus, $s_i(\tau_i) = C$ with probability at least $1 - 1/nm$. Now, conditional on $s_i(\tau_i) = C$, we have $\widehat{s}_i(\tau_i) \geq 4C/5$ for all $i$ with probability at least $1 - 1/nm$ by Laplace concentration and a union bound. Moroever, if $\|\tilde{x}_{i,j} - \tilde{x}_{i,j'}\| \leq \tau_i$ for all $j,j'$, then $p_{i,j} = 1$ for all $j$ and hence there are no outliers: $\calS_i = \{\tilde{x}_{i,j}\}_{j \in [C]}$. By a union bound, we conclude that $\calS_i = \{\tilde{x}_{i,j}\}_{j \in [C]}$ and hence $\tilde{x}_i = \frac{1}{\calS_i} \sum_{D_{i,j} \in \calS_i} \tilde{x}_{i,j}$ for all $i$ with probability at least $\geq 1 - 3/nm$. By the law of total expectation and Lipschitz continuity, it suffices to condition on this high probability good event that $\tilde{x}_i = \frac{1}{C} \sum_{j=1}^C \tilde{x}_{i,j}$ for all $i$: the total expected excess risk can only be larger than the conditional excess risk by an additive factor of at most $3LR/nm$. 

Now, conditional on $\tilde{x}_i = \frac{1}{\calS_i} \sum_{D_{i,j} \in \calS_i} \tilde{x}_{i,j}$ , Lemma~\ref{lem: FKT Lem4.3} and Jensen's inequality implies 
\begin{equation}
    \expec[F(\tilde{x}_i) - F(\tilde{x}_{i-1})] \lesssim \frac{\expec\|\tilde{x}_{i-1} - x_{i-1}\|^2}{\eta_i T_i} + \eta_i L^2  = \frac{d \sigma_{i-1}^2}{\eta_i T_i} + \eta_i L^2. 
\end{equation}

Next, let $x_0^* := x^* = \text{argmin}_{x \in \XX} F(x)$, and write
\begin{align*}
    \expec[F(x_l) - F^*] &= \sum_{i=1}^l \expec[F(x_i^*) - F(x_{i-1}^*)] + \expec[F(x_l) - F(x_l^*)] \\
    &\lesssim \frac{R^2}{\eta T_1} + \eta L^2 + 
    \sum_{i=2}^l \left[\eta_{i-1} L^2 d + \eta_i L^2 \right] + L^2 \sqrt{d} \eta_l \sqrt{T_l} \\
    &\lesssim \frac{R^2}{\eta T_1} + d \eta L^2 + L^2 \sqrt{d} \eta_l \sqrt{T_l}. 
\end{align*}
Plugging in the prescribed algorithmic parameters completes the excess risk proof. 
\end{proof}

\section{Proofs of Results in Section~\ref{sec: accelerated phased ERM}}
\label{app: accelerated phased ERM}

\begin{theorem}[Re-statement of~\cref{thm: privacy of accelerated phased ERM}]
\label{thm: privacy of acc phased appendix}
    Let $\eps \leq 10$, $q > 0$ such that $n^{1-q} > \frac{100 \log(20nmde^{\eps}/\delta)}{\eps (1 - (1/2)^q)}$. 
    Then, Algorithm~\ref{alg: accelerated phased ERM} is $(\eps, \delta)$-DP.
\end{theorem}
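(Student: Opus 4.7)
The plan is to prove privacy by three successive reductions. First, since the phases $i = 1, \ldots, l$ in Algorithm~\ref{alg: accelerated phased ERM} operate on pairwise disjoint user subsets $D_i \subset \DD$, parallel composition of DP reduces the task to showing that a single call to Algorithm~\ref{alg: accelerated minibatch} is $(\eps, \delta)$-user-level DP for any fixed $i$ and any fixed input $x_{i-1}$. Within a single phase, the only two privacy-relevant primitives are (i) the \emph{AboveThreshold}-style test (lines involving $\hat{\Delta}_i, s_i^t, \hat{s}_i^t$) that decides at each iteration $t$ whether to proceed or to halt; and (ii) the \emph{Gaussian mechanism} applied to the outlier-filtered minibatch gradient $g_i^t$ in the iterations where we proceed. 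The AC-SA step and all other downstream operations are post-processing. I therefore plan to bound the per-phase privacy budget of (i) and (ii) separately and combine them by simple composition.

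For (i), I would argue that $s_i^t(\tau)$ has $\ell_1$-sensitivity at most $2$ with respect to changing one user in $D_i^t$, since only $2(K_i - 1)$ pairwise indicators of the sum normalized by $K_i$ can flip. With threshold noise $\Lap(8/\eps)$ and per-query noise $\Lap(16/\eps)$, the parameters are calibrated so that Lemma~\ref{thm:Above_Threshold} yields $(\eps/2)$-DP for the entire length-$T_i$ threshold stream with respect to $D_i^t$, \emph{independent of $T_i$}; amplification by subsampling (Lemma~\ref{lem: amp by subsampling}) from $D_i$ only improves this. For (ii), I would adapt the conditional-sensitivity argument of Lemma~\ref{lemma 3.5} (cf.\ \cite[Lemma 3.5]{asi2023user}): conditional on the high-probability event that the Laplace noises are bounded by $\widetilde{O}(1/\eps)$ and that the noisy threshold is exceeded, a coupling argument shows that the inlier-averaged gradient $g_i^t$ has $\ell_2$-sensitivity $\lesssim \tau / K_i$ with probability at least $1 - \delta/(nm)$. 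Plugging this into the Gaussian mechanism with $\sigma_i = 1000 \tau \sqrt{T_i}\log(nde^\eps/\delta)/(\eps n_i)$ gives per-iteration privacy $\eps_0' \approx \sqrt{\log(1/\delta)}\, n_i \eps / \bigl(K_i \sqrt{T_i}\, \log(nd e^\eps/\delta)\bigr)$ with respect to $D_i^t$. Privacy amplification by subsampling from $D_i$ (size $n_i$, sample size $K_i$) reduces this to $\eps_0 \approx (K_i/n_i)\,\eps_0'$, and advanced composition (Lemma~\ref{thm: advanced composition}) across the $T_i$ iterations produces a total cost $O\bigl(\sqrt{T_i \log(1/\delta)}\, \eps_0\bigr)$. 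The choice $K_i = 500\log(n_i^2 m^2 e^\eps/\delta)\bigl(1/\eps + n_i \eps/\sqrt{T_i \log(1/\delta)}\bigr)$ is engineered precisely so that the $\sqrt{T_i}$ and $n_i$ factors cancel and the total cost in (ii) stays at most $\eps/2$.

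The main obstacle will be handling the conditioning carefully to transport the conditional sensitivity bound into a genuine DP statement, because the $\tau/K_i$ bound only holds on a ``good'' event (Laplace noises small, threshold passed, concentration actually high). The privacy accounting must split on this event exactly as in the linear-time proof of~\cref{thm: phased erm without reg - appendix}: inside the event, a coupling under neighbors is used to justify the Gaussian noise scaling with $\tau/K_i$; outside the event (of total probability $\lesssim \delta$), the algorithm either halts deterministically or its excess contribution is absorbed into the $\delta$ term. The hypothesis $n^{1-q} > 100 \log(20 nmde^\eps/\delta)/\bigl(\eps(1 - (1/2)^q)\bigr)$ is what ensures $K_i \leq n_i$ in every phase (so the subsampling step is well-defined) and that the Laplace-concentration plus outlier-removal machinery of~\cite{asi2023user} actually delivers the stated $\tau/K_i$ sensitivity. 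Combining (i) and (ii) via simple composition yields $(\eps,\delta)$-DP for a single phase, and parallel composition across the $l$ disjoint phases preserves $(\eps,\delta)$-DP for the full algorithm.
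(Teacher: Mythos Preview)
Your proposal is correct and follows essentially the same route as the paper: parallel composition across the disjoint phases, then within a phase the AboveThreshold guarantee (Lemma~\ref{thm:Above_Threshold}) for the concentration-score stream at cost $\eps/2$, combined with the coupling/conditional-sensitivity bound of Lemma~\ref{lem: 3.5} on $g_i^t$, Gaussian noise, subsampling amplification, and advanced composition for the remaining $\eps/2$, all glued together by conditioning on the good Laplace event $E_i$ and on the halt/proceed indicator exactly as in the proof of~\cref{thm: phased erm without reg - appendix}. The paper carries out the final combination as an explicit conditioning-and-coupling calculation rather than invoking ``simple composition'' by name, but the arithmetic is the same; one small remark is that the paper does not apply subsampling amplification to the AboveThreshold component (the sensitivity-$2$ bound with respect to $D_i$ already suffices), whereas you note it could only help.
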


We require the following lemma, which is a direct consequence of~\cite[Lemma 3.5]{asi2023user}:

\begin{lemma} 
\label{lem: 3.5}
Consider~Algorithm~\ref{alg: accelerated minibatch}. Let $\DD'_i$ and $\DD_i'$ be two data sets that differ in the data of one user. Let $E_i = \{|v_i^t| \leq K_i/20 ~\forall t \in [T_i] \cap |\xi_i| \leq K_i/20 \}$. Define $E_i'$ similarly for independent draws of random Laplace noise: $E_i' = \left\{|(v_i^{t})'| \leq K_i/20 ~\forall t \in [T_i] \cap |\xi_i'| \leq K_i/20 \right\}$. Let $a_i^t = \ind(\widehat{s}_i^t(\DD_i) \geq 4K_i/5)$ and $b_i^t = \ind(\widehat{s}_i^t(\DD_i') \geq 4K_i/5)$ denote the concentration scores in iteration $t$. Then, conditional on $E_i \bigcap E_i'$ and conditional on $a_i^t = b_i^t$, there exists a coupling $\Gamma_i^t$ over $g_i^t(\DD_i)$ and $g_i^t(\DD_i')$ such that for $(h, h')$ drawn from $\Gamma_i$, we have \[
\|h - h'\| \lesssim \frac{\tau \log(1/\zeta)}{K_i}
\]
with probability at least $1 - \zeta$. 
\end{lemma}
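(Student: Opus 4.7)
The plan is to adapt the coupling argument of Asi--Liu (2023, Lemma 3.5), modified for the minibatch setting in which outlier removal is applied to the $K_i$ user-gradients $\{q_t(Z)\}_{Z\in D_i^t}$ rather than to all $n$ users. Fix the phase index $i$ and inner iteration $t$, and assume the two neighboring datasets $\DD_i,\DD_i'$ differ in a single user $Z_* \neq Z_*'$. I would condition throughout on the event $E_i \cap E_i'$ (so all relevant Laplace noises are in $[-K_i/20, K_i/20]$) and on $a_i^t = b_i^t$. When $a_i^t = b_i^t = 0$, the algorithm halts and returns $0$ on both sides, so trivially $h=h'=0$ and the bound is vacuous; so I focus on the case $a_i^t = b_i^t = 1$.

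The first step is to translate the condition $\hat s_i^t \ge 4K_i/5$ and $|v_i^t|\le K_i/20$, $|\xi_i|\le K_i/20$ into a true-score bound $s_i^t(\DD_i), s_i^t(\DD_i') \ge 3K_i/4$. This means that, for the overwhelming majority of users $Z$ in the minibatch, the score $h_i^t(Z)$ is at least $(1-o(1))K_i$, so $p_i^t(Z) = 1$, which will give $|\mathcal S_i^t| \ge K_i/2$ on both sides with probability $\ge 1-\zeta/3$ by a Chernoff bound over the independent Bernoulli inclusion decisions. Next I would build the coupling. For every user $Z \in D_i^t \setminus \{Z_*,Z_*'\}$ (shared between the two runs), the score $h_i^t(Z)$ changes by at most $1$ when swapping $Z_*$ for $Z_*'$, so the piecewise-linear soft threshold gives $|p_i^t(Z;\DD_i) - p_i^t(Z;\DD_i')| \le 6/K_i$. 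Using the maximal coupling of two Bernoullis with parameters differing by $\le 6/K_i$, the two inclusion decisions agree except with probability $6/K_i$. Let $N$ be the number of shared users whose decisions disagree under the coupling; then $\E[N] \le 6$ and by a Chernoff bound $N \lesssim \log(1/\zeta)$ with probability $\ge 1 - \zeta/3$.

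From here the bound on $\|h-h'\|$ follows from a careful bookkeeping. Writing
\[
g_i^t(\DD_i) - g_i^t(\DD_i') = \Big(\tfrac{1}{|\mathcal S_i^t(\DD_i)|} - \tfrac{1}{|\mathcal S_i^t(\DD_i')|}\Big)\!\!\!\sum_{Z\in \mathcal S_i^t(\DD_i)\cap \mathcal S_i^t(\DD_i')}\!\!\!q_t(Z) + \tfrac{1}{|\mathcal S_i^t(\DD_i')|}\!\!\!\sum_{Z\in \mathcal S_i^t(\DD_i)\triangle \mathcal S_i^t(\DD_i')}\!\!\!\pm q_t(Z),
\]
where the symmetric difference has size $\le N+2$ (mismatches plus the single user swap). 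Every user $Z$ retained in $\mathcal S_i^t$ satisfies $h_i^t(Z) \ge K_i/2$, which by definition means at least $K_i/2$ other minibatch users have $\|q_t(Z)-q_t(Z')\|\le 2\tau$. A standard median/centroid argument then lets me replace each such $q_t(Z)$ by a common reference point up to an additive $O(\tau)$ error, so both the $(1/|\mathcal S| - 1/|\mathcal S'|)\!\sum q_t$ term and the symmetric-difference term contribute at most $O(\tau (N+2)/K_i) = O(\tau \log(1/\zeta)/K_i)$. Combining and taking a union bound over the $O(1)$ high-probability events yields the claimed bound with probability $\ge 1-\zeta$.

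The main obstacle is the coupling construction: one must simultaneously couple the per-user Bernoulli inclusion decisions across the two runs in a way that makes the set symmetric difference controllable, while ensuring the event $\{N \lesssim \log(1/\zeta)\}$ is measurable with respect to the coupling's randomness and independent enough of the gradients for the centroid argument to apply. The cleanest way I can see is to reuse Asi--Liu's explicit maximal coupling verbatim and then bound the difference in gradients by noting (i) all non-outlier users cluster within $2\tau$ of any of their $K_i/2$ neighbors, and (ii) the normalization denominators satisfy $|\mathcal S_i^t(\DD_i)|, |\mathcal S_i^t(\DD_i')| \ge K_i/2$, so $1/|\mathcal S_i^t|$ factors do not inflate the bound. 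Everything else is bookkeeping that follows the structure of the cited lemma.
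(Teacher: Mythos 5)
Your proposal is correct and follows essentially the same route as the paper, which proves this lemma simply by invoking \cite[Lemma 3.5]{asi2023user}: the maximal coupling of the per-user Bernoulli inclusion decisions, the bound $\E[N]\le 6$ with $N\lesssim\log(1/\zeta)$ w.h.p., and the $O(\tau)$-clustering of all retained gradients (via the pigeonhole/common-neighbor argument) are exactly the ingredients of that cited argument. One caution on your final bookkeeping: the two terms in your decomposition are \emph{not} individually $O(\tau(N+2)/K_i)$ --- for a reference point $q^*$ of norm up to $L$ each is only $O(L(N+2)/K_i)$, which is a factor $\sqrt{m}$ too large --- so you must use that the coefficients of $q^*$ cancel exactly when the terms are summed (equivalently, recenter every $q_t(Z)$ at $\bar q_{\mathcal S\cap\mathcal S'}$ before splitting the sums) to obtain the claimed $\tau$-scaling.
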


\begin{proof}[Proof of~\cref{thm: privacy of acc phased appendix}]
Note that our assumption on $n^{1-q}$ being sufficiently large implies that $n_i \gtrsim \frac{\log(nmd/\delta)}{\eps}$ for all $i \in [l]$. By parallel composition~\cite{mcsherry2009privacy} and post-processing, it suffices to show that $\{\widehat{g}_i^t\}_{t=1}^{T_i}$ satisfies $(\eps, \delta)$-user-level DP for any $i \in [l]$. To that end, fix any $i \in [l]$ and let $\DD$ and $\DD'$ be adjacent datasets that differ in the data of one user such that $\DD_i \neq \DD'_i$. We will show that $\{\widehat{g}_i^t(\DD)\}_{t=1}^{T_i}$ and $\{\widehat{g}_i^t(\DD')\}_{t=1}^{T_i}$ are $(\eps, \delta)$-indistinguishable, which will imply that Algorithm~\ref{alg: accelerated phased ERM} is $(\eps, \delta)$-user-level DP. 
Let $E_i = \{|v_i^t| \leq K_i/20 ~\forall t \in [T_i] \cap |\xi_i| \leq K_i/20 \}$. Define $E_i'$ similarly for independent draws of random Laplace noise: 
$E_i' = \left\{|(v_i^{t})'| \leq K_i/20 ~\forall t \in [T_i] \cap |\xi_i'| \leq K_i/20 \right\}$. 
Our choice of $K_i \geq \frac{500\log(nmd e^{\eps}/\delta)}{\eps}$  ensures that 
\[
\pr\left(E_i \bigcap E_i'\right) \geq 1 - \delta/(10e^{\eps}),
\]
by Laplace concentration and a union bound. 
Let $\zeta := \delta/(10T_ie^\eps)$. 

Let $a_i^t = \ind(\widehat{s}_i^t(\DD) \geq 4K_i/5)$ and $b_i^t = \ind(\widehat{s}_i^t(\DD') \geq 4K_i/5)$. Note that if $a_i^t = b_i^t = 0$, then $\widehat{g}_i^t(\DD) = 0 = \widehat{g}_i^t(\DD')$.

   Conditional on the good event that $a_i^t = b_i^t$ for all $t$ and conditional on $E_i \bigcap E_i'$, we can bound the $\ell_2$-sensitivity of $g_i^t$ with high probability, via Lemma~\ref{lem: 3.5} and a union bound: 
   \begin{equation}
    \label{eq: sens bound}
    \|g_i^t(\DD) - g_i^t(\DD')\| \lesssim \frac{\tau \log(1/\zeta)}{K_i} \lesssim \frac{\tau \log(nme^{\eps}/\delta)}{K_i}
   \end{equation}
 for all $t \in [T_i]$  with probability at least $1 - T_i \zeta = 1 - \delta/(10 e^\eps)$.

   Note that $\{\widehat{s}_i^t(\DD)\}_{t=1}^{T_i}$ and $\{\widehat{s}_i^t(\DD')\}$ are $\eps/2$-indistinguishable by the DP guarantees of AboveThreshold in Lemma~\ref{thm:Above_Threshold}, since the sensitivity of $s_i^t$ is upper bounded by $2$. Therefore, \begin{equation}
   \label{eq: lap2}
       \pr(\{a_i^t\}_{t=1}^{T_i} = v, E_i) \leq e^{\eps/2} \left[\pr(\{b_i^t\}_{t=1}^{T_i} = v, E_i')  + \zeta \right]
   \end{equation}
   for any $v \in \{0,1\}^{T_i}$. 

\vspace{.2cm}
Now, by the sensitivity bound~\eqref{eq: sens bound}, the privacy guarantee of the Gaussian mechanism (Lemma~\ref{lem: gauss mech}) and our choice of $\sigma_i$, the advanced composition theorem (Lemma~\ref{thm: advanced composition}), and privacy amplification by subsampling (Lemma~\ref{lem: amp by subsampling}), we have \begin{equation}
\label{eq: gauss mech bound}
    \pr(\{\hat{g}_i^t(\DD)\}_{t=1}^{T_i} \in \calO \mid E_i \{a_i^t\}_{t=1}^{T_i} = v) \le e^{\eps/2}  \pr(\{\hat{g}_i^t(\DD')\}_{t=1}^{T_i} \in \calO \mid E'_i, \{b_i^t\}_{t=1}^{T_i} = v) + (T_i + 1)\zeta,
\end{equation}
for any event $\calO \subset \XX$. Here we also used the fact that $K_i \geq \frac{n_i \eps}{\sqrt{T_i}}$. 

\vspace{.2cm}
For short-hand, write $\{a_i^t\}_{t=1}^{T_i} = 1$ if $a_i^t = 1$ for all $t \in [T_i]$ and $\{a_i^t\}_{t=1}^{T_i} = 0$ if $a_i^t = 0$ for some $t \in [T_i]$; similarly for $b_i^t$. Then since the algorithm halts and returns $\{\hat{g}_i^t(\DD)\}_{t=1}^{T_i} = 0$ if  $\{a_i^t\}_{t=1}^{T_i} = 0$, we know that \begin{equation}
\label{eq: halt2}
  \pr(\{\hat{g}_i^t(\DD)\}_{t=1}^{T_i} \in \calO \mid E_i , \{a_i^t\}_{t=1}^{T_i} = 0) = \pr(\{\hat{g}_i^t(\DD')\}_{t=1}^{T_i} \in \calO \mid E'_i, \{b_i^t\}_{t=1}^{T_i} = 0),
\end{equation}

for any event $\calO \subset \XX$. 

\vspace{.2cm}
Combining the above pieces, we have \begin{align*}
    \pr(\{\hat{g}_i^t(\DD)\}_{t=1}^{T_i} \in \calO) &= \pr(\{\hat{g}_i^t(\DD)\}_{t=1}^{T_i} \in \calO \mid E_i) \pr(E_i) + \pr(\{\hat{g}_i^t(\DD)\}_{t=1}^{T_i} \in \calO \mid E_i^c) \pr(E_i^c) \\
    &\leq  \pr(\{\hat{g}_i^t(\DD)\}_{t=1}^{T_i} \in \calO \mid E_i, \{a_i^t\}_{t=1}^{T_i} = 1) \pr(E_i, \{a_i^t\}_{t=1}^{T_i} = 1) \\
    &\;\;\; + \pr(\{\hat{g}_i^t(\DD)\}_{t=1}^{T_i} \in \calO \mid E_i, \{a_i^t\}_{t=1}^{T_i} = 0)\pr(E_i, \{a_i^t\}_{t=1}^{T_i} = 0) + 2T\zeta \\
    &\stackrel{(i)}{\le} e^{\eps/2}\pr(\{\hat{g}_i^t(\DD')\}_{t=1}^{T_i} \in \calO \mid E_i', \{b_i^t\}_{t=1}^{T_i} = 1) e^{\eps/4} \left[\pr(E_i', \{b_i^t\}_{t=1}^{T_i} = 1) + T\zeta \right] \\
    &\;\;\;\; + \pr(\{\hat{g}_i^t(\DD')\}_{t=1}^{T_i} \in \calO \mid E_i', \{b_i^t\}_{t=1}^{T_i} = 0) e^{\eps/2} \left[\pr(E_i', \{b_i^t\}_{t=1}^{T_i} = 0) + T\zeta \right] + T\zeta \\
    &\leq e^{\eps/2} \pr(\{\hat{g}_i^t(\DD')\}_{t=1}^{T_i} \in \calO, E_i') + 4T\zeta\left(2 e^{\eps/2} + 1 \right) \\
    &\leq e^{\eps} \pr(\{\hat{g}_i^t(\DD')\}_{t=1}^{T_i} \in \calO) + \delta,
\end{align*}
where $(i)$ follows from inequalities~\eqref{eq: lap2}, \eqref{eq: gauss mech bound}, and \eqref{eq: halt2}. Thus, $\{\hat{g}_i^t(\DD)\}_{t=1}^{T_i}$ is $(\eps, \delta)$-user-level-DP, which implies the result. 

\end{proof}

\begin{theorem}[Formal statement of~\cref{thm: utility of accelerated phased ERM}]
\label{thm: utility of accelerated phased ERM - appendix verseion}
Let 
$\eps \le 10$ and $\delta < 1/(mn)$. Then, choosing $\lambda = \frac{L}{R}\left(\frac{1}{\sqrt{nm}} + \frac{\sqrt{d}}{{\eps n \sqrt{m}}}\right)$ and $p \geq 3q + 2.5 + \log_n(\sqrt{m})$ in Algorithm~\ref{alg: accelerated phased ERM} yields optimal excess risk:
    \[
    \expec F(x_l) - F^* \leq LR \cdot \widetilde{O}\left(\frac{1}{\sqrt{mn}} + \frac{\sqrt{d \log(1/\delta)}}{\eps n \sqrt{m}} \right).
    \]
   The gradient complexity of this algorithm is upper bounded by \[
mn\left(1 + \eps\left(\frac{\beta R}{L}\right)^{1/4}\left((mn)^{1/8} \wedge \left(\frac{\eps^2 n^2 m}{d} \right)^{1/8} \right) \right) + \sqrt{\frac{\beta R}{L}}\left(\frac{n^{1/4} m^{5/4}}{\eps} + \left(\frac{n^{1/2}m^{5/4}}{d^{1/4} \eps^{1/2}} \right)\right).
    \]
\end{theorem}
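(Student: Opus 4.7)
\textbf{Proof plan for \cref{thm: utility of accelerated phased ERM}.}

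The plan is to adapt the iterative localization framework of \cite{kll21} to the user-level DP setting, using \cref{alg: accelerated minibatch} as the inner solver on each regularized ERM subproblem. In phase $i$ we approximately minimize the $\lambda_i$-strongly convex, $(\beta+\lambda_i)$-smooth objective $\widetilde F_i(x) := \widehat F_i(x) + \frac{\lambda_i}{2}\|x-x_{i-1}\|^2$. Let $x_i^\star := \arg\min_{x \in \XX} \widetilde F_i(x)$. The standard AC-SA analysis \cite{ghadimilan1} for strongly convex smooth objectives with bounded-variance unbiased stochastic gradients shows that, after $T_i = \widetilde\Theta(1+\sqrt{\beta/\lambda_i})$ iterations, the bias term is negligible and $\expec[\widetilde F_i(x_i)-\widetilde F_i(x_i^\star)] \lesssim \sigma_{\text{grad},i}^2/(\lambda_i T_i)$, where $\sigma_{\text{grad},i}^2$ is the per-iteration variance of the gradient estimator $\widehat g_i^t + \lambda_i(x_{i-1}^t - x_{i-1})$.

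The first step is to bound $\sigma_{\text{grad},i}^2$. Arguing as in the proof of \cref{thm: phased erm without reg - appendix} (using that the users are i.i.d.\ and invoking a concentration argument on the gradient map $q_t$), the event $\mathcal S_i^t = D_i^t$ holds for every $i,t$ with probability at least $1 - 1/(nm)$; by the law of total expectation this adds only an $\widetilde O(LR/nm)$ term to the final risk. Conditioning on this event, \cref{lem: variance bound} yields the crucial $1/m$ improvement: $\expec\|g_i^t - \nabla \widetilde F_i(x_{i-1}^t)\|^2 \lesssim L^2\log(ndm)/(K_i m)$. The added Gaussian noise contributes $d\sigma_i^2 \lesssim d\tau^2 T_i\log^2/(\eps^2 n_i^2)$ by our choice of $\sigma_i$. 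Combining, $\sigma_{\text{grad},i}^2 \lesssim \widetilde O\bigl(L^2/(K_i m) + dL^2 T_i/(\eps^2 n_i^2 m)\bigr)$.

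Plugging this into the AC-SA bound and using $\lambda_i$-strong convexity to convert function-value gaps into squared-distance bounds ($\|x_i - x_i^\star\|^2 \le 2(\widetilde F_i(x_i)-\widetilde F_i(x_i^\star))/\lambda_i$), I would then telescope along the phases exactly as in \cite{kll21,fkt20}. Writing $F(x_l) - F^\star = \sum_{i=1}^l [F(x_i^\star)-F(x_{i-1}^\star)] + [F(x_l)-F(x_l^\star)]$ and using uniform stability of regularized ERM (each summand is controlled by $\lambda_i \|x_{i-1}^\star - x^\star\|^2$ plus a variance/stability term scaling as $L^2/(\lambda_i n_i m)$ from the user-level uniform stability bound), the geometric decay of $n_i$ and geometric growth of $\lambda_i$ (chosen with $p \geq 3q + 2.5 + \log_n\sqrt m$) ensure the sum is dominated by the first phase. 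Balancing the resulting statistical, privacy, and optimization error terms with $\lambda = (L/R)\bigl(1/\sqrt{nm} + \sqrt{d}/(\eps n\sqrt m)\bigr)$ produces the optimal excess risk $\widetilde O(LR/\sqrt{mn} + LR\sqrt{d\log(1/\delta)}/(\eps n\sqrt m))$.

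The gradient complexity is $\sum_i T_i K_i m$ plus an additive $\sum_i n_i m \le nm$ term for the initial per-phase work. Substituting $T_i \asymp \sqrt{\beta/\lambda_i}$ and $K_i \asymp 1/\eps + n_i\eps/\sqrt{T_i\log(1/\delta)}$, and using that both $T_i$ and $n_i$ decay geometrically in $i$ so the first phase dominates, yields the claimed bound after plugging in $\lambda$ and simplifying the two regimes of the $\min$ (whether the $1/\sqrt{nm}$ term or the $\sqrt d/(\eps n\sqrt m)$ term dominates in $\lambda$). The main obstacle I anticipate is twofold: first, establishing the conditional variance bound of \cref{lem: variance bound}, since the iterates $x_{i-1}^t$ depend on $D_i$ and one must appeal to the stability of user-level DP \cite{bassily2023user} to argue that $\nabla \widehat F(x_{i-1}^t, Z)$ concentrates around $\nabla F(x_{i-1}^t)$ at rate $L/\sqrt m$ for every $Z \in D_i$; second, carefully tracking how the AboveThreshold-gated outlier removal inside the AC-SA recursion interacts with subsampling amplification, so that the added Gaussian noise is small enough not to dominate the $L^2/(K_i m)$ minibatch variance while still delivering $(\eps,\delta)$-DP via advanced composition.
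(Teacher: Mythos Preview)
Your proposal is correct and follows essentially the same approach as the paper: show $\mathcal S_i^t = D_i^t$ with high probability so that $g_i^t$ is unbiased, invoke the variance bound of \cref{lem: variance bound} together with the Gaussian noise contribution, apply the AC-SA convergence guarantee for strongly convex objectives, convert to distance via strong convexity, telescope using the stability of regularized ERM, and sum the resulting geometric series. One minor notational slip: in the telescoping step the summand is controlled by $\lambda_i\,\expec\|x_{i-1}-x_{i-1}^\star\|^2$ (the gap between the previous-phase output and its ERM minimizer), not $\lambda_i\|x_{i-1}^\star - x^\star\|^2$; this is what links successive phases through the AC-SA distance bound.
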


We will need the following bound on the excess empirical risk of accelerated (noisy) SGD for the proof of~\cref{thm: utility of accelerated phased ERM - appendix verseion}: 
\begin{lemma}{\cite[Proposition 7]{ghadimilan2}}
\label{lem: ghadimi}
Let $x^T$ be computed by $T$ steps of (multi-stage) Accelerated Minibatch SGD on $\lambda$-strongly convex, $\beta$-smooth $\hat{F}$ with unbiased stochastic gradient estimator $g^t$ such that $\expec \|g^t - \nabla \hf(x^t)\|^2 \leq V^2$ for all $t\in [T]$. Then, \[
\expec[\hf(x^T) - \min_{x \in \XX} \hf(x)] \lesssim [\hf(x^0) - \min_{x \in \XX} \hf(x)] \exp\left(-T\sqrt{\frac{\lambda}{\beta}}\right) + \frac{V^2}{\lambda T}.
\]
\end{lemma}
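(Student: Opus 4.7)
The plan is to follow the Ghadimi--Lan multi-stage restarting framework, which reduces the strongly convex analysis to repeated invocations of the single-stage AC-SA convergence bound for convex smooth stochastic optimization. The starting ingredient is the single-stage guarantee: for AC-SA run for $T$ steps on a convex $\beta$-smooth function with an unbiased stochastic gradient oracle of variance $\leq V^2$, one has
\[
\expec[\hf(y^T) - \hf^*] \lesssim \frac{\beta \|y^0 - x^*\|^2}{T^2} + \frac{V \|y^0 - x^*\|}{\sqrt{T}}.
\]
I would take this as a black box (it is the standard stochastic accelerated-gradient bound and does not use strong convexity).

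Next I would exploit $\lambda$-strong convexity through the quadratic growth inequality $\|y^0 - x^*\|^2 \leq \tfrac{2}{\lambda}(\hf(y^0) - \hf^*)$, and split the total budget $T$ into $K$ consecutive stages of lengths $T_1,\dots,T_K$ with $\sum_k T_k = T$. Each stage is a fresh call to single-stage AC-SA warm-started at the output $y^{k-1}$ of the previous stage. Substituting the quadratic growth bound on $\|y^{k-1} - x^*\|^2$ into the single-stage bound and using Young's inequality on the cross term $V \|y^{k-1}-x^*\|/\sqrt{T_k}$ yields a one-step recursion of the form
\[
\Delta_k \;\lesssim\; \left(\tfrac{\beta}{\lambda T_k^2} + \tfrac{1}{2}\right)\Delta_{k-1} + \tfrac{V^2}{\lambda T_k},
\qquad \Delta_k := \expec[\hf(y^k) - \hf^*].
\]
Choosing every $T_k \geq c\sqrt{\beta/\lambda}$ for a suitable constant $c$ makes the multiplier on $\Delta_{k-1}$ at most $3/4$, so the bias decays geometrically across stages.

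The cleanest way to finish is to take all stage lengths equal, $T_k = T/K$, with $K = \lfloor T/(c\sqrt{\beta/\lambda})\rfloor$; unrolling the recursion gives
\[
\Delta_K \;\lesssim\; (3/4)^K \Delta_0 \;+\; \sum_{k=1}^{K} (3/4)^{K-k}\tfrac{V^2}{\lambda T_k}
\;\lesssim\; \Delta_0 \exp\!\big(-T\sqrt{\lambda/\beta}\big) \;+\; \frac{V^2}{\lambda T},
\]
where the bias term uses $K \asymp T\sqrt{\lambda/\beta}$ and the geometric sum in the variance term collapses because $K T_k = T$. This is exactly the claimed bound.

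The main obstacle is the bookkeeping in the recursion: one must set $T_k$ large enough that the bias multiplier is a constant strictly less than $1$, but small enough (and numerous enough) that $K T_k = T$ so the variance term aggregates to $V^2/(\lambda T)$ rather than something larger; handling the possibility that $T < \sqrt{\beta/\lambda}$ (so only a single stage runs and the exponential factor is $\Theta(1)$) requires a separate check, but in that regime the single-stage bound already dominates both terms.
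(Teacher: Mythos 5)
First, note that the paper does not prove this lemma at all --- it is imported verbatim as \cite[Proposition 7]{ghadimilan2} --- so you are reconstructing the argument behind the citation rather than competing with a proof given in the paper. Your framework (restart single-stage AC-SA, convert $\|y^{k-1}-x^*\|^2$ into a function gap via $\lambda$-strong convexity, and unroll a contraction across stages) is the right one and is essentially how Ghadimi--Lan prove the multi-stage result.

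However, there is a genuine quantitative gap in your last step. With equal stage lengths $T_k = T/K$ and $K \asymp T\sqrt{\lambda/\beta}$, the aggregated variance contribution is
\[
\sum_{k=1}^{K}(3/4)^{K-k}\,\frac{V^2}{\lambda T_k}
\;=\;\frac{V^2 K}{\lambda T}\sum_{j=0}^{K-1}(3/4)^{j}
\;\asymp\;\frac{V^2 K}{\lambda T}
\;=\;\frac{V^2}{\sqrt{\lambda\beta}},
\]
which exceeds the claimed $V^2/(\lambda T)$ by the factor $K = T\sqrt{\lambda/\beta}$ whenever more than one stage is run. The geometric sum does \emph{not} ``collapse because $KT_k = T$'': it collapses to a constant times its largest summand, and every summand already carries $1/T_k = K/T$ rather than $1/T$. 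The standard fix --- and what \cite{ghadimilan2} actually do --- is to abandon equal stage lengths: choose $T_k \asymp \sqrt{\beta/\lambda} + V^2 2^{k}/(\lambda \Delta_0)$ so that at each stage \emph{both} the bias term and the variance term of the single-stage bound are at most $\Delta_{k-1}/4$, giving $\Delta_k \le \Delta_{k-1}/2$ directly. Then $\Delta_K \le \Delta_0 2^{-K}$ while $T = \sum_k T_k \asymp K\sqrt{\beta/\lambda} + V^2/(\lambda \Delta_K)$, and inverting this relation yields exactly $\Delta_K \lesssim \Delta_0 \exp\bigl(-cT\sqrt{\lambda/\beta}\bigr) + V^2/(\lambda T)$. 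Your single-stage ingredient, the quadratic-growth step, and the edge case $T \lesssim \sqrt{\beta/\lambda}$ are all handled correctly; only the stage-length schedule and the final summation need to change.
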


\begin{remark}
As noted in Lemma~\ref{lem: ghadimi}, we technically need to call a \textit{multi-stage implementation} of~Algorithm~\ref{alg: accelerated minibatch} in line 7 of~Algorithm~\ref{alg: accelerated phased ERM} (as in~\cite{ghadimilan2}) to get the desired excess risk bound for minimizing the regularized ERM problem in each iteration. For improved readability, we omitted these details in the main body. 
\end{remark}

Next, we obtain a bound on the variance of the noisy stochastic minibatch gradient estimator $\widehat{g}_i^t$ in~Algorithm~\ref{alg: accelerated minibatch}, which can then be plugged in for $V^2$ in Lemma~\ref{lem: ghadimi}. 
\begin{lemma}[Re-statement of Lemma~\ref{lem: variance bound}]
Let $\delta \leq 1/(nm), \eps \lesssim 1$. Denote $\widetilde{F}_i(x) := 
\frac{1}{n_i} \sum_{Z_{i,j} \in D_{i}} \hat{F}(x, Z_{i,j})
$.
Then, conditional on $\calS_i^t = D_i^t$ for all $i \in [l], t \in [T_i]$, we have \[
\expec\| g_i^t - \nabla \widetilde{F}_i(x_{i-1}^t) \|^2 \lesssim \frac{L^2 \log(ndm)}{Km}
\]
for all $i \in [l], t \in [T_i]$, where the expectation is over both  
the random i.i.d. draw of $\DD = (Z_1, \ldots, Z_n) \sim P^{nm}$ and the randomness in Algorithm~\ref{alg: accelerated phased ERM}. 
\end{lemma}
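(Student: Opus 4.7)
The plan is to bound the variance in two steps: (i) reduce, via a clean variance decomposition over the random minibatch $D_i^t$, to a per-user second-moment bound of the form $\mathbb{E}\|\nabla \hf(x_{i-1}^t, Z) - \nabla F(x_{i-1}^t)\|^2 \lesssim L^2/m$ for each individual user $Z \in D_i$; and (ii) establish this per-user bound by invoking the stability of user-level DP to decouple the iterate $x_{i-1}^t$ from $Z$, which is the main obstacle in the proof.

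For the reduction, condition on $D_i$ and $x_{i-1}^t$. Given $\calS_i^t = D_i^t$, the minibatch gradient $g_i^t$ is the uniform average of $K_i$ users' gradients drawn (with replacement) from $D_i$, so it is unbiased for $\nabla \widetilde{F}_i(x_{i-1}^t)$. A standard sample-variance bound then gives
\begin{equation*}
\mathbb{E}\bigl[\|g_i^t - \nabla \widetilde{F}_i(x_{i-1}^t)\|^2 \bigm| D_i, x_{i-1}^t, \calS_i^t = D_i^t\bigr] \;\le\; \frac{1}{K_i}\cdot \frac{1}{n_i}\sum_{Z \in D_i}\|\nabla \hf(x_{i-1}^t, Z) - \nabla F(x_{i-1}^t)\|^2,
\end{equation*}
where I have replaced the centering $\nabla \widetilde{F}_i(x_{i-1}^t)$ by $\nabla F(x_{i-1}^t)$ using the fact that the empirical mean minimizes mean squared deviation. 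Taking outer expectation, it suffices to show $\mathbb{E}\,\|\nabla \hf(x_{i-1}^t, Z) - \nabla F(x_{i-1}^t)\|^2 \lesssim L^2/m$ for each $Z \in D_i$.

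The obstacle is that $x_{i-1}^t$ depends on all of $D_i$, including $Z$, so standard i.i.d. concentration of $\nabla \hf(\cdot, Z)$ around $\nabla F$ does not directly apply. To decouple them, I use that by \cref{thm: privacy of accelerated phased ERM} together with parallel composition across the disjoint $D_i$'s (and the fact that any prefix of an adaptive DP algorithm inherits the same $(\eps,\delta)$ guarantee), the iterate $x_{i-1}^t$ is an $(\eps,\delta)$-user-level DP function of $D_i$. Fix $Z \in D_i$, let $Z' \sim P^m$ be a fresh independent replacement for $Z$, and let $x'$ be the corresponding iterate produced by the algorithm on the neighboring dataset. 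Then the joint law of $(x_{i-1}^t, Z)$ is $(\eps,\delta)$-indistinguishable from that of $(x', Z)$, in which $Z$ is independent of $x'$. Since $h(x, Z) := \|\nabla \hf(x, Z) - \nabla F(x)\|^2$ is uniformly bounded by $4L^2$, integrating the indistinguishability against $h$ gives
\begin{equation*}
\mathbb{E}\,h(x_{i-1}^t, Z) \;\le\; e^\eps \,\mathbb{E}_{\text{indep}}\,h(x', Z) + 4L^2 \delta.
\end{equation*}
Under independence, $\nabla \hf(x', Z)$ is the average of $m$ i.i.d. mean-zero vectors of norm at most $2L$, so $\mathbb{E}_{\text{indep}}\,h(x', Z) \le 4L^2/m$. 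With $\eps \lesssim 1$ and $\delta \le 1/(nm)$, this yields $\mathbb{E}\,h(x_{i-1}^t, Z) \lesssim L^2/m$, as required.

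Combining the two steps gives $\mathbb{E}\|g_i^t - \nabla \widetilde{F}_i(x_{i-1}^t)\|^2 \lesssim L^2/(Km)$; the extra $\log(ndm)$ factor in the lemma statement is slack that can be absorbed (or recovered cleanly if one prefers to establish the per-user bound via a high-probability argument using \cref{thm:hoeffding_nSG} and a union bound over $Z \in D_i$, then dominate the tail by $4L^2$). The one delicate bookkeeping point is that the lemma conditions simultaneously on $\calS_i^t = D_i^t$ for all $(i,t)$, an event that is itself correlated with the data; I would handle this exactly as in the utility analysis for \cref{thm: phased erm without reg} by using an analogue of \cref{lemma 2} (applied at the level of user gradients, as in \cite{asi2023user}) to show that no outliers are removed with probability at least $1 - 1/\mathrm{poly}(nm)$ under the i.i.d. assumption, so the conditional second moment differs from the unconditional one by at most a constant factor.
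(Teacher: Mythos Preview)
Your proposal is correct and follows the same two-step skeleton as the paper: first reduce the minibatch variance to a per-user second moment $\expec\|\nabla \hf(x_{i-1}^t,Z)-\nabla F(x_{i-1}^t)\|^2$ via a sample-variance identity, then control that per-user term by using the user-level DP guarantee on $x_{i-1}^t$ to decouple the iterate from $Z$. The paper does exactly this, citing \cite[Lemma A.1]{lei17} for the reduction and \cite[Theorem 3.4]{bassily2023user} for the decoupling.

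The one substantive difference is in how the decoupling is executed. The paper applies the DP-stability theorem of \cite{bassily2023user} in \emph{high probability}: it first shows via Hoeffding that $\|\nabla \hf(x,Z)-\nabla F(x)\|\le \tau=O(L\sqrt{\log(nd/\gamma)/m})$ for fixed $x$ with probability $1-\gamma$, then transfers this to the DP iterate at the cost of inflating the failure probability to $n(e^{2\eps}\gamma+\delta)$, and finally converts back to a second-moment bound. This is where the $\log(ndm)$ factor originates. Your route is more direct: you apply the $(\eps,\delta)$-indistinguishability inequality to the bounded nonnegative function $h(x,Z)=\|\nabla \hf(x,Z)-\nabla F(x)\|^2\le 4L^2$ itself, obtaining $\expec h(x_{i-1}^t,Z)\le e^\eps\,\expec_{\mathrm{indep}} h(x',Z)+4L^2\delta$ without passing through a tail bound. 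This is more elementary (no external stability theorem, no Hoeffding) and actually tighter, shaving the logarithm; as you note, the stated bound simply has slack. Your handling of the conditioning on $\{\calS_i^t=D_i^t\}$ also matches what the paper does in the surrounding utility proof rather than inside the lemma itself.
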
 
\begin{proof}
By \cite[Lemma A.1]{lei17}, we know that, conditional on the draw of the data  $D_i$ and for fixed $x_{i-1}^t$, the variance of the minibatch estimator of the gradient of the empirical loss is
\begin{align}
\label{eq:A}
\expec \left[\left \|g_i^t - \nabla \widetilde{F}_i(x_{i-1}^t) \right\|^2
\Bigg | \normalsize D_i, x_{i-1}^t \right] &=
\expec_{\{i_l\}_{l=1}^K \sim \text{Unif}([n])} \left[\left \|\frac{1}{Km}\sum_{l=1}^K \sum_{j=1}^m \nabla f(x_{i-1}^t, z^t_{i_l, j})) - \nabla \widetilde{F}_i(x_{i-1}^t) \right\|^2
\Bigg | \normalsize D_i, x_{i-1}^t \right] \nonumber \\ 
&\leq \frac{\ind(K=n)}{K} \frac{1}{n} \sum_{i=1}^n \left\| \frac{1}{m} \sum_{j=1}^m [\nabla f(x_{i-1}^t, z^t_{i,j}) - \nabla \widetilde{F}_i(x_{i-1}^t) ] \right\|^2. 
\end{align}
Recall $\widetilde{F}_i(x) := \frac{1}{n_i m} \sum_{z \in D_i} f(x, z)$ is the empirical loss of $D_i$.

Now, for any fixed $x$ and any $Z \in D_i$, Hoeffding's inequality implies that \[
\|\nabla \hf(x, Z) - \nabla F(x)\| \leq \tau = O\left( \frac{L \sqrt{\log(nd/\gamma)}}{\sqrt{m}} \right)
\]
with probability at least $1 - \gamma$, where $\hf(x, Z) := \frac{1}{m} \sum_{z \in Z} f(x, z)$ is user $Z$'s empirical loss.
Thus, by the stability of user-level DP (see \cite[Theorem 3.4]{bassily2023user}), for any $i \in [l], t \in [t]$, we have that 
\begin{equation}
\label{eq: tau bound}
\| \nabla \hf(x_{i-1}^t, Z) - \nabla F(x_{i-1}^t) \| \leq \tau 
\end{equation}
for all $Z \in \DD$ with probability at least $1 - \gamma' = n(e^{2\eps} \gamma + \delta)$, since $x_{i-1}^t$ is $(\eps, \delta)$-user-level DP. 
To make $\gamma' \lesssim 1/m$, we choose $\gamma = 1/mn$ and use the assumptions that $\eps \lesssim 1$ and $\delta \leq 1/mn$. Thus, for any fixed $i, t$ we have \[
\| \nabla \hf(x_{i-1}^t, Z) - \nabla F(x_{i-1}^t) \| \lesssim \frac{L \sqrt{\log(n^2 m d)}}{\sqrt{m}}
\]
for all $Z \in \DD$ with probability at least $1 - 1/m$, which implies \[
\expec\| \nabla \hf(x_{i-1}^t, Z) - \nabla F(x_{i-1}^t) \|^2 \lesssim \frac{L^2 \log(n m d)}{m}.
\]

This also implies \[
\expec\| \nabla \hf_{\DD}(x_{i-1}^t) - \nabla F(x_{i-1}^t) \|^2 \lesssim \frac{L^2 \log(n m d)}{m},
\]
by Jensen's inequality, where $\hf_{\DD}(x)$ is the empirical loss over the entire data set $\DD$.

Plugging the above bounds into~\eqref{eq:A} then yields
\begin{align*}
\expec\| g_i^t - \nabla \widetilde{F}_i(x_{i-1}^t) \|^2 \lesssim \frac{L^2 \log(ndm)}{Km} &=
\expec_{\DD \sim P^{nm}, \{i_l\}_{l=1}^K \sim \text{Unif}([n])} \left[\left \|\frac{1}{Km}\sum_{l=1}^K \sum_{j=1}^m \nabla f(x_{i-1}^t, z_{i_l, j}^t) - \nabla \hat{F}_{\DD}(x_{i-1}^t) \right\|^2
\right] \\
&\lesssim \frac{\ind(K=n)}{K} \cdot \frac{L^2 \log^2(nmd)}{m},
\end{align*}
completing the proof.

\end{proof}

We are now ready to prove~\cref{thm: utility of accelerated phased ERM - appendix verseion}: 
\begin{proof}[Proof of~\cref{thm: utility of accelerated phased ERM - appendix verseion}]
\textbf{Excess risk:}
Note that the assumption in the theorem ensures that $K_i \leq n_i$ for all $i$. 
By similar arguments to those used in \cite[Proposition 3.7]{asi2023user}, we will show that with high probability $\geq 1 - 2/(nm)$, for all $i\in[l], t\in [T_i]$, $\mathcal{S}_i^t = D_i^t$ and hence $g_i^t$ is an unbiased estimator of $\nabla \hf_{D_i^t}(x_i^t)$. To show this, first note that for any $\gamma >0$ and any fixed $x$, \[
\|\nabla \hf(x, Z_j) - \nabla F(x)\| \leq \frac{L \log(nd/\gamma)}{\sqrt{m}}
\]
with probability at least $1 - \gamma/K_i$ by Hoeffding's inequality (see \cite[Lemma 4.3]{asi2023user}). Next, we invoke the stability of differential privacy to show that for all $t \in [T_i]$, $(q_t(Z_{i,1}^t), \ldots, q_t(Z_{i, K_i}^t))$ is \textit{$\tau$-concentrated} (i.e. there exists $q^*  \in \mathbb{R}^d$ such that $\|q_t(Z_{i,j}^t - q^*\| \leq \tau$) with probability at least $1 - T_i(e^{2\eps} \gamma + \delta)$ (see \cite[Theorem 4.3]{bassily2023user}). By a union bound and the choice of $\gamma = 1/[(nm)^{5/4} \log(ndm) e^{2\eps}]$, we have that $(q_t(Z_{i,1}^t), \ldots, q_t(Z_{i, K_i}^t))$ is $\tau$-concentrated for all $i \in [l], t \in [T_i]$ with probability at least $1 - 1/nm$. 
Now, $\tau$-concentration of $(q_t(Z_{i,1}^t), \ldots, q_t(Z_{i, K_i}^t))$ implies $s_i^t(\tau) = K_i$. Further, $s_i^t(\tau) = K_i$ implies $\hat{s}_i^t(\tau) \geq 4K_i/5$ with probability at least $1 - \zeta$ if $K_i \geq 500 \log(nm/\zeta)$, by Laplace concentration and a union bound.
Next, note that $\tau$-concentration of $(q_t(Z_{i,1}^t), \ldots, q_t(Z_{i, K_i}^t))$ implies $p_i^t(Z_{i,j}^t) = 1$ for all $j \in [K_i]$ and $\mathcal{S}_i^t = D_i^t$. Thus, $\mathcal{S}_i^t = D_i^t$ for all $i, t$ with probability at least $1 - 1/nm$. Setting $\zeta = 1/nm$ and using a union bound shows that with probability at least $1 - 2/nm$, we have $\mathcal{S}_i^t = D_i^t$ and $g_i^t = \nabla \hf_{D_i^t}(x_i^t)$ for all $i,t$.

Now, Lemma~\ref{lem: ghadimi} implies that, \textit{if the outlier-removal procedure in~Algorithm~\ref{alg: accelerated minibatch} leads to an unbiased gradient estimator $g_i^t$} (line 12) for all $t \in [T_i = \widetilde{\Theta}(1 + \sqrt{\beta/\lambda_i})]$, then 
\begin{equation}
\label{eq: accel bd}
\expec[\hf_i(x_i^{T_i}) - \min_{x \in \XX} \hf_i(x)] \lesssim \frac{V_i^2}{\lambda_i T_i},
\end{equation}
where $V_i^2 = \max_{t \in [t_i]} \expec\| \widehat{g}_i^t - \nabla \hf_i(x_i^t)\|^2 \lesssim d \sigma_i^2 + \frac{\log(ndm) L^2}{K_i m}$ (unconditionally, after taking expectation over the random draw of $\DD \sim P^{nm}$, by Lemma~\ref{lem: variance bound}). We have shown that the event $\text{GOOD} := \{g_i^t = \nabla \hf_{D_i^t}(x_i^t)$ for all $i \in [l],t \in [T_i]\}$ occurs with probability at least $1 - 2/nm$. We will condition on GOOD for the rest of the proof: note that the Lipschitz assumption implies that the total (unconditional) excess risk will only by larger than the conditional (on GOOD) excess risk by an additive factor of at most $2LR/\sqrt{nm}$. 

By stability of regularized ERM (see \cite{shalev2009stochastic}), we have
\begin{equation}
\label{eq: stability of reg ERM bd}
\expec[F(x_i^*) - F(y)] \lesssim \frac{L^2}{\lambda_i n_i m} + \lambda_i \expec[\|x_{i-1} - y\|^2] 
\end{equation}
for all $i$, where $x_i^* := \text{argmin}_{x} \hf_i(x)$. By strong convexity and~\eqref{eq: accel bd}, we have 
\begin{equation}
    (\lambda_i/2) \expec\|x_i - x_i^*\|^2 \leq \expec \hf_i(x_i) - \hf_i^* \lesssim \frac{d\sigma_i^2}{\lambda_i T_i} + \frac{L^2 \log(ndm)}{\lambda_i T_i K_i m}. 
\end{equation}

Thus, \begin{equation}
\label{eq: c}
    \expec\|x_i - x_i^*\|^2 \lesssim \frac{d \sigma_i^2}{\lambda_i T_i} + \frac{L^2 \log(ndm)}{\lambda_i T_i K_i m} \lesssim \frac{d \tau^2 \log(1/\delta)}{\lambda_i^2 \eps^2 n_i^2} + \frac{L^2 \log(ndm)}{\lambda_i^2 T_i K_i m}
\end{equation}

Now, letting $x_0^* := x^* = \text{argmin}_{x \in \XX} F(x)$ and hiding logarithmic factors, we have: 
\begin{align*}
    \expec[F(x_l) - F^*] &= \sum_{i=1}^l \expec[F(x_i^*) - F(x_{i-1}^*)] + \expec[F(x_l) - F(x_l^*)] \\
    &\lesssim \frac{L^2}{\lambda_1 n_1 m} + \lambda_1 R^2 + \sum_{i=2}^l \expec\left[\frac{L^2}{\lambda_i n_i m} + \lambda_i\|x_{i-1} - x^*_{i-1}\|^2\right] + L \expec\|x_l - x_{l}^*\| \\
    &\lesssim \frac{L^2}{\lambda n m} + \lambda R^2 + \sum_{i=2}^l \left[\frac{L^2}{\lambda_i n_i m} + \lambda_i \left(\frac{d \tau^2 \log(1/\delta)}{\lambda_{i-1}^2 \eps^2 n_{i-1}^2} + \frac{L^2}{\lambda_{i-1}^2 T_{i-1} K_{i-1} m}\right)\right] \\
    &\;\;\; + L \frac{\sqrt{d} \tau \sqrt{T_l \log(1/\delta)}}{\lambda_l \eps n_l},
\end{align*}
where the first inequality used \eqref{eq: stability of reg ERM bd} and Lipschitz continuity, the second inequality used \eqref{eq: c}.

Note that $K_i T_i \geq n_i$. Further, our choice of sufficiently large $p$ makes $\lambda_l$ large enough that $ L \frac{\sqrt{d} \tau \sqrt{T_l \log(1/\delta)}}{\lambda_l \eps n_l} \leq \frac{LR \sqrt{d}}{\eps n \sqrt{m}}$. Therefore, upper bounding the sum by it's corresponding geometric series gives us 
\begin{align}
\expec[F(x_l) - F^*] &\lesssim \frac{LR \sqrt{d}}{\eps n \sqrt{m}} + \frac{L^2}{\lambda}\left(\frac{1}{n m} + \frac{d \tau^2 \log(1/\delta)}{\eps^2 n^2}\right) + \lambda R^2. 
\end{align}
Plugging in $\lambda$ completes the excess risk proof. 

\textbf{Gradient Complexity:} The gradient complexity is $\sum_{i=1}^l T_i K_i m$. Plugging in the prescribed choices of $T_i$ and $K_i$ completes the proof. 
\end{proof}

\section{Details on the non-smooth algorithm and the proof of Theorem~\ref{thm: nonsmooth}}
\label{app: nonsmooth}

For any loss function $f(\cdot,z)$, we define the convolution function $f_r(\cdot,z):=f(\cdot,z)*n_r$ where $n_r$ is the uniform density in the $\ell_2$ ball of radius $r$ centered at the origin in $\R^d$.
Specifically, $n_r(y)= \frac{\Gamma(\frac{d}{2}+1)}{\pi^{\frac{d}{2}}r^d}$ for $\|y\|\le r$, and $n_r(y)=0$ otherwise.
For simplicity, we omit the dependence on $z$ in the following Lemma:

\begin{lemma}[Randomized Smoothing, \cite{yns11,dbw12}]
\label{lm:random_smooth}
For any $r>0$,
let $\XX_r:=\XX+\{x\in\R^d:\|x\|\le r\}$.
If $f$ is convex and $L$-Lipschitz over $\XX_r$,
then the convolution function $f_r$ has the following properties:
\begin{itemize}
    \item $f_r(x)\le f(x)\le f_r(x)+Lr$, for all $x\in \XX$.
    \item $f_r$ is $L$-Lipschitz and convex.
    \item $f_r$ is $\frac{L\sqrt{d}}{r}$-smooth.
    \item For random variables $y\sim n_r$, we have 
     $   \E_{y}[\nabla f(x+y)]=\nabla f_r(x).$
\end{itemize}
\end{lemma}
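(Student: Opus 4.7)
The unifying idea is the symmetry of the uniform ball density $n_r(y)=n_r(-y)$, which lets me rewrite the convolution as $f_r(x) = \E_{y \sim n_r}[f(x+y)]$. Because $f$ is $L$-Lipschitz and convex on the enlarged set $\XX_r$, the integrand $f(x+y)$ is well-defined for every $x \in \XX$ and every $\|y\| \le r$, and all four properties will follow from this single representation together with standard tools.

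For the approximation bound, I would apply Jensen's inequality to the convex map $y \mapsto f(x+y)$ together with $\E_y[y]=0$ to obtain $f_r(x) \ge f(x)$, and pair it with the one-sided Lipschitz estimate $f_r(x)-f(x) \le \E|f(x+y)-f(x)| \le L\,\E\|y\| \le Lr$. Convexity of $f_r$ is immediate, since it is an average of the convex functions $x \mapsto f(x+y)$, and the same averaging gives $|f_r(x)-f_r(x')| \le \E|f(x+y)-f(x'+y)| \le L\|x-x'\|$. For the gradient identity, I would justify interchanging differentiation and expectation by dominated convergence, using $\|\nabla f\| \le L$ as the dominating bound, with $\nabla f$ interpreted as a measurable selection from the subdifferential on the full-measure set where $f$ is differentiable; this is enough because $n_r$ is absolutely continuous with respect to Lebesgue measure.

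The main obstacle is the $L\sqrt{d}/r$ smoothness bound: a naive attempt based only on $\|\nabla f\| \le L$ yields no smoothness at all, since the derivative of $f$ itself need not even exist. The standard remedy is to apply the divergence theorem to the volume-integral representation of $\nabla f_r$, rewriting it as a surface integral on the sphere of radius $r$:
\begin{equation*}
\nabla f_r(x) \;=\; \frac{d}{r}\,\E_{u \sim \mathrm{Unif}(\bbS^{d-1})}\bigl[f(x+ru)\,u\bigr].
\end{equation*}
For any unit $v \in \R^d$, Lipschitzness of $f$ then gives
\begin{equation*}
\langle v,\nabla f_r(x)-\nabla f_r(x')\rangle \;=\; \frac{d}{r}\,\E_u\bigl[(f(x+ru)-f(x'+ru))\langle v,u\rangle\bigr] \;\le\; \frac{dL}{r}\,\|x-x'\|\,\E_u|\langle v,u\rangle|,
\end{equation*}
and the moment bound $\E_u|\langle v,u\rangle| \le \sqrt{\E_u\langle v,u\rangle^2} = 1/\sqrt{d}$, valid since each coordinate of a uniform point on $\bbS^{d-1}$ has second moment $1/d$ by symmetry, completes the proof after taking the supremum over $v$. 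The delicate point to watch is the order of operations: bounding $\|\nabla f_r(x)-\nabla f_r(x')\|$ directly by Cauchy--Schwarz inside the expectation only gives a $d/r$ smoothness, and one must first scalarize against an arbitrary unit $v$ so that the random variable $\langle v,u\rangle$ supplies the $1/\sqrt{d}$ concentration responsible for the improvement from $d/r$ to $\sqrt{d}/r$.
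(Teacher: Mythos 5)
The paper does not prove this lemma at all---it is quoted from \cite{yns11,dbw12}---and your argument is precisely the standard proof from those references: the symmetrized representation $f_r(x)=\E_{y\sim n_r}[f(x+y)]$ for the easy properties, and the divergence-theorem identity $\nabla f_r(x)=\tfrac{d}{r}\E_{u\sim\mathrm{Unif}(\mathbb{S}^{d-1})}[f(x+ru)\,u]$ combined with the scalarization $\E_u|\langle v,u\rangle|\le 1/\sqrt{d}$ for the $L\sqrt{d}/r$ smoothness bound; all steps check out, including your (correct) observation that one must scalarize before applying Cauchy--Schwarz to gain the $1/\sqrt{d}$ factor. One small but real point: your Jensen step gives $f(x)\le f_r(x)\le f(x)+Lr$, which is the form stated in \cite{dbw12}, whereas the first bullet of the lemma as printed asserts $f_r(x)\le f(x)$---the opposite direction. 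Your version is the provable one for convex $f$ (the lemma's first inequality is a typo), and since only $|f_r(x)-f(x)|\le Lr$ is used downstream, nothing in the paper is affected.
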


The following lemma can be easily seen from the proofs of~\cref{thm: privacy of accelerated phased ERM,thm: utility of accelerated phased ERM}:
\begin{lemma}[Privacy and utility of~Algorithm~\ref{alg: accelerated phased ERM} for general $K_i, T_i$]
\label{meta lemma}
Let $\eps \leq 10$, $q > 0$ such that $n^{1-q} > \frac{100 \log(20nmde^{\eps}/\delta)}{\eps (1 - (1/2)^q)}$. 
\begin{itemize}
    \item If $K_i \gtrsim \frac{n_i \eps}{\sqrt{T_i}} + \frac{\log(nmde^{\eps}/\delta)}{\eps}$, then~Algorithm~\ref{alg: accelerated phased ERM} is $(\eps, \delta)$-user-level DP. 
    \item If $T_i K_i \geq n_i$ and $T_i \gtrsim (1 + \sqrt{\beta/\lambda_i}) \log(ndm)$ for all $i$, then~Algorithm~\ref{alg: accelerated phased ERM} achieves optimal excess risk. 
\end{itemize}
\end{lemma}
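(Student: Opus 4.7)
The plan is to show that both bullets fall out of careful inspection of the proofs of Theorem~\ref{thm: privacy of acc phased appendix} and Theorem~\ref{thm: utility of accelerated phased ERM - appendix verseion}, isolating exactly which inequalities on $K_i$ and $T_i$ were actually used. Neither proof relies on the specific algebraic form of $K_i$ or $T_i$ beyond a small number of ``load-bearing'' inequalities, so replacing the hard-wired choices by the hypothesized lower bounds goes through verbatim.

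\textbf{Privacy bullet.} Revisiting the proof of Theorem~\ref{thm: privacy of acc phased appendix}, the value of $K_i$ enters in exactly two places. First, the good event $E_i \cap E_i'$ on the Laplace noises $\xi_i, v_i^t$ requires $|v_i^t|, |\xi_i| \le K_i/20$ simultaneously for all $t \in [T_i]$; a union bound plus Laplace concentration shows this holds with probability $\ge 1-\delta/(10 e^\eps)$ as soon as $K_i \gtrsim \log(T_i e^\eps/\delta)/\eps \lesssim \log(nm d e^\eps/\delta)/\eps$, which is precisely the second summand in the hypothesis. Second, the Gaussian noise scale $\sigma_i = \Theta(\tau \sqrt{T_i}\log(\cdot)/(\eps n_i))$ is calibrated so that, after amplification by subsampling with rate $K_i/n_i$ (Lemma~\ref{lem: amp by subsampling}) and advanced composition over $T_i$ steps (Lemma~\ref{thm: advanced composition}), the aggregate privacy cost of all $\{\widehat g_i^t\}_{t=1}^{T_i}$ is at most $(\eps/2,\delta/(2e^\eps))$. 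Tracing through these two reductions, the requirement reduces to $K_i/n_i \lesssim \eps/\sqrt{T_i \log(1/\delta)}$, i.e. $K_i \gtrsim n_i \eps/\sqrt{T_i}$, matching the first summand. The remaining steps of the original proof (AboveThreshold for the concentration scores, the Lemma~\ref{lem: 3.5} coupling, the halting case, and the assembly via inequalities (\ref{eq: lap2})--(\ref{eq: halt2})) never refer to the explicit form of $K_i$, so they transfer without change.

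\textbf{Utility bullet.} Inspecting the proof of Theorem~\ref{thm: utility of accelerated phased ERM - appendix verseion}, the condition $T_i \gtrsim (1+\sqrt{\beta/\lambda_i})\log(ndm)$ is exactly what is needed so that the exponential bias term $[\hat F_i(x_{i-1}) - \hat F_i^*]\exp(-T_i\sqrt{\lambda_i/\beta})$ in Lemma~\ref{lem: ghadimi} is $\widetilde O(1/(nm))$, leaving only the variance term $V_i^2/(\lambda_i T_i)$. The variance $V_i^2 \lesssim d\sigma_i^2 + L^2 \log(ndm)/(K_i m)$ comes from Lemma~\ref{lem: variance bound}, which holds for any $K_i$. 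Substituting into the telescoping chain that produced the bound
\[
\expec[F(x_l)-F^*] \lesssim \lambda R^2 + \sum_{i=2}^l\Bigl[\frac{L^2}{\lambda_i n_i m} + \lambda_i\Bigl(\frac{d\tau^2\log(1/\delta)}{\lambda_{i-1}^2\eps^2 n_{i-1}^2} + \frac{L^2}{\lambda_{i-1}^2 T_{i-1} K_{i-1} m}\Bigr)\Bigr] + L\expec\|x_l - x_l^\star\|,
\]
the only term that actually requires a lower bound on $T_i K_i$ is $L^2/(\lambda_{i-1}^2 T_{i-1}K_{i-1} m)$. The hypothesis $T_{i-1}K_{i-1}\ge n_{i-1}$ collapses this to $L^2/(\lambda_{i-1}^2 n_{i-1} m)$, which, after multiplying by $\lambda_i$ and using the geometric schedule on $\lambda_i, n_i$, telescopes into the desired $L^2/(\lambda nm)$ contribution. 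Choosing $\lambda = (L/R)(1/\sqrt{nm} + \sqrt{d\log(1/\delta)}/(\eps n\sqrt{m}))$ as before gives the optimal rate.

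\textbf{Main obstacle and care.} The subtlety is the GOOD event used at the start of the utility proof, which required $|\mathcal S_i^t| = K_i$ uniformly in $(i,t)$ so that $g_i^t$ is truly an unbiased stochastic gradient of $\hat F_i$. Tracing through the argument shows this needs (i) Laplace concentration on all $v_i^t, \xi_i$, which demands $K_i \gtrsim \log(nm e^\eps/\delta)/\eps$, and (ii) $\tau$-concentration of minibatch user gradients, which comes from the stability of user-level DP applied to $x_{i-1}^t$ and therefore presupposes the privacy guarantee of the first bullet. Thus in practice the utility bullet is invoked together with the privacy bullet, and one should verify that (i) is subsumed by the privacy condition on $K_i$ (it is, since $\eps \le 10$) and that the final-iteration term $L\sqrt{d}\tau\sqrt{T_l\log(1/\delta)}/(\lambda_l\eps n_l)$ is $\lesssim LR\sqrt{d\log(1/\delta)}/(\eps n\sqrt{m})$ under the geometric schedules on $\lambda_i, n_i, T_i$; this is a short calculation identical to the original proof once $T_l$ satisfies the stated $\widetilde O(1+\sqrt{\beta/\lambda_l})$ bound. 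With these two checks, the statement follows directly from the existing proofs.
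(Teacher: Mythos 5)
Your proposal is correct and follows essentially the same route as the paper, which offers no separate proof of this lemma but asserts it "can be easily seen" from the proofs of \cref{thm: privacy of accelerated phased ERM,thm: utility of accelerated phased ERM}; your contribution is to make explicit the exact places where $K_i$ and $T_i$ enter (Laplace concentration for $E_i \cap E_i'$, subsampling/composition calibration of $\sigma_i$, the exponential bias term in Lemma~\ref{lem: ghadimi}, and the $L^2/(\lambda_{i-1}^2 T_{i-1}K_{i-1}m)$ term), all of which match the paper's own usage. Your observation that the GOOD event in the utility argument presupposes the privacy guarantee (via stability of user-level DP applied to $x_{i-1}^t$) is a correct and worthwhile clarification of a dependency the paper leaves implicit.
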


\begin{theorem}[Formal statement of~\cref{thm: nonsmooth}]
Let $\eps \le 10$, $\delta < 1/(mn)$, and $q > 0$ such that $n^{1-q} > \frac{100 \log(20nmde^{\eps}/\delta)}{\eps (1 - (1/2)^q)}$. 
Suppose that for any $z$, $f(,z)$ is convex and $L$-Lipschitz over $\XX_r$ for $\XX_r:=\XX+\{x\in\R^d:\|x\|\le r\}$ where $r=\frac{\sqrt{d}}{\eps n\sqrt{m}}R$.
Then, 
    running Algorithm~\ref{alg: accelerated phased ERM} with functions $\{f_r(x;z)\}_{z \in \DD}$ yields optimal excess risk:
    \[
    \expec F(x_l) - F^* \leq LR \cdot \widetilde{O}\left(\frac{1}{\sqrt{mn}} + \frac{\sqrt{d \log(1/\delta)}}{\eps n \sqrt{m}} \right).
    \]
    The gradient complexity of this algorithm is upper bounded by \[
   mn\left(1 +n^{3/8}m^{1/4}\eps^{1/4}\right).
    \]
\end{theorem}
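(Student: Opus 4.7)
The plan is to reduce the non-smooth case to the smooth case via randomized convolution smoothing and then invoke the results we already have for smooth losses. Let $f_r(\cdot, z) := f(\cdot, z) * n_r$ with the radius $r = \sqrt{d} R/(\eps n \sqrt{m})$ chosen in the theorem. By Lemma~\ref{lm:random_smooth}, each $f_r(\cdot, z)$ is convex, $L$-Lipschitz on $\XX$, and $\beta$-smooth with
\[
\beta \;=\; \frac{L\sqrt{d}}{r} \;=\; \frac{L \eps n \sqrt{m}}{R},
\]
and satisfies $f_r(x,z) \leq f(x,z) \leq f_r(x,z) + Lr$ for all $x \in \XX$. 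Writing $F_r(x) := \expec_{z \sim P}[f_r(x,z)]$ and $F_r^* := \min_{x \in \XX} F_r(x)$, this gives $F(x_l) - F^* \leq \bigl[F_r(x_l) - F_r^*\bigr] + Lr$, and the additive bias $Lr = LR\sqrt{d}/(\eps n \sqrt{m})$ is already absorbed into the target excess risk. Hence it suffices to bound $\expec F_r(x_l) - F_r^*$.

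\textbf{Applying the smooth guarantees.} Since $f_r$ is smooth and the hypotheses of \cref{thm: privacy of accelerated phased ERM} and \cref{thm: utility of accelerated phased ERM} place no restriction on $\beta$, I would apply Algorithm~\ref{alg: accelerated phased ERM} to the surrogates $\{f_r(\cdot, z)\}_z$. The privacy theorem yields $(\eps, \delta)$-user-level DP (note the $y$-randomization in the gradient oracle is data-independent, so it is just post-processing), and the utility theorem yields
\[
\expec F_r(x_l) - F_r^* \;\leq\; LR \cdot \widetilde{O}\!\left(\frac{1}{\sqrt{mn}} + \frac{\sqrt{d \log(1/\delta)}}{\eps n \sqrt{m}}\right).
\]
Combining with the smoothing bias above yields the claimed excess risk bound on $F$.

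\textbf{Gradient complexity.} To derive the stated runtime I would substitute $\beta R/L = \eps n \sqrt{m}$, so that $(\beta R/L)^{1/4} = \eps^{1/4} n^{1/4} m^{1/8}$ and $\sqrt{\beta R/L} = \eps^{1/2} n^{1/2} m^{1/4}$, into the bound of \cref{thm: utility of accelerated phased ERM} and then simplify. The dominant contribution comes from $mn \cdot \eps (\beta R/L)^{1/4} \cdot \bigl((mn)^{1/8} \wedge (\eps^2 n^2 m / d)^{1/8}\bigr)$, which collapses to a $mn \cdot n^{3/8} m^{1/4} \eps^{1/4}$-order term after the substitution; it is straightforward to check that the additive $\sqrt{\beta R/L}(\cdot)$ term is dominated by the first part in the relevant parameter regime. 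Collecting yields the stated $mn(1 + n^{3/8} m^{1/4} \eps^{1/4})$.

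\textbf{Main obstacle.} The principal subtle point is that $\nabla f_r(x,z)$ is an integral we cannot evaluate in closed form; instead we would use the unbiased sample $\nabla f(x + y, z)$ with $y \sim n_r$ drawn freshly each time (cf.\ the last bullet of Lemma~\ref{lm:random_smooth}). This fresh randomization must be shown not to disturb the three ingredients of the smooth analysis: (i) the $\tau$-concentration of the per-user gradients $q_t(Z)$ used in the outlier-removal step, which follows because $\|\nabla f(x+y, z) - \nabla f_r(x, z)\|$ is bounded by $L$ uniformly and because $\nabla f_r$ itself inherits user-level concentration via the stability of DP (as in the proof of Lemma~\ref{lem: variance bound}); (ii) the variance bound of Lemma~\ref{lem: variance bound}, which only incurs an extra $O(L^2)$ additive term; and (iii) the accelerated convergence of Lemma~\ref{lem: ghadimi}, which is driven by the same variance parameter. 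Privacy carries over verbatim since the $y$'s are data-independent.
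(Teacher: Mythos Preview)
Your proposal is correct and follows essentially the same route as the paper: apply convolution smoothing with the stated radius $r$, bound the bias by $Lr$, invoke the smooth-case privacy/utility guarantees on $f_r$, and then substitute $\beta R/L = \eps n\sqrt{m}$ to read off the gradient complexity. The only cosmetic difference is that the paper appeals to the slightly more general ``meta lemma'' (Lemma~\ref{meta lemma}) with hand-picked $T_i,K_i$ rather than the packaged \cref{thm: utility of accelerated phased ERM}, and then computes $\sum_i T_i K_i m$ directly; the content is identical. Your ``main obstacle'' discussion is in fact more careful than the paper, which simply runs Algorithm~\ref{alg: accelerated phased ERM} on $\{f_r(\cdot;z)\}$ without commenting on the stochastic-gradient oracle.
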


\begin{proof}
    By Lemma~\ref{lm:random_smooth} and our choice of $r$, we have $|f_r(x,z)- f(x,z)|\le Lr=O(LR\frac{\sqrt{d}}{\eps n\sqrt{m}})$.
    Set $\lambda=\frac{1}{\sqrt{mn}}$.
    Then we know that 
    \begin{align*}
        \E F(x_l)-F^* \le & \E \left[F_r(x_l)-F_r^*\right] +O(LR\frac{\sqrt{d}}{\eps n\sqrt{m}}).    \end{align*}

    Further, $F_r$ is $\beta$-smooth for $\beta\le\frac{L}{R}\eps n\sqrt{m}$.
   Set $T_i = (1 + \sqrt{\beta/\lambda_i}) \log(ndm) = 1 + n_i^{3/4}m^{1/2}\eps^{1/2}\log(ndm)$ and $K_i = \frac{n_i\eps}{\sqrt{T_i}} + \frac{\log(nmde^\eps/\delta)}{\eps}$. Then
    Lemma~\ref{meta lemma} implies that~Algorithm~\ref{alg: accelerated phased ERM} is $(\eps,\delta)$ user-level DP, and yields the excess risk bound
    \begin{align*}
       \E F_r(x_l)-F_r^*\le LR \cdot \Tilde{O}\left( \frac{1}{\sqrt{mn}}+\frac{\sqrt{d\log(1/\delta)}}{\eps n\sqrt{m}}\right),
    \end{align*}
    as desired.
    The number of gradient evaluations is
    \begin{align*}
    \sum_{i=1}^l T_i K_i m \lesssim mn\left(1 +n^{3/8}m^{1/4}\eps^{1/4}\right).
    \end{align*}
    This completes the proof.
    \end{proof}

\section{Limitations}
\label{app: limitations}
Our work weakens the assumptions on the smoothness parameter and the number of users that are needed for user-level DP SCO. Nevertheless, our results still require certain assumptions that may not always hold in practice. For example, we assume convexity of the loss function. In deep learning scenarios, this assumption does not hold and our algorithms should not be used. Thus, user-level DP \textit{non-convex} optimization is an important direction for future research~\cite{lowymake}. Furthermore, the assumption that the loss function is convex and uniformly Lipschitz continuous may not hold in certain applications, motivating the future study of user-level DP stochastic optimization with heavy tails~\cite{lowy2023largelip,asi2024private}.

Our algorithms are also faster than the previous state-of-the-art, including a linear-time Algorithm~\ref{alg: phased ERM without regularization} with state-of-the-art excess risk. However, our error-optimal accelerated Algorithm~\ref{alg: accelerated phased ERM} runs in super-linear time. Thus, in certain applications where a linear-time algorithm is needed due to strict computational constraints, Algorithm~\ref{alg: phased ERM without regularization} should be used instead. 

\section{Broader Impacts}
\label{app: broader impacts}

Our work on differentially private optimization for machine learning advances the field of privacy-preserving ML by developing techniques that protect the privacy of individuals (users) who contribute data. The significance of privacy cannot be overstated, as it is a fundamental right enshrined in various legal systems worldwide. However, the implications of our work extend beyond its intended benefits, and it is essential to consider both potential positive and negative impacts.

\paragraph{Positive Impacts:}
\begin{enumerate}
    \item Enhanced Privacy Protections: By incorporating differential privacy into machine learning models, we can provide strong privacy guarantees for individuals, mitigating the risk of personal data being exposed or misused.
    \item Ethical Data Utilization: DP ML enables organizations to leverage data while adhering to ethical standards and privacy regulations, fostering trust among users and stakeholders.
    \item Broad Applications: The techniques we develop can be applied across diverse domains, including healthcare, finance, and social sciences, where sensitive data is prevalent. This broad applicability can drive innovations while maintaining privacy.
    \item Educational Advancement: Our research contributes to the growing body of knowledge in privacy-preserving technologies, serving as a valuable resource for future studies and fostering an environment of continuous improvement in privacy practices.
\end{enumerate}

\paragraph{Potential Negative Impacts:}
\begin{enumerate}
    \item Misuse by Corporations and Governments: There is a risk that our algorithms could be exploited by entities to justify the unauthorized collection of personal data under the guise of privacy compliance. Vigilant oversight and clear regulatory frameworks are necessary to prevent such abuses.
    \item Decreased Model Accuracy: While DP ML provides privacy benefits, it can also lead to reduced model accuracy compared to non-private models. This trade-off might have adverse consequences, such as less accurate medical diagnoses or flawed economic forecasts. For example, an overly optimistic prediction of environmental impacts due to lower accuracy could be misused to weaken environmental protections.
\end{enumerate}

While recognizing the potential for misuse and the trade-offs involved, we firmly believe that the advancement and dissemination of differentially private machine learning algorithms offer a net benefit to society. By addressing privacy concerns head-on and advocating for responsible use, we aim to contribute positively to the field of machine learning and uphold the fundamental right to privacy. Through ongoing research, collaboration, and education, we strive to enhance both the capabilities and ethical foundations of machine learning technologies.

\end{document}